\title{First-order ANIL provably learns \\ representations despite overparametrization}
\author{Oğuz Kaan Yüksel \\
  TML Lab, EPFL \\
  \texttt{oguz.yuksel@epfl.ch} \\
  \And
  Etienne Boursier \thanks{\footnotesize{This work was completed while E. Boursier was a member of TML Lab, EPFL.}} \\
  INRIA, Université Paris Saclay, LMO \\
  \texttt{etienne.boursier@inria.fr} \\
  \And
  Nicolas Flammarion \\
  TML Lab, EPFL \\
  \texttt{nicolas.flammarion@epfl.ch} \\
}
\begin{document}

\maketitle
\begin{abstract}
Due to its empirical success in few-shot classification and reinforcement learning, meta-learning has recently received significant interest. Meta-learning methods leverage data from previous tasks to learn a new task in a sample-efficient manner. In particular, model-agnostic methods look for initialization points from which gradient descent quickly adapts to any new task. Although it has been empirically suggested that such methods perform well by learning shared representations during pretraining, there is limited theoretical evidence of such behavior. More importantly, it has not been shown that these methods still learn a shared structure, despite architectural misspecifications. In this direction, this work shows, in the limit of an infinite number of tasks, that first-order ANIL with a linear two-layer network architecture successfully learns linear shared representations. This result even holds with \emph{overparametrization}; having a width larger than the dimension of the shared representations results in an asymptotically low-rank solution. The learned solution then yields a good adaptation performance on any new task after a single gradient step. Overall, this illustrates how well model-agnostic methods such as first-order ANIL can learn shared representations.
\end{abstract}

\section{Introduction}\label{sec:intro}
    
Supervised learning usually requires a large amount of data. To overcome the limited number of available training samples for a single task, multi-task learning estimates a model across multiple tasks \citep{ando2005framework,cheng2011multi}. The global performance can then be improved for individual tasks once structural similarities between these tasks are correctly learned and leveraged. Closely related, \textit{meta-learning} aims to quickly adapt to any new task, by leveraging the knowledge gained from previous tasks, e.g., by learning a shared representation that enables fast adaptation.

Meta-learning has been mostly popularized by the success of the Model-Agnostic Meta-Learning (MAML) algorithm for few-shot image classification and reinforcement learning~\citep{finn2017model}.
MAML searches for an initialization point such that only a few task-specific gradient descent iterations yield good performance on any new task.
It is \emph{model-agnostic} in the sense that the objective is readily applicable to any architecture that is trained with a gradient descent procedure, without any modifications.
Subsequently, many model-agnostic methods have been proposed \citep{nichol2018reptile,antoniou2019train,raghu2019rapid,hospedales2021meta}.
\citet{raghu2019rapid} empirically support that MAML implicitly learns a shared representation across the tasks, since its intermediate layers do not significantly change during task-specific finetuning. Consequently, they propose the Almost-No-Inner-Loop (ANIL) algorithm, which only updates the last layer during task-specific updates and performs similarly to MAML.
However, to avoid heavy computations for second-order derivatives, practitioners generally use first-order approximations such as FO-MAML or FO-ANIL that achieve comparable performances at a cheaper cost \citep{nichol2018reptile}.

\vspace{-0.05em}
Despite the empirical success of model-agnostic methods, little is known about their behaviors in theory. To this end, our work considers the following question on the pretraining of FO-ANIL:
\begin{center}
    \textit{Do model-agnostic methods learn shared representations in few-shot settings?}     
\end{center}

Proving positive optimization results on the pretraining of meta-learning models is out of reach in general, complex settings that may be encountered in practice. Indeed, research beyond linear models has mostly been confined to the finetuning phase \citep{ju2022robust,chua2021fine}. Hence, to allow a tractable analysis, we study FO-ANIL in the canonical multi-task model of a linear shared representation; and consider a linear two-layer network, which is the minimal architecture achieving non-trivial performance.
Traditional multi-task learning methods such as Burer-Monteiro factorization (or matrix factorization) \citep{tripuraneni2021provable,du2021fewshot,thekumparampil2021statistically} and nuclear norm regularization \citep{rohde2011estimation,boursier2022trace} are known for correctly learning the shared representation. Besides being specific to this linear model, they rely on prior knowledge of the hidden dimension of the common structure that is unknown in practice.

For meta-learning in this canonical multi-task model, \citet{saunshi2020sample} has shown the first result under overparametrization by considering a unidimensional shared representation, infinite samples per task, and an idealized algorithm.
More recently, \citep{collins2022maml} has provided a multi-dimensional analysis for MAML and ANIL in which the hidden layer recovers the ground-truth low-dimensional subspace at an exponential rate. Similar to multi-task methods, the latter result relies on \emph{well-specification} of the network width, \ie it has to coincide with the hidden dimension of the shared structure. Moreover, it requires a weak alignment between the hidden layer and the ground truth at initialization, which is not satisfied in high-dimensional settings.

The power of MAML and ANIL, however, comes from their good performance despite mismatches between the architecture and the problem; and in \emph{few-shot} settings, where the number of samples per task is limited but the number of tasks is not. In this direction, we prove a learning result under a framework that reflects the meta-learning regime. Specifically, we show that FO-ANIL successfully learns multidimensional linear shared structures with an overparametrised network width and without initial weak alignment. Our setting of finite samples and infinite tasks is better suited for practical scenarios and admits novel behaviors unobserved in previous works. In particular, FO-ANIL not only \textit{learns} the low-dimensional subspace, but it also \textit{unlearns} its orthogonal complement. This unlearning does not happen with infinite samples and is crucial during task-specific finetuning.
In addition, we reveal a slowdown due to overparametrization, which has been also observed in supervised learning \citep{xu2023over}.
Overall, our result provides the first learning guarantee under misspecifications, and shows the benefits of model-agnostic meta-learning over multi-task learning.

\paragraph{Contributions.}
We study FO-ANIL in a linear shared representation model introduced in \cref{sec:setting}. In order to allow a tractable yet non-trivial analysis, we consider infinite tasks idealisation, which is more representative of meta-learning than the infinite samples idealisation considered in previous works. \cref{sec:main} presents our main result, stating that FO-ANIL asymptotically learns an accurate representation of the hidden problem structure despite a misspecification in the network width. When adapting this representation to a new task, FO-ANIL quickly achieves a test loss comparable to linear regression on the hidden low-dimensional subspace. \cref{sec:discussion} then discusses these results, their limitations, and compares them with the literature. Finally, \cref{sec:experiments} empirically illustrates the success of model-agnostic methods in learned representation and at test time.

\section{Problem setting}\label{sec:setting}

\begin{figure}[htbp!]
\centering
\includegraphics[trim={0 0 4cm 0},clip, width=0.97\textwidth]{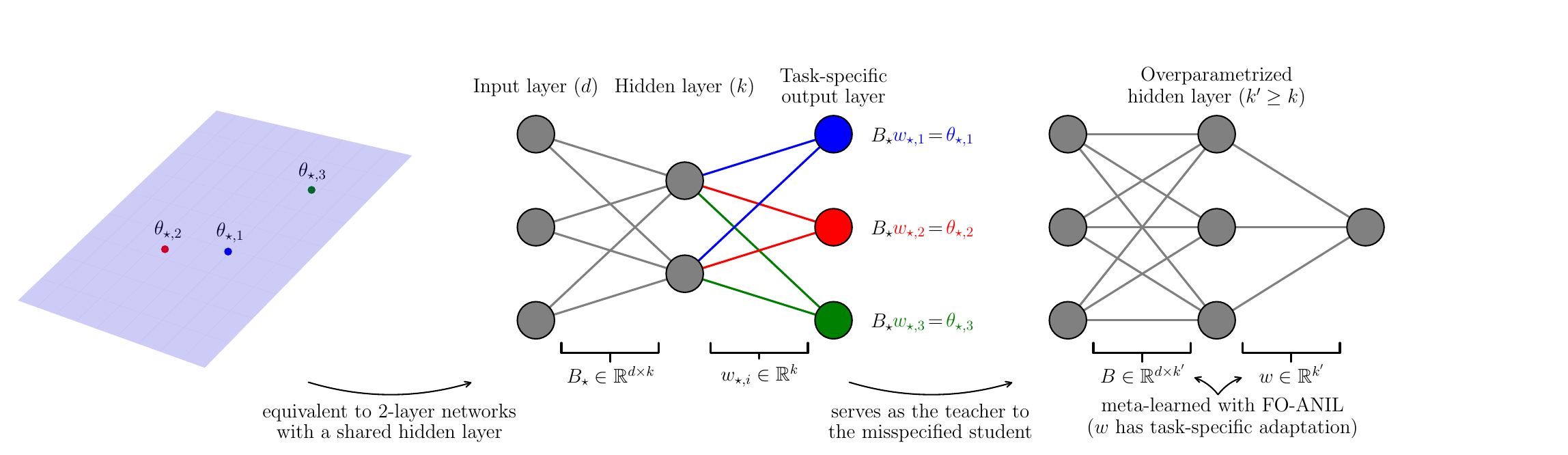}
  \caption{
    \label{fig:setting} \textbf{Left:} Regression tasks with parameters $\theta_{\star, i} \in \R^{d}$ are confined in a lower dimensional subspace, equivalent to the column space of matrix $B_\star \in \R^{d \times k}$. \textbf{Center:} This is equivalent to having two-layer, linear, teacher networks where $B_\star$ is the shared hidden layer and the outputs layers $w_{\star, i} \in \R^k$ are task-specific. The meta-learning task is finding an initialization that allows fast adaptation to any such task with a few samples. 
    \textbf{Right:} The student network has the same architecture but it is agnostic to the problem hidden dimension, \ie $k'\geq k$, which is the main difficulty in our theoretical setting. On the contrary, previous works on model-agnostic and representation learning methods assume $k'=k$, \ie the hidden dimension is \textit{a priori} known to the learner.
 }
 \vspace{-0.75em}
\end{figure}

\subsection{Data distribution}\label{sec:settingdata}

In the following, tasks are indexed by $i\in\N$. Each task corresponds to a $d$-dimensional linear regression task with parameter $\theta_{\star,i} \in \R^d$ and $m$ observation samples. Mathematically, we have for each task $i$ observations $(X_i, y_i) \in \R^{m\times d}\times \R^m$ such that
\begin{equation*}
y_i = X_i \theta_{\star,i} + z_i \quad \text{where } z_i \in \R^m \text{ is some random noise}.
\end{equation*}
Some shared structure is required between the tasks to meta-learn, \ie to be able to speed up the learning of a new task. Similarly to the multi-task linear representation learning setting, we assume that the regression parameters $\theta_{\star,i}$ all lie in the same small $k$-dimensional linear subspace, with $k< d$. Equivalently, there is an orthogonal matrix $B_{\star} \in \R^{d\times k}$ and representation parameters $w_{\star,i}\in\R^k$ such that $\theta_{\star,i} = B_{\star}w_{\star,i}$ for any task $i$. To derive a proper analysis of this setting, we assume a random design of the different quantities of interest, summarized in \cref{ass:random}. This assumption and how it could be relaxed is discussed in \cref{sec:discussion}.
\begin{assumption}[random design]\label{ass:random}
Each row of $X_i$ is drawn i.i.d. according to $\cN(0,\id_d)$ and the coordinates of $z_i$ are i.i.d., centered random variables of variance $\sigma^2$. Moreover, the task parameters $w_{\star,i}$ are drawn i.i.d with $\E[w_{\star,i}]=0$ and covariance matrix $\Sigma_{\star}\coloneqq\E[w_{\star,i}w_{\star,i}^{\top}]=c\,\id_k$ with $c>0$.
\end{assumption}

\subsection{FO-ANIL algorithm}\label{sec:ANIL}

This section introduces FO-ANIL in the setting described above for $N\in\N$ tasks, as well as in the idealized setting of infinite tasks ($N=\infty$). The goal of model-agnostic methods is to learn parameters, for a given neural network architecture, that quickly adapt to a new task. This work focuses on a linear two-layer network architecture, parametrised by $\theta\coloneqq (B,w)\in\R^{d\times k'}\times\R^{k'}$ with $k\leq k'\leq d$. The estimated function is then given by $f_{\theta}: x \mapsto x^\top Bw$.

The ANIL algorithm aims at minimizing the test loss on a new task, after a small number of gradient steps on the last layer of the neural network. For the sake of simplicity, we here consider a single gradient step. ANIL then aims at minimizing over $\theta$ the quantity
\begin{equation}\label{eq:ANIL}
\cL_{\mathrm{ANIL}}(\theta) \coloneqq \E_{w_{\star,i}, X_i, y_i}\left[\cL_{i}\left(\theta-\alpha\nabla_{w}\hat{\cL}_{i}(\theta; X_i,y_i)\right)\right],
\end{equation}
where $\cL_i$ is the (expected) test loss on the task $i$, which depends on $w_{\star,i}$; $\hat{\cL}_{i}(\theta; X_i, y_i)$ is the empirical loss on the observations $(X_i,y_i)$; and $\alpha$ is the gradient step size. When the whole parameter is updated at test time, \ie $\nabla_{w}$ is replaced by $\nabla_{\theta}$, this instead corresponds to the MAML algorithm.

For model-agnostic methods, it is important to split the data in two for inner and outer loops during training. Otherwise, the model would indeed overfit the training set and would learn a poor, full rank representation of the task parameters \citep{saunshi2021representation}.
For $\m+\mout = m$ with $\m<m$, we split the observations of each task as $(X_i^{\inn},y_i^{\inn})\in \R^{\m\times d} \times \R^{\m}$ the $\m$ first rows of $(X_i,y_i)$; and $(X_i^{\out},y_i^{\out})\in \R^{\mout\times d} \times \R^{\mout}$ the $\mout$ last rows of $(X_i,y_i)$.

While training, ANIL alternates at each step $t\in\N$ between an inner and an outer loop to update the parameter $\theta_t$. In the inner loop, the last layer of the network is adapted to each task $i$ following
\begin{equation}\label{eq:inner}
w_{t,i} \gets w_t -\alpha\nabla_{w}\hat{\cL}_{i}(\theta_t; X_i^{\inn}, y_i^{\inn}).
\end{equation}
Again, updating the whole parameter $\theta_{t,i}$ with $\nabla_{\theta}$ would correspond to MAML algorithm.
In the outer loop, ANIL then takes a gradient step (with learning rate $\beta$) on the validation loss obtained for the observations $(X_i^{\out},y_i^{\out})$ after this inner loop. With $\theta_{t,i}\coloneqq (B_t, w_{t,i})$, it updates
\begin{equation}\label{eq:outer}
\textstyle\theta_{t+1} \gets \theta_t - \frac{\beta}{N}\sum_{i=1}^{N} \hat{H}_{t,i}(\theta_t)\nabla_{\theta}\hat{\cL}_i(\theta_{t,i};X_i^{\out},y_i^{\out}),
\end{equation}
where the matrix $\hat{H}_{t,i}$ accounts for the derivative of the function $\theta_t \mapsto \theta_{t,i}$. Computing the second-order derivatives appearing in $\hat{H}_{t,i}$ is often very costly. Practitioners instead prefer to use first-order approximations, since they are cheaper in computation and yield similar performances \citep{nichol2018reptile}. FO-ANIL then replaces $\hat{H}_{t,i}$ by the identity matrix in \cref{eq:outer}.

\subsubsection{Detailed iterations}\label{sec:iterations}

In our regression setting, the empirical squared error is given by $\hat{\cL}_{i}((B,w); X_i,y_i) = \frac{1}{2m}\|y_i - X_iB w \|_2^2$.
In that case, the FO-ANIL inner loop of \cref{eq:inner} gives in the setting of \cref{sec:settingdata}:
\begin{align}
w_{t,i} & = w_t - \frac{\alpha}{\m} B_t^{\top}(X_{i}^{\inn})^{\top}X_{i}^{\inn}(B_t w_t-B_{\star}w_{\star,i})+\frac{\alpha}{\m}B_t^{\top}(X_i^{\inn})^{\top}z_{i}^{\inn}.\label{eq:wti}
\end{align}
The multi-task learning literature often considers a large number of tasks \citep{thekumparampil2021statistically,boursier2022trace} to allow a tractable analysis. Similarly, we study FO-ANIL in the limit of an infinite number of tasks $N=\infty$ to simplify the outer loop updates.
In this limit, iterations are given by the exact gradient of the ANIL loss defined in \cref{eq:ANIL}, when ignoring the second-order derivatives.
The first-order outer loop updates of \cref{eq:outer} then simplify with \cref{ass:random} to
\begin{align}
w_{t+1} &= w_t - \beta (\id_{k'}-\alpha B_t^{\top} B_t)B_t^\top B_t w_t, \label{eq:rec_w}\\
B_{t+1} &= B_t - \beta B_t \E[w_{t,i}w_{t,i}^{\top}]+\alpha\beta B_{\star}\Sigma_{\star}B_{\star}^{\top} B_t, \label{eq:rec_B}
\end{align}
where $w_{t,i}$ is still given by \cref{eq:wti}. Moreover, \cref{lemma:wwtop} in the Appendix allows with \cref{ass:random} to compute an exact expression of $\E[w_{t,i}w_{t,i}^{\top}]$ as
\begin{equation}\label{eq:Eww}
\begin{aligned}
\E[w_{t,i}w_{t,i}^{\top}] & = (\id_{k'}-\alpha B_t^{\top} B_t) w_t w_t^{\top}(\id_{k'}-\alpha B_t^{\top} B_t)+\alpha^2B_t^{\top}B_{\star}\Sigma_{\star}B_{\star}^{\top}B_t \\
& + \frac{\alpha^2}{\m}B_t^{\top}\left(B_t w_t w_t^{\top}B_t^{\top}+B_{\star}\Sigma_{\star}B_{\star}^{\top}+\left(\|B_t w_t\|^2 + \mathrm{Tr}(\Sigma_{\star})+\sigma^2 \right)\id_d\right)B_t.
\end{aligned}
\end{equation}
The first line is the covariance obtained for an infinite number of samples. The second line comes from errors due to the finite number of samples and the label noise.
As a comparison, MAML also updates matrices $B_{t,i}$ in the inner loop, which then intervene in the updates of~$w_{t}$ and~$B_t$. Because of this entanglement, the iterates of first-order MAML (and hence its analysis) are very cumbersome.

\section{Learning a good representation}\label{sec:main}

Given the complexity of its iterates, FO-ANIL is very intricate to analyze even in the simplified setting of infinite tasks.
The objective function is non-convex in its arguments and the iterations involve high-order terms in both $w_t$ and $B_t$, as seen in~\cref{eq:rec_w,eq:rec_B,eq:Eww}.
\cref{thm:main} yet characterizes convergence towards some fixed point (of the iterates) satisfying a number of conditions.
\begin{thm}\label{thm:main}
    Let $B_0$ and $w_0$ be initialized such that $B_{\star}^{\top}B_0$ is full rank,
    \begin{equation*}
       \|B_0\|_2^2 = \mathcal{O}\Big(\alpha^{-1}\min\big(\frac{1}{\m}, \frac{\m}{\bar{\sigma}^2}\big)\Big), \quad \|w_0\|_2^2 = \bigO{\alpha \lambda_{\min}(\Sigma_{\star})},
    \end{equation*}
    where $\lambda_{\min}(\Sigma_{\star})$ is the smallest eigenvalue of $\Sigma_{\star}$,  $\bar{\sigma}^2\coloneqq \mathrm{Tr}(\Sigma_{\star}) + \sigma^2$ and the $\mathcal{O}$ notation hides universal constants.
    Let also the step sizes satisfy $\alpha \geq \beta$ and $\alpha = \bigO{\nicefrac{1}{\bar{\sigma}}}$.
    
    Then under \cref{ass:random}, FO-ANIL (given by \cref{eq:rec_w,eq:rec_B}) with initial parameters $B_0, w_0$, asymptotically satisfies the following
    \begin{equation}\label{eq:theorem}
    \begin{gathered}
        \lim_{t\to\infty}B_{\star, \perp}^\top B_t = 0, \hspace{2cm} \lim_{t\to\infty}B_t w_t = 0, \\ \lim_{t\to\infty}B_\star^\top B_t B_t^\top B_\star = \Lambda_\star \coloneqq \frac{1}{\alpha} \frac{\m}{\m + 1} \bigg(\id_k - \Big(\frac{\m + 1}{\bar{\sigma}^2}\Sigma_\star + \id_k\Big)^{-1}\bigg),
    \end{gathered}
    \end{equation}
    where $B_{\star, \perp} \in \R^{d \times (d-k)}$ is an orthogonal matrix spanning the orthogonal of $\col(B_{\star})$, \ie
    \begin{equation*}
         B_{\star, \perp}^\top B_{\star, \perp} = \id_{d-k}, \quad\text{and} \quad B_{\star}^\top B_{\star, \perp} = 0.
    \end{equation*}
\end{thm}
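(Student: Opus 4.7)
The natural first move is to decompose the hidden-layer matrix along the target subspace and its orthogonal complement: set $U_t \coloneqq B_\star^\top B_t$ and $V_t \coloneqq B_{\star,\perp}^\top B_t$, so that $B_t = B_\star U_t + B_{\star,\perp} V_t$. Since $B_{\star,\perp}^\top B_\star = 0$, the signal term of \cref{eq:rec_B} drops out of the $V_t$-update, yielding the clean identity
\begin{equation*}
    V_{t+1} = V_t\bigl(\id_{k'} - \beta\,\E[w_{t,i}w_{t,i}^\top]\bigr).
\end{equation*}
By inspection of \cref{eq:Eww}, $\E[w_{t,i}w_{t,i}^\top] \succeq \tfrac{\alpha^2\bar{\sigma}^2}{\m}\,B_t^\top B_t$; this positive semidefinite floor, proportional to $\bar{\sigma}^2$, is precisely the \emph{unlearning} mechanism absent in the infinite-sample regime and will ultimately drive $V_t \to 0$.

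Before exploiting this, I would set up boundedness invariants. Under the assumed bounds on $\|B_0\|_2$, $\|w_0\|_2$ and on $\alpha,\beta$, the goal is to prove by induction on $t$ that $\alpha\|B_t^\top B_t\|_2 \leq 1$ (so that the $w$-update is contractive), that $\|w_t\|_2$ and $\|B_t\|_2$ stay comparable to their initial values, and that $B_\star^\top B_t$ retains full row rank. These invariants are what turn the generic semidefinite inequality above into a quantitative uniform contraction, and also justify neglecting the many higher-order cross-terms in \cref{eq:Eww} at the level of leading-order dynamics.

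With the invariants in hand, the three limits of \cref{eq:theorem} can be extracted in order. From \cref{eq:rec_w} and $\alpha B_t^\top B_t \prec \id$, the recursion $w_{t+1} = (\id_{k'} - \beta(\id_{k'} - \alpha B_t^\top B_t) B_t^\top B_t) w_t$ is a geometric contraction of $w_t$ along $\col(B_t^\top B_t)$, so that $\|B_t w_t\| \to 0$. Plugging this decay back into \cref{eq:Eww} yields, for $t$ large enough, a uniform semidefinite lower bound $\E[w_{t,i}w_{t,i}^\top] \succeq \eta\,\id_{k'}$ on the image of $B_t^\top$ and hence $V_t \to 0$ at a geometric rate. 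Substituting $V_\infty = 0$ and $w_\infty = 0$ into the stationary form of \cref{eq:rec_B}, namely $B\,\E[ww^\top] = \alpha B_\star \Sigma_\star B_\star^\top B$, and keeping only the surviving terms of \cref{eq:Eww}, reduces to a commuting equation for $U_\infty U_\infty^\top$ whose unique positive solution is exactly $\Lambda_\star$.

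The main obstacle is the strong three-way coupling between $w_t$, $U_t$ and $V_t$ through $\E[w_{t,i}w_{t,i}^\top]$: none of the above convergences is sequentially independent of the others, so the induction has to be joint and simultaneously track $\|B_t\|_2$, $\|w_t\|_2$, $\|B_t w_t\|$ and $\|V_t\|_2$, carefully exploiting the scale separation imposed by $\alpha \geq \beta$ and $\alpha \bar{\sigma} = \bigO{1}$. The full-rank hypothesis on $B_\star^\top B_0$ becomes essential precisely here, preventing the signal term $\alpha\beta B_\star \Sigma_\star B_\star^\top B_t$ in \cref{eq:rec_B} from degenerating and keeping the pull towards $\col(B_\star)$ non-trivial throughout.
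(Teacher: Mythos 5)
Your skeleton matches the paper's: the same split $C_t=B_\star^\top B_t$, $D_t=B_{\star,\perp}^\top B_t$, the identity $D_{t+1}=D_t(\id_{k'}-\beta\E[w_{t,i}w_{t,i}^\top])$, the PSD floor $\E[w_{t,i}w_{t,i}^\top]\succeq\frac{\alpha^2\bar{\sigma}^2}{\m}B_t^\top B_t$ from \cref{eq:Eww}, and an inductive boundedness argument are exactly the paper's starting points. The gaps are at the two hardest steps. First, your mechanism for $\lim_t B_tw_t=0$ treats $B_t$ as static: the $w$-recursion \cref{eq:rec_w} contracts $w_t$ only along directions where $B_t^\top B_t$ is bounded away from $0$ and $1/\alpha$, while the $B$-update \cref{eq:rec_B} contains the expansive term $\alpha\beta B_\star\Sigma_\star B_\star^\top B_t$, which inflates the $\col(B_\star)$ component of $B_{t+1}w_{t+1}$ precisely where the contraction acts. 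The paper resolves this competition in \cref{lemma:decay_C_w}, obtaining $\|C_{t+1}w_{t+1}\|_2\le\bigl(1-\frac{\beta}{4\alpha}+\alpha\beta\|\Sigma_\star\|_2\bigr)\|C_tw_t\|_2+M\|D_t\|_2^2\|w_t\|_2$, which is a contraction only because $\alpha=\mathcal{O}(1/\bar{\sigma})$, and it concludes $B_tw_t\to0$ only \emph{after} $D_t\to0$ kills the error term (\cref{coro:decay_B_w}) --- the reverse of your ordering; note also that only $B_tw_t\to0$ holds, not $w_\infty=0$. Second, your claim of a uniform floor $\E[w_{t,i}w_{t,i}^\top]\succeq\eta\,\id$ on the image of $B_t^\top$, hence geometric decay of $D_t$, is false: restricted to the row space of $D_t$ the floor has eigenvalues of order $\frac{\alpha^2\bar{\sigma}^2}{\m}\|D_t\|_2^2\to0$, so one only gets $\|D_{t+1}\|_2^2\le\bigl(1-\frac{\alpha^2\beta\bar{\sigma}^2}{\m}\|D_t\|_2^2\bigr)\|D_t\|_2^2$ and the $\mathcal{O}(1/t)$ rate of \cref{app:unlearning} (this slow, non-geometric unlearning is a point the paper emphasises). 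The limit $D_t\to0$ survives, but not by your argument.

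The more serious gap is the third limit. Substituting $D_\infty=0$, $B_\infty w_\infty=0$ into a ``stationary form'' of \cref{eq:rec_B} and solving for $\Lambda_\star$ presupposes that $\Lambda_t=C_tC_t^\top$ converges at all, which is the main thing to prove; moreover the stationarity condition $R(\Lambda,0)\Lambda=0$ does not have a unique PSD solution --- every matrix agreeing with $\Lambda_\star$ on some eigendirections and vanishing on the others (in particular $\Lambda=0$) is a fixed point, so uniqueness-of-the-fixed-point cannot close the argument. The paper's actual route is a monotone sandwich: the update map $f(\Lambda;\tau,\gamma)$ preserves the Loewner order on commuting matrices (\cref{lemma:partial_order}), which allows constructing sequences $\Lambda_t^L\preceq\Lambda_t\preceq\Lambda_t^U$ (\cref{thm:upper_bounds,thm:lower_bounds}), with the noise levels $\tau_t,\gamma_t\to0$ supplied by the first two limits (\cref{coro:noise_decays}), and both bounding sequences shown to increase monotonically to $\Lambda_\star$ (\cref{lemma:convergence_f}). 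This is also where the full-rank assumption on $B_\star^\top B_0$ does quantitative work --- it seeds $\Lambda_0^L=\lambda_{\min}(\Lambda_0)\id_k\succ0$ and thereby excludes the degenerate fixed points --- whereas in your plan it only appears as an informal remark. As written, the proposal therefore skips the paper's central technical content and does not yet constitute a proof of the third limit.
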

An extended version of~\cref{thm:main} and its proof are postponed to~\cref{app:proofmain}.
We conjecture that~\cref{thm:main} holds with arbitrary task covariances $\Sigma_\star$ beyond the identity covariance in~\cref{ass:random}. Before discussing the implications of~\cref{thm:main}, we provide details on the proof strategy.

The proof is based on the monotonic decay of $\|B_{\star, \perp}^\top B_t\|_2$, $\|B_t w_t\|_2$ and the monotonic increase of $B_\star^\top B_t B_t^\top B_\star$ in the Loewner order sense. As these three quantities are interrelated, simultaneously controlling them is challenging. The initialization given in~\cref{thm:main} achieves this by conditioning the dynamics to be bounded and well-behaved. The choice of $\alpha$ is also crucial as it guarantees the decay of $\|B_t w_t\|_2$ after $\|B_{\star, \perp}^\top B_t\|_2$ decays. While these two quantities decay, $\Lambda_t \coloneqq B_{\star}^\top B_t B_t^\top B_\star$ follows a recursion where $B_{\star, \perp}^\top B_t$ and $B_t w_t$ act as noise terms. The proof utilizes two associated recursions to respectively upper and lower bound $\Lambda_t$ (in Loewner order) and then show their monotonic convergence to $\Lambda_\star$. A more detailed sketch of the proof could be found in \cref{app:proofsketch}.

\cref{thm:main} states that under mild assumptions on initialization and step sizes, the parameters learned by FO-ANIL verify three key properties after convergence:
\begin{enumerate}
    \item $B_\infty$ is rank-deficient, \ie FO-ANIL learns to ignore the entire $d-k$ dimensional orthogonal subspace given by $B_{\star,\perp}$, as expressed by the first limit in \cref{eq:theorem}.
    \item The learned initialization yields the zero function, as given by the second limit in \cref{eq:theorem}. Note that $w_t$ does not necessarily converge to $0$; however, it converges to the null space of $B_{\star}$, thanks to the third property. Although intuitive, showing that $B_t w_t$ converges to the mean task parameter (assumed $0$ here) is very challenging when starting away from it, as discussed in \cref{sec:discussion}. This property is crucial for fast adaptation on a new task. 
    \item $B_\star^\top B_\infty B_\infty^\top B_\star$ is proportional to identity. Along with the first property, this fact implies that the learned matrix $B_\infty$ exactly spans~$\col(B_{\star})$.  Moreover, its squared singular values scale as $\alpha^{-1}$, allowing to perform rapid learning with a single gradient step of size $\alpha$.
\end{enumerate}
These three properties allow to obtain a good performance on a new task after a single gradient descent step, as intended by the training objective of ANIL. The generalization error at test time is precisely quantified by \cref{coro:testloss} in \cref{sec:adaptation}. In addition, the limit points characterized by~\cref{thm:main} are shown to be  global minima of the ANIL objective in~\cref{eq:ANIL} in~\cref{app:global}.

Interestingly, \cref{thm:main} holds for quite large step sizes $\alpha,\beta$ and the limit points only depend on these parameters by the $\alpha^{-1}$ scaling of $\Lambda_{\star}$. Also note that $\Lambda_{\star}\to\frac{1}{\alpha}\id_k$ when $\m\to\infty$. Yet, there is some shrinkage of $\Lambda_{\star}$ for finite number of samples, that is significant when $\m$ is of order of the inverse eigenvalues of $\frac{1}{\bar{\sigma}^{2}}\Sigma_\star$. This shrinkage mitigates the variance of the estimator returned after a single gradient step, while this estimator is unbiased with no shrinkage ($\m=\infty$).

Although the limiting behavior of FO-ANIL holds for any finite $\m$, the convergence rate can be arbitrarily slow for large $\m$. In particular, FO-ANIL becomes very slow to unlearn the orthogonal complement of $\col(B_{\star})$ when $\m$ is large, as highlighted by \cref{eq:decay_of_D} in \cref{app:proofsketch}. At the limit of infinite samples $\m=\infty$, FO-ANIL thus does not unlearn the orthogonal complement and the first limit of \cref{eq:theorem} in \cref{thm:main} does not hold anymore. This unlearning is yet crucial at test time, since it reduces the dependency of the excess risk from $k'$ to $k$ (see \cref{coro:testloss}). 

\subsection{Fast adaptation to a new task}
\label{sec:adaptation}

Thanks to~\cref{thm:main}, FO-ANIL learns the shared representation during pretraining. It is yet unclear how this result enhances the learning of new tasks, often referred as \textit{finetuning} in the literature. 
Consider having learned parameters $(\hat{B},\hat{w})\in\R^{d\times k'}\times\R^{k'}$ following \cref{thm:main},
\begin{equation}\label{eq:learnedparam}
    B_{\star, \perp}^{\top} \hat{B} = 0; \quad \hat{B}\hat{w}= 0;\quad B_{\star}^\top \hat{B}\hat{B}^{\top} B_{\star} = \Lambda_{\star}.
\end{equation}We then observe a new regression task with $\mtest$ observations $(X,y)\in \R^{\mtest\times d}\times \R^{\mtest}$ and parameter $w_{\star}\in\R^k$ such that
\begin{equation}\label{eq:newtask}
    y = XB_{\star}w_{\star} + z,
\end{equation}
where the entries of $z$ are i.i.d. centered $\sigma$ sub-Gaussian random variables and the entries of $X$ are i.i.d. standard Gaussian variables following \cref{ass:random}. The learner then estimates the regression parameter of the new task doing one step of gradient descent:
\begin{equation}\label{eq:paramtest}
   \wtest  = \hat{w}-\alpha\nabla_{w}\hat{\cL}((\hat{B},\hat{w}); X, y)= \hat{w}+\alpha \hat{B}^{\top}\Sigmatest B_{\star}w_{\star} + \frac{\alpha}{\mtest}\hat{B}^{\top}X^{\top}z,
\end{equation}
with $\Sigmatest \coloneqq \frac{1}{\mtest}X^{\top}X$. 
As in the inner loop of ANIL, a single gradient step is processed here. Note that it is unclear whether a single or more gradient steps should be run at test time. Notably,~$(\hat{B},\hat{w})$ has not exactly converged in practice, since we consider a finite training time: $\hat{B}$ is thus full rank. The least squares estimator of the linear regression with data $(X\hat{B}, y)$ might then lead to overfitting. Running just a few gradient steps can be helpful by preventing overfitting since it implicitly regularizes the norm of the estimated parameters \citep{yao2007early,neu2018iterate}. The best strategy (e.g. the number of gradient steps) to run while finetuning is an intricate problem, independently studied in the literature \citep[see e.g.][]{chua2021fine,ren2023prepare} and is out of the scope of this work. Additional details are provided in \cref{app:experiments}.

When estimating the regression parameter with $\hat{B}\wtest$, the excess risk on this task is exactly $\|\hat{B} \wtest-B_{\star}w_{\star}\|_2^2$. \cref{coro:testloss} below  allows to bound the risk on any new observed task.
\begin{prop}\label{coro:testloss}
Let $\hat{B}, \wtest$ satisfy \cref{eq:learnedparam,eq:paramtest} for a new task defined by \cref{eq:newtask}. If $\mtest\geq k$, then with probability at least $1-4e^{-\frac{k}{2}}$,
\begin{align*}
    \|\hat{B} \wtest-B_{\star}w_{\star}\|_2 =\mathcal{O}\Big(\frac{1+\nicefrac{\bar{\sigma}^2}{\lambda_{\min}(\Sigma_{\star})}}{\m}\|w_{\star}\|_2 + \|w_{\star}\|_2\sqrt{\frac{k}{\mtest}} + \sigma\sqrt{\frac{k}{\mtest}}\Big).
\end{align*}
\end{prop}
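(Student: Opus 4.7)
The approach is to substitute the structural constraints \eqref{eq:learntparam} into the expression for $\hat{B}\wtest$, reducing the error to an essentially $k$-dimensional linear regression residual on the subspace $\col(B_\star)$, and then controlling three independent contributions via classical Gaussian concentration.

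First, I would observe that the conditions $B_{\star,\perp}^\top \hat{B} = 0$ and $B_\star^\top \hat{B}\hat{B}^\top B_\star = \Lambda_\star$ together force the closed-form identity $\hat{B}\hat{B}^\top = B_\star\Lambda_\star B_\star^\top$: the first condition forces $\hat{B} = B_\star(B_\star^\top \hat{B})$, and the second identifies the resulting Gram matrix. Left-multiplying \eqref{eq:paramtest} by $\hat{B}$, using $\hat{B}\hat{w} = 0$, and writing $\tilde{X} := XB_\star \in \R^{\mtest\times k}$ (which has i.i.d.\ standard Gaussian entries since $B_\star$ is orthogonal), I obtain
\begin{equation*}
\hat{B}\wtest - B_\star w_\star = B_\star\Bigl((\alpha\Lambda_\star \tfrac{1}{\mtest}\tilde{X}^\top\tilde{X} - \id_k)w_\star + \tfrac{\alpha}{\mtest}\Lambda_\star \tilde{X}^\top z\Bigr),
\end{equation*}
and since $B_\star$ is an isometry I may discard it from the norm.

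Next I would write $\alpha\Lambda_\star \tfrac{1}{\mtest}\tilde{X}^\top\tilde{X} - \id_k = (\alpha\Lambda_\star - \id_k) + \alpha\Lambda_\star(\tfrac{1}{\mtest}\tilde{X}^\top\tilde{X} - \id_k)$ and apply the triangle inequality, producing three terms: a bias $\|\alpha\Lambda_\star - \id_k\|_2\,\|w_\star\|$, a design deviation $\|\alpha\Lambda_\star\|_2\,\|\tfrac{1}{\mtest}\tilde{X}^\top\tilde{X} - \id_k\|_2\,\|w_\star\|$, and a noise term $\tfrac{1}{\mtest}\|\alpha\Lambda_\star\|_2\,\|\tilde{X}^\top z\|_2$. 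Under \cref{ass:random} ($\Sigma_\star = c\,\id_k$), the explicit form of $\Lambda_\star$ gives $\alpha\Lambda_\star = \frac{\m c}{(\m+1)c + \bar{\sigma}^2}\id_k$, so $\alpha\Lambda_\star - \id_k = -\frac{c+\bar{\sigma}^2}{(\m+1)c+\bar{\sigma}^2}\id_k$ and in particular $\|\alpha\Lambda_\star\|_2 \le 1$. The bias is then bounded by $\frac{c+\bar{\sigma}^2}{(\m+1)c}\|w_\star\| = O\bigl(\frac{1+\bar{\sigma}^2/\lambda_{\min}(\Sigma_\star)}{\m}\|w_\star\|\bigr)$, which is exactly the first term in the stated bound.

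For the design deviation I would invoke the Davidson--Szarek bound on the singular values of the $\mtest\times k$ Gaussian matrix $\tilde{X}$: with probability $\ge 1-2e^{-k/2}$, $\sqrt{\mtest}-2\sqrt{k} \le s_{\min}(\tilde{X}) \le s_{\max}(\tilde{X}) \le \sqrt{\mtest}+2\sqrt{k}$ (taking deviation $t=\sqrt{k}$), which, using $\mtest\ge k$, yields $\|\tfrac{1}{\mtest}\tilde{X}^\top\tilde{X} - \id_k\|_2 = O(\sqrt{k/\mtest})$; multiplying by $\|w_\star\|$ gives the second term of the bound. For the noise term I would condition on $z$: since $\tilde{X}$ is Gaussian and independent of $z$, $\tilde{X}^\top z \mid z \sim \cN(0,\|z\|_2^2\id_k)$, hence $\|\tilde{X}^\top z\|_2^2 / \|z\|_2^2 \sim \chi_k^2$ and a standard tail bound gives $\|\tilde{X}^\top z\|_2 \le 2\sqrt{k}\,\|z\|_2$ with probability $\ge 1-e^{-k/2}$; a Bernstein-type bound on $\|z\|_2^2$ (a sum of $\mtest$ i.i.d.\ sub-exponential terms with parameter $O(\sigma^2)$) yields $\|z\|_2 \le O(\sigma\sqrt{\mtest})$ with probability $\ge 1-e^{-k/2}$ since $\mtest\ge k$. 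Combining, the noise term is $O(\sigma\sqrt{k/\mtest})$, matching the third term. A union bound over the three good events delivers the stated probability $1-4e^{-k/2}$.

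The plan is essentially bookkeeping once the structural simplification $\hat{B}\hat{B}^\top = B_\star\Lambda_\star B_\star^\top$ is recognized; the only genuinely computational step is the closed-form evaluation of $\alpha\Lambda_\star - \id_k$, which is what produces the $\bar{\sigma}^2/\lambda_{\min}(\Sigma_\star)$ factor in the bias. Everything else reduces to textbook concentration of a $k$-dimensional Gaussian design and a sub-Gaussian noise projection, so I do not anticipate any real obstacle.
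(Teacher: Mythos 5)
Your proposal is correct and follows essentially the same route as the paper: the same reduction via $\hat{B}\hat{B}^\top = B_\star\Lambda_\star B_\star^\top$, the same three-term split into the bias $(\alpha\Lambda_\star-\id_k)w_\star$, the design-deviation term $\alpha\Lambda_\star(B_\star^\top\Sigmatest B_\star-\id_k)w_\star$, and the noise term $\frac{\alpha}{\mtest}\Lambda_\star B_\star^\top X^\top z$, bounded by the same tools (explicit evaluation of $\alpha\Lambda_\star-\id_k$, Gaussian singular-value concentration for $XB_\star$, and conditioning on $z$ plus a sub-Gaussian norm bound, with the same union bound giving $1-4e^{-k/2}$ under $\mtest\ge k$).
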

A more general version of \cref{coro:testloss} and its proof are postponed to \cref{app:proofcoro}. The proof relies on the exact expression of $\wtest$ after a single gradient update. 
The idea is to decompose the difference $\hat{B} \wtest-B_{\star}w_{\star}$ in three terms, which are then bounded using concentration inequalities.

The first two terms come from the error due to proceeding a single gradient step, instead of converging towards the ERM weights: the first one is the bias of this error, while the second one is due to its variance. The last term is the typical error of linear regression on a $k$ dimensional space. Note this bound does not depend on the feature dimension $d$ (nor $k'$), but only on the hidden dimension~$k$.

When learning a new task without prior knowledge, \eg with a simple linear regression on the $d$-dimensional space of the features, the error instead scales as $\sigma\sqrt{\frac{d}{\mtest}}$ \citep{bartlett2020benign}. FO-ANIL thus leads to improved estimations on new tasks, when it beforehand learned the shared representation. Such a learning is guaranteed thanks to \cref{thm:main}. 
Surprisingly, FO-ANIL might only need a single gradient step to outperform linear regression on the $d$-dimensional feature space, as empirically confirmed in \cref{sec:experiments}. As explained, this quick adaptation is made possible by the $\alpha^{-1}$ scaling of $\hat{B}$, which leads to considerable updates after a single gradient step.

\section{Discussion} \label{sec:discussion}

\paragraph{No prior structure knowledge.}
Previous works on model-agnostic methods and matrix factorization consider a well-specified learning architecture, \ie $k'=k$ \citep{tripuraneni2021provable,thekumparampil2021statistically,collins2022maml}.
In practical settings, the true dimension $k$ is hidden, and estimating it is part of learning the representation.
\cref{thm:main} instead states that FO-ANIL recovers this hidden true dimension $k$ asymptotically when misspecified ($k'>k$) and still learns good shared representation despite overparametrization (\eg $k'=d$). \cref{thm:main} thus illustrates the adaptivity of model-agnostic methods, which we believe contributes to their empirical success.

Proving good convergence of FO-ANIL despite misspecification in network width is the main technical challenge of this work. When correctly specified, it is sufficient to prove that FO-ANIL learns the subspace spanned by $B_{\star}$, which is simply measured by the principal angle distance by \citet{collins2022maml}. When largely misspecified ($k'=d$), this measure is always $1$ and poorly reflects how good is the learned representation.
Instead of a single measure, two phenomena are quantified here. FO-ANIL indeed not only learns the low-dimensional subspace, but it also unlearns its orthogonal complement.\footnote{Although \citet{saunshi2020sample} consider a misspecified setting, the orthogonal complement is not unlearned in their case, since they assume an infinite number of samples per task (see \textit{Infinite tasks model} paragraph).} 
More precisely, misspecification sets additional difficulties in controlling simultaneously the variables $w_t$ and $B_t$ through iterations. When $k'=k$, this control is possible by lower bounding the singular values of $B_t$. A similar argument is however not possible when $k'>k$, as the matrix $B_t$ is now rank deficient (at least asymptotically). To overcome this challenge, we use a different initialization regime and analysis techniques with respect to \citet{saunshi2020sample,collins2022maml}. These advanced techniques allow to prove convergence of FO-ANIL with different assumptions on both the model and the initialization regime, as explained below.

\paragraph{Superiority of agnostic methods.}
When correctly specified ($k'=k$), model-agnostic methods do not outperform traditional multi-task learning methods. For example, the Burer-Monteiro factorization minimizes the non-convex problem
\begin{equation}\label{eq:BMfactorization}
 \textstyle \min\limits_{{B\in\R^{d\times k'},\ W\in\R^{k'\times N}}}
    \frac{1}{2N}\sum_{i=1}^N \hat{\cL}_i(BW^{(i)}; X_i,y_i),
\end{equation}
where $W^{(i)}$ stands for the $i$-th column of the matrix $W$.
\citet{tripuraneni2021provable} show that any local minimum of \cref{eq:BMfactorization} correctly learns the shared representation when $k'=k$. However when misspecified (\eg taking $k'=d$), there is no such guarantee. In that case, the optimal $B$ need to be full rank (\eg $B=\id_d$) to perfectly fit the training data of all tasks, when there is label noise. This setting then resembles running independent $d$-dimensional linear regressions for each task and directly leads to a suboptimal performance of Burer-Monteiro factorizations, as illustrated in \cref{sec:experiments}. This is another argument in favor of model-agnostic methods in practice: while they provably work despite overparametrization, traditional multi-task methods \textit{a priori} do not.

Although Burer-Monteiro performs worse than FO-ANIL in the experiments of \cref{sec:experiments}, it still largely outperforms the single-task baseline. We believe this good performance despite overparametrization might be due to the implicit bias of matrix factorization towards low-rank solutions. This phenomenon remains largely misunderstood in theory, even after being extensively studied~\citep{gunasekar2017implicit,arora2019implicit,razin2020implicit,li2020towards}. Explaining the surprisingly good performance of Burer-Monteiro thus remains a major open problem.

\paragraph{Infinite tasks model.}
A main assumption in \cref{thm:main} is the infinite tasks model, where updates are given by the exact (first-order) gradient of the objective function in \cref{eq:ANIL}. Theoretical works often assume a large number of tasks to allow a tractable analysis \citep{thekumparampil2021statistically,boursier2022trace}. The infinite tasks model idealises this type of assumption and leads to simplified parameters' updates. Note these updates, given by \cref{eq:rec_w,eq:rec_B}, remain intricate to analyze. \citet{saunshi2020sample,collins2022maml} instead consider an infinite number of samples per task, \ie $\m=\infty$. This assumption leads to even simpler updates, and their analysis extends to the misspecified setting with some extra work, as explained in \cref{app:collinsextension}. \citet{collins2022maml} also extend their result to a finite number of samples in finite-time horizon, using concentration bounds on the updates to their infinite samples counterparts when sufficiently many samples are available.

More importantly, the infinite samples idealisation is not representative of few-shot settings and some phenomena are not observed in this setting. First, the superiority of model-agnostic methods is not apparent with an infinite number of samples per task. In that case, matrices $B$ only spanning $\col(B_{\star})$ also minimise the problem of \cref{eq:BMfactorization}, potentially making Burer-Monteiro optimal despite misspecification. Second, a finite number of samples is required to unlearn the orthogonal of $\col(B_{\star})$. When ${\m=\infty}$, FO-ANIL does not unlearn this subspace, which hurts the performance at test time for large~$k'$, as observed in \cref{sec:experiments}. Indeed, there is no risk of overfitting (and hence no need to unlearn the orthogonal space) with an infinite number of samples. On the contrary with a finite number of samples, FO-ANIL tends to overfit during its inner loop. This overfitting is yet penalized by the outer loss and leads to unlearning the orthogonal space.

Extending \cref{thm:main} to a finite number of tasks is left open for future work. \cref{sec:experiments} empirically supports that a similar result holds. A finite tasks and sample analysis similar to \citet{collins2022maml} is not desirable, as mimicking the infinite samples case through concentration would omit the unlearning part, as explained above. With misspecification, we believe that extending \cref{thm:main} to a finite number of tasks is directly linked to relaxing \cref{ass:random}. Indeed, the empirical task mean and covariance are not exactly $0$ and the identity matrix in that case. Obtaining a convergence result with general task mean and covariance would then help in understanding the finite tasks case.

\paragraph{Limitations.}
\cref{ass:random} assumes zero mean task parameters, $\mu_\star \coloneqq \E[w_{\star, i}] = 0$. Considering non-zero task mean adds two difficulties to the existing analysis. First, controlling the dynamics of $w_t$ is much harder, as there is an extra term $\mu_\star$ in its update, but also $B_t w_t$ converges to $B_{\star}\mu_{\star} \neq 0$ instead. Moreover, updates of $B_t$ have an extra asymmetric rank $1$ term depending on $\mu_\star$. Experiments in~\cref{app:experiments} yet support that both FO-ANIL and FO-MAML succeed when $\mu_\star \neq 0$.

In addition, we assume that the task covariance $\Sigma_\star$ is identity.
The condition number of $\Sigma_\star$ is related to the \textit{task diversity} and the problem hardness~\citep{tripuraneni2020theory,thekumparampil2021statistically,collins2022maml}.
Under~\cref{ass:random}, the task diversity is perfect (\ie the condition number is~$1$), which simplifies the problem.
The main challenge in dealing with general task covariances is that the updates involve non-commutative terms.
Consequently, the main update rule of $B_{\star}^\top B_t B_t^\top B_\star$ no longer preserves the monotonicity used to derive upper and lower bounds on its iterates.
However, experimental results in~\cref{sec:experiments} suggest that~\cref{thm:main} still holds with any diagonal covariance.
Hence, we believe our analysis can be extended to any diagonal task covariance. The matrix $\Sigma_{\star}$ being diagonal is not restrictive, as it is always the case for a properly chosen $B_{\star}$.

Lastly, the features $X_i$ follow a standard Gaussian distribution here.
It is needed to derive an exact expression of $\E[w_{t,i}w_{t,i}^{\top}]$ with \cref{lemma:wwtop}, which can be easily extended to any spherically symmetric distribution.  Whether \cref{thm:main} holds for general feature distributions yet remains open.

\paragraph{Additional technical discussion.} For space reasons, we leave the technical details on~\cref{thm:main} to~\cref{app:discussion}. In particular, we remark that our initialization only requires full-rank initialization without any initial alignment and describe how to derive a rate for the first limit in~\cref{thm:main} which shows a slowdown due to overparametrization similar to the previous work by \citet{xu2023over}.

\section{Experiments} \label{sec:experiments}

This section empirically studies the behavior of model-agnostic methods on a toy example.
We consider a setup with a large but finite number of tasks $N=5000$, feature dimension $d=50$, a limited number of samples per task $m=30$, small hidden dimension $k=5$ and Gaussian label noise with variance $\sigma^2 = 4$. We study a largely misspecified problem where $k'=d$. To demonstrate that~\cref{thm:main} holds more generally, we consider a non-identity covariance $\Sigma_{\star}$ proportional to $\text{diag}(1,\!\cdots\!,k)$. Further experimental details, along with additional experiments involving two-layer and three-layer ReLU networks, can be found in~\cref{app:experiments}.

\vspace{0.3em}
\begin{figure}[h]
\centering
\includegraphics[trim=6pt 10pt 7pt 7pt, clip, width=0.8\textwidth]{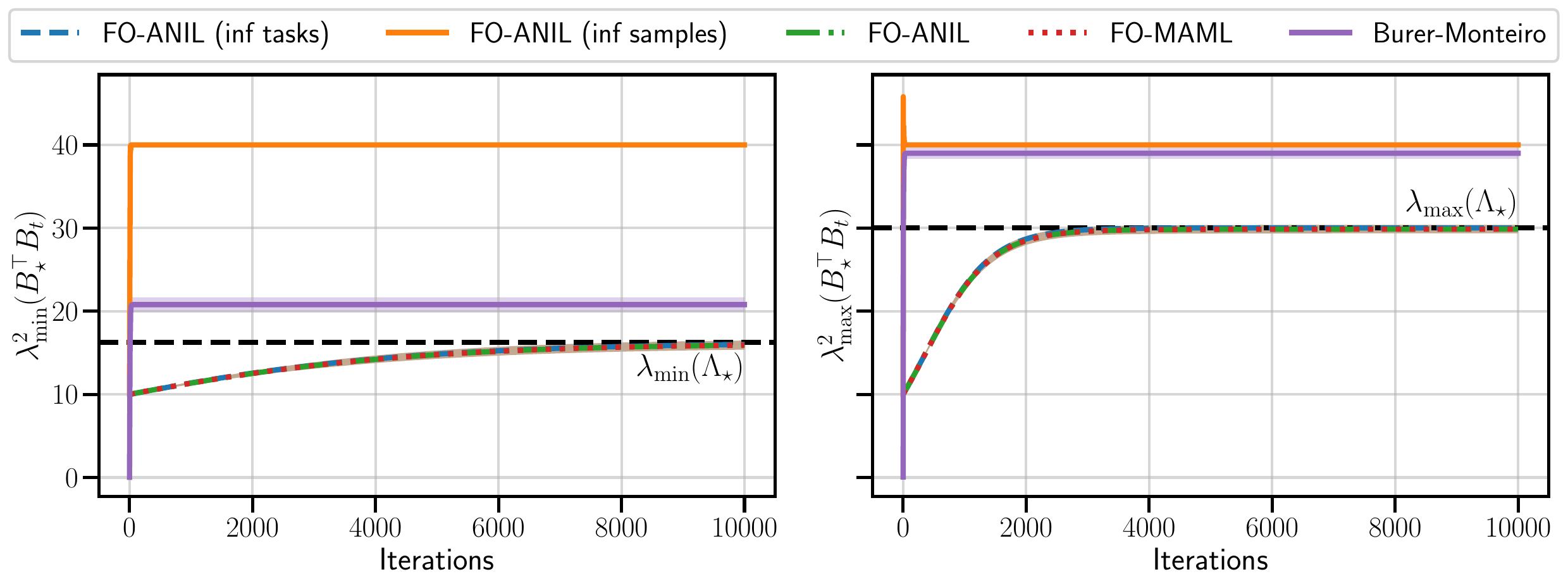}
  \caption{
    \label{fig:feature_learning}
    Evolution of smallest (\textit{left}) and largest (\textit{right}) squared singular value of $B_\star^\top B_t$ during training. The shaded area represents the standard deviation observed over $10$ runs.
 }
\end{figure}
\vspace{0.3em}

To observe the differences between the idealized models and the true algorithm, FO-ANIL with finite samples and tasks is compared with both its infinite tasks and infinite samples versions.
It is also compared with FO-MAML and Burer-Monteiro factorization.

\cref{fig:feature_learning} first illustrates how the different methods learn the ground truth subspace given by $B_{\star}$. More precisely, it shows the evolution of the largest and smallest squared singular value of $B_\star^\top B_t$.
On the other hand, \cref{fig:feature_unlearning} illustrates how different methods unlearn the orthogonal complement of $\col(B_{\star})$, by showing the evolution of the largest and averaged squared singular value of $B_{\star,\perp}^\top B_t$.

\vspace{0.3em}
\begin{figure}[htbp]
\centering
\includegraphics[trim=6pt 10pt 7pt 7pt, clip, width=0.8\textwidth]{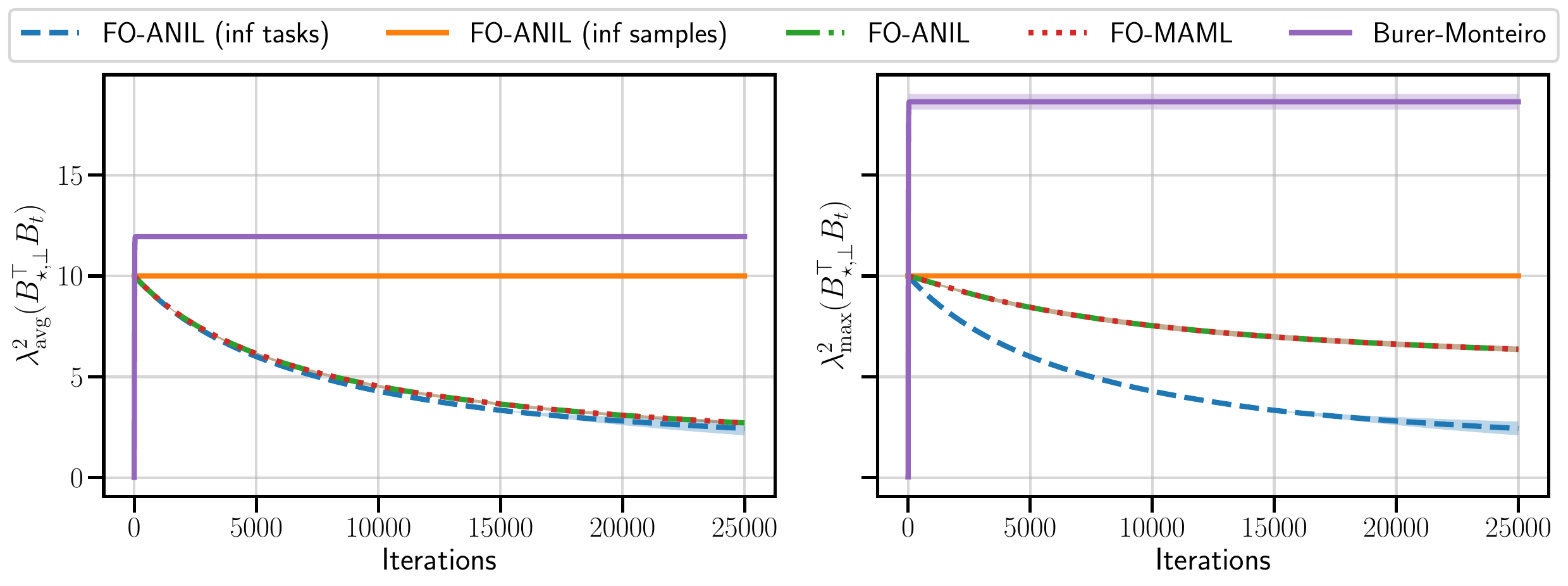}
  \caption{
    \label{fig:feature_unlearning}
    Evolution of average (\textit{left}) and largest (\textit{right}) squared singular value of $B_{\star,\perp}^\top B_t$ during training. The shaded area represents the standard deviation observed over $10$ runs.
 }
\end{figure}
\vspace{0.3em}

Finally, \cref{table:testing} compares the excess risks achieved by these methods on a new task with both $20$ and $30$ samples. The parameter is estimated by a ridge regression on $(X \hat{B},y)$, where $\hat{B}$ is the representation learned while training. Additionally, we report the loss obtained for model-agnostic methods after a single gradient descent update. These methods are also compared with the single-task baseline that performs ridge regression on the $d$-dimensional feature space, and the oracle baseline that directly performs ridge regression on the ground truth $k$-dimensional parameter space. Ridge regression is used for all methods, since regularizing the objective largely improves the test loss here. For each method, the regularization parameter is tuned using a grid-search over multiple values.

\vspace{0.3em}
\begin{table}[htbp]
\centering
\caption{
    \label{table:testing}
    Excess risk evaluated on $1000$ testing tasks. The number after $\pm$ is the standard deviation over $10$ independent training runs.
    For model-agnostic methods, \texttt{1-GD} refers to a single gradient descent step at test time; \texttt{Ridge} refers to ridge estimator with respect to the learned representation.
}
\begin{tabular}{lcccc} \toprule
    {}   & \multicolumn{2}{c}{$\mtest = 20$} & \multicolumn{2}{c}{$\mtest = 30$} \\ \midrule
    {Single-task ridge}   & \multicolumn{2}{c}{$ 1.84 \pm 0.03 $} & \multicolumn{2}{c}{$ 1.63 \pm 0.02 $}  \\
    {Oracle ridge}    & \multicolumn{2}{c}{$ 0.50 \pm 0.01 $} & \multicolumn{2}{c}{$ 0.34 \pm 0.01 $}  \\
    {Burer-Monteiro} & \multicolumn{2}{c}{$1.23\pm 0.03$} & \multicolumn{2}{c}{$1.03\pm 0.02$} \\ \midrule
     & \texttt{1-GD}  & \texttt{Ridge} & \texttt{1-GD}  & \texttt{Ridge} \\[2pt]
    {FO-ANIL}   &  $ 0.81 \pm 0.01 $ & $ 0.73 \pm 0.03 $ & $ 0.64 \pm 0.01$ & $ 0.57 \pm 0.02 $ \\
    {FO-MAML} & $ 0.81 \pm 0.01 $ & $ 0.73 \pm 0.04 $ & $ 0.63 \pm 0.01 $ & $ 0.58 \pm 0.01 $ \\
    {FO-ANIL infinite tasks}  & $ 0.77 \pm 0.01 $ & $ 0.67 \pm 0.03 $ & $ 0.60 \pm 0.01 $ & $0.52 \pm 0.01 $ \\
    {FO-ANIL infinite samples}  & $ 1.78 \pm 0.02 $ & $1.04 \pm 0.02$ & $ 1.19 \pm 0.01$ & $0.84 \pm 0.02$ \\ \bottomrule \\
\end{tabular}
\end{table}

As predicted by~\cref{thm:main}, FO-ANIL with infinite tasks exactly converges to $\Lambda_{\star}$. More precisely, it quickly learns the ground truth subspace and unlearns its orthogonal complement as the singular values of $B_{\star,\perp}^\top B_t$ decrease to $0$, at the slow rate given in \cref{app:unlearning}.
FO-ANIL and FO-MAML with a finite number of tasks, almost coincide. Although very close to infinite tasks FO-ANIL, they seem to unlearn the orthogonal space of $\col(B_{\star})$ even more slowly. In particular, there are a few directions (given by the maximal singular value) that are unlearned either very slowly or up to a small error. However on average, the unlearning happens at a comparable rate, and the effect of the few extreme directions is negligible. These methods thus learn a good representation and reach an excess risk approaching the oracle baseline with either ridge regression or just a single gradient step.

On the other hand, as predicted in \cref{sec:discussion}, FO-ANIL with an infinite number of samples quickly learns $\col(B_{\star})$, but it does not unlearn the orthogonal complement. The singular values along the orthogonal complement stay constant. A similar behavior is observed for Burer-Monteiro factorization: the ground truth subspace is quickly learned, but the orthogonal complement is not unlearned.  Actually, the singular values along the orthogonal complement even increase during the first steps of training. For both methods, the inability of unlearning the orthogonal complement significantly hurts the performance at test time. Note however that they still outperform the single-task baseline. The singular values along $\col(B_{\star})$ are indeed larger than along its orthogonal complement. More weight is then put on the ground truth subspace when estimating a new task.

These experiments confirm the phenomena described in \cref{sec:main,sec:discussion}. Model-agnostic methods not only learn the good subspace, but also unlearn its orthogonal complement. This unlearning yet happens slowly and many iterations are required to completely ignore the orthogonal space.

\section{Conclusion}
This work studies first-order ANIL in the shared linear representation model with a linear two-layer architecture. Under infinite tasks idealisation, FO-ANIL successfully learns the shared, low-dimensional representation despite overparametrization in the hidden layer. More crucially for performance during task-specific finetuning, the iterates of FO-ANIL not only learn the low-dimensional subspace but also forget its orthogonal complement.
Consequently, a single-step gradient descent initialized on the learned parameters achieves a small excess risk on any given new task. Numerical experiments confirm these results and suggest they hold in more general setups, \eg with uncentered, anisotropic task parameters and a finite number of tasks.
As a consequence, our work suggests that model-agnostic methods are also \textit{model-agnostic} in the sense that they successfully learn the shared representation, although their architecture is not adapted to the problem parameters.
Extending our theoretical results to these more general settings or more intricate methods, such as MAML, remains open for future work.
Lastly, our work presents a provable shared representation learning result for the pretraining of meta-learning algorithms.
Thus, it connects to the literature on representation learning with pretraining; in particular, it demonstrates a slowdown due to overparametrization that has been recently demonstrated in supervised learning.

\vfill
\pagebreak
\bibliography{iclr2024_conference}
\bibliographystyle{iclr2024_conference}
\raggedbottom
\pagebreak
\appendix

\section{Additional discussion}\label{app:discussion}

\paragraph{Initialization regime.}

\cref{thm:main} requires a bounded initialization to ensure the dynamics of FO-ANIL stay bounded.
Roughly, we need the squared norm of $B_{\star,\perp}^\top B_0$ to be $\bigO{(\alpha \m)^{-1}}$ to guarantee $\|B_t\|_2\leq \alpha^{-1}$ for any $t$. We believe the $\m$ dependency is an artifact of the analysis and it is empirically not needed. Additionally, we bound $w_0$ to control the scale of $\E[w_{t,i}w_{t,i}^\top]$ that appears in the update of $B_t$. A similar inductive condition is used by~\citet{collins2022maml}.

More importantly, our analysis only needs a full rank $B_{\star}^\top B_0$, which holds almost surely for usual initializations.
\citet{collins2022maml} instead require that the smallest eigenvalue of $B_\star^\top B_0$ is bounded strictly away from $0$, which does not hold when $d\gg k'$. This indicates that their analysis covers only the tail end of training and not the initial alignment phase.

\paragraph{Rate of convergence.}
In contrast with the convergence result of \citet{collins2022maml}, \cref{thm:main} does not provide any convergence rate for FO-ANIL but only states asymptotic results. \cref{app:unlearning} provides an analogous rate for the first limit of~\cref{thm:main}: $\|B_{\star,\perp}^{\top}B_t\|_2^2=\mathcal{O}\big(\frac{\m}{\alpha^2\beta\bar{\sigma}^2t}\big)$. Due to misspecification, this rate is slower than the one by \citet{collins2022maml} (exponential vs. polynomial). A similar slow down due to overparametrization has been recently shown when learning a single ReLU neuron~\citep{xu2023over}.
In our setting, rates are more difficult to obtain for the second and third limits, as the decay of quantities of interest depends on other terms in complex ways. Remark that rates for these two limits are not studied by~\citet{collins2022maml}. In the infinite samples limit, a rate for the third limit can yet be derived when $k = k'$.

\paragraph{Relaxation to a finite number of tasks.} In the limit of infinite tasks, \cref{thm:main} proves that the asymptotic solution is low-rank and the complement $B_{\star, \perp}$ is unlearned. \cref{fig:feature_unlearning} shows this behavior with a relatively big initialization. For pretraining with a finite number of tasks, the decay is not towards $0$ but to a small residual value. The scale of this residual depends on the number of tasks (decreasing to $0$ when the number of tasks goes to infinity) and possibly on other problem parameters such as $k$ and $\alpha$. Most notably, when a very small initialization is chosen, the singular values in the complement space can increase until this small scale.

This indicates that \cref{thm:main} relaxed to a finite number of tasks will include a non-zero but small component in the orthogonal space. In our experiments, these residuals do not hurt the finetuning performance. Note that these residuals are also present for simulations of FO-ANIL with infinite tasks due to the finite time horizon.

\paragraph{Theoretical analyzes of model-agnostic meta-learning.}
Other theoretical works on model-agnostic meta-learning have focused on convergence guarentees~\citep{fallah2020convergence, ji2022theoretical} or generalization~\citep{fallah2021generalization}.
In contrast, this work focuses on pretraining of model-agnostic meta-learning and learning of shared representations under the canonical model of multi-task learning.

\vfill
\pagebreak
\section{Sketch of proof}\label{app:proofsketch}

The challenging part of~\cref{thm:main} is that $B_t \in \R^{d \times k'}$ involves two separate components with different dynamics:
\begin{equation*}
   B_t = B_{\star} B_{\star}^\top B_t + B_{\star, \perp} B_{\star, \perp}^{\top}B_t. 
\end{equation*}
The first term $B_\star^\top B_t$ eventually scales in $\alpha^{-1/2}$ whereas the second term $B_{\star, \perp}^\top B_t$ converges to $0$, resulting in a nearly rank-deficient $B_t$.
The dynamics of these two terms and $w_t$ are interdependent, which makes it challenging to bound any of them.

\paragraph{Regularity conditions.} The first part of the proof consists in bounding all the quantities of interest. Precisely, we show by induction that the three following properties hold for any~$t$,
\begin{equation}\label{eq:regularity}
    \text{1. } \|B_{\star}^\top B_t\|_2^2 \leq \|\Lambda_\star\|_2,\hspace{1.2cm}
    \text{2. } \|w_{t}\|_2 \leq \|w_0\|_2,\hspace{1.2cm}
    \text{3. } \|B_{\star, \perp}^\top B_{t}\|_2 \leq \|B_{\star, \perp}^\top B_0\|_2.
\end{equation}
Importantly, the first and third conditions, along with the initialization conditions, imply $\|B_t\|_2^2\leq \alpha^{-1}$. The monotonicity of the function $f^U$ described below leads to $\|B_{\star}^\top B_{t+1}\|_2^2 \leq \|\Lambda_\star\|_2$. Also, using the inductive assumptions with the update equations for $B_{\star, \perp}^\top B_{t}$ and $w_t$ allows us to show that both the second and third properties hold at time $t+1$.

Now that the three different quantities of interest have been properly bounded, we can show the three limiting results of \cref{thm:main}.

\paragraph{Unlearning the orthogonal complement.}\label{para:unlearning}
We first show that $\lim_{t\to\infty}B_{\star, \perp}^\top B_{t} = 0$.
\cref{eq:rec_B} directly yields
$B_{\star, \perp}^\top B_{t+1} = B_{\star, \perp}^\top B_t \left(\id_{k'} - \beta \E[w_{t, i} w_{t, i}^\top]\right)$.
The previous bounding conditions guarantee for a well chosen $\beta$ that $\|\E[w_{t, i} w_{t, i}^\top]\|_2 \leq \beta^{-1}$. Moreover thanks to \cref{eq:Eww}, $\E[w_{t, i} w_{t, i}^\top ]\succeq \alpha^2 \frac{\bar{\sigma}^2}{\m} B_{\star, \perp}^\top B_t B_t^{\top}B_{\star, \perp}$, which finally yields
\begin{align}\label{eq:decay_of_D}
    \|B_{\star, \perp}^\top B_{t+1}\|_2^2 \leq\Big(1 - \alpha^2 \beta \frac{\bar{\sigma}^2}{\m} \|B_{\star, \perp}^\top B_t\|_2^2\Big) \|B_{\star, \perp}^\top B_t\|_2^2.
\end{align}

\paragraph{Learning the task mean.}
We can now proceed to the second limit in \cref{thm:main}. $B_t w_t$ can be decomposed into two parts, giving
$\|B_t w_t\|_2 \leq \|B_\star^\top B_t w_t\|_2 + \|B_{\star, \perp}^\top B_t w_t\|_2$.
As $\|w_t\|_2$ is bounded and $\|B_{\star, \perp}^\top B_t\|_2$ converges to $0$, the second term vanishes.
A detailed analysis on the updates of $B_{\star}^\top B_t w_t$ gives
\begin{equation*}
    \|B_\star^\top B_{t+1} w_{t+1}\|_2 \leq \left(1 - \frac{\beta}{4 \alpha} + \alpha \beta \|\Sigma_\star\|_2 \right)\|B_\star^\top B_t w_t\|_2 + \bigO{\|B_{\star,\perp}^{\top} B_t\|_2^2 \|w_t\|_2},
\end{equation*}
which implies that $\lim_{t \to \infty} B_t w_t = 0$ for properly chosen $\alpha,\beta$.

\paragraph{Feature learning.}
We now focus on the limit of the matrix $   \Lambda_t \coloneqq B_\star^\top B_t B_t^\top B_\star \in \R^{k \times k}$.
The recursion on $\Lambda_t$ induced by~\cref{eq:rec_B,eq:rec_w} is as follows,
\begin{equation}\label{eq:Lambdat}
\begin{aligned}
    \Lambda_{t+1} &= \left(\id_k + \alpha \beta R_t(\Lambda_t)\right) \Lambda_t \left(\id_k + \alpha \beta R_t(\Lambda_t)\right) \\
    &\quad - \color{black}2 \beta \mathrm{Sym}\left(\left(\id_k + \alpha \beta R_t(\Lambda_t)\right)B_\star^\top B_t U_t B_t^\top B_\star\right) + \color{black}\beta^2 B_\star^\top B_t U_t^2 B_t^\top B_\star \vphantom{\left((\frac{\m}{\m}\right)}.
\end{aligned}\end{equation}
where $\mathrm{Sym}(A) \coloneqq \frac{1}{2}\left(A + A^\top\right)$, $R_t(\Lambda_t) \coloneqq \big(\id_{k} - \frac{\alpha (\m + 1)}{\m} \Lambda_t \big) \Sigma_\star - \frac{\alpha}{\m} \left(\bar{\sigma}^2 + \|B_t w_t\|_2^2\right)$ 
and $U_t$ is some noise term defined in~\cref{app:proofmain}. 
From there, we can define functions $f_t^L$ and $f^U$ approximating the updates given in \cref{eq:Lambdat} such that
\begin{equation*}
    f_t^L(\Lambda_t) \preceq \Lambda_{t+1} \preceq f^U(\Lambda_t).
\end{equation*}
Moreover, these functions preserve the Loewner matrix order for commuting matrices of interest.
Thanks to that, we can construct bounding sequences of matrices $(\Lambda_t^L)$, $(\Lambda_t^U)$ such that
\begin{equation*}
    \text{1. }\Lambda_{t+1}^L = f_t^L(\Lambda_t^L),\hspace{2cm}
    \text{2. }\Lambda_{t+1}^U = f^U(\Lambda_t^U),\hspace{2cm}
    \text{3. }\Lambda_t^L \preceq \Lambda_t \preceq \Lambda_t^U.
\end{equation*}
Using the first two points, we can then show that both sequences $\Lambda_t^L$, $\Lambda_t^U$ are non-decreasing and converge to $\Lambda_{\star}$ under the conditions of \cref{thm:main}. The third point then concludes the proof.

\vfill
\pagebreak
\section{Proof of \cref{thm:main}}
\label{app:proofmain}

The full version of~\cref{thm:main} is given by~\cref{thm:main_detailed}. In particular, it gives more precise conditions on the required initialization and step sizes.
\begin{thm}\label{thm:main_detailed}
    Assume that $c_1<1, c_2$ are small enough positive constants verifying
    \begin{equation*}
        c_2 \frac{\m + 1}{\m} + \frac{c_1\left(c_2 + \bar{\sigma}^2\right)}{2 \m (\m + 1)}<\lambda_{\min}(\Sigma_\star),
    \end{equation*}
    and $\alpha, \beta$ are selected such that the following conditions hold:
    \begin{enumerate}
        \item $\begin{aligned}[t]
            \beta \leq \alpha,
        \end{aligned}$

        \item $\begin{aligned}[t]
            \frac{1}{\alpha^2} &\geq 4\|\Sigma_\star\|_2,
        \end{aligned}$

        \item $\begin{aligned}[t]
            \frac{1}{\alpha \beta} &\geq \left(c_2 \frac{\m + 2}{\m} + \frac{c_1 c_2}{2\m (\m + 1)} + \frac{2\bar{\sigma}^2}{\m} + \frac{\m + 1}{\m} \|\Sigma_\star\|_2 + \frac{4}{3} \frac{\m}{(\m + 1)^2}\right),
        \end{aligned}$

        \item $\begin{aligned}[t]
            \frac{1}{\alpha \beta} \geq 6 \left(\|\Sigma_\star\|_2 + \frac{c_2 + \bar{\sigma}^2}{\m + 1}\right).
        \end{aligned}$
        
    \end{enumerate}
    Furthermore, suppose that parameters $B_0$ and $w_0$ are initialized such that the following three conditions hold:    
    \begin{enumerate}
    \item $B_{\star}^\top B_0$ is full rank,
        \item $\begin{aligned}[t]
            \|B_0\|_2^2 \leq \frac{1}{\alpha} \frac{c_1}{\m + 1},
        \end{aligned}$

        \item $\begin{aligned}[t]
            \|w_0\|_2^2 \leq \alpha c_2.
        \end{aligned}$
    \end{enumerate}
    Then, FO-ANIL (given by \cref{eq:rec_w,eq:rec_B}) with initial parameters $B_0, w_0$, inner step size $\alpha$, outer step size $\beta$, asymptotically satisfies the following
    \begin{align}
        \lim_{t\to\infty}& B_{\star, \perp}^\top B_t = 0, \\
        \lim_{t\to\infty}& B_t w_t = 0, \\ 
        \lim_{t\to\infty}& B_\star^\top B_t B_t^\top B_\star = \Lambda_\star = \frac{1}{\alpha} \frac{\m}{\m + 1} \left(\id_k - \left(\frac{\m + 1}{\bar{\sigma}^2}\Sigma_\star + \id_k)\right)^{-1}\right).
    \end{align}
\end{thm}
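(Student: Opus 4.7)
My plan is to follow a two-stage strategy. In the first stage I would establish inductive regularity bounds that keep every quantity in a regime where the update equations~\cref{eq:rec_w,eq:rec_B,eq:Eww} can be controlled; in the second stage I would derive the three limits separately, relying on these bounds.

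\textbf{Stage 1: inductive regularity.} The natural bundle of invariants to carry through induction is
\begin{equation*}
\text{(i)}\ \|B_\star^\top B_t\|_2^2 \leq \|\Lambda_\star\|_2, \quad
\text{(ii)}\ \|w_t\|_2 \leq \|w_0\|_2, \quad
\text{(iii)}\ \|B_{\star,\perp}^\top B_t\|_2 \leq \|B_{\star,\perp}^\top B_0\|_2.
\end{equation*}
Combining (i) and (iii) with the assumed bound on $\|B_0\|_2^2$ yields $\|B_t\|_2^2 = \mathcal{O}(\alpha^{-1})$ throughout, and with (ii) this will imply that the matrix $\E[w_{t,i}w_{t,i}^\top]$ from~\cref{eq:Eww} stays small enough (in operator norm $\leq \beta^{-1}$) that the update~\cref{eq:rec_w} is contractive-like. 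The preservation of (iii) follows by expanding $B_{\star,\perp}^\top B_{t+1} = B_{\star,\perp}^\top B_t(\id_{k'}-\beta\E[w_{t,i}w_{t,i}^\top])$ and invoking the just-mentioned norm bound. Preservation of (ii) uses the explicit form~\cref{eq:rec_w} plus the inductive bounds. The hardest clause is (i), which requires monotonicity of the scalar map governing $\|B_\star^\top B_t\|_2^2$ up to the fixed point $\|\Lambda_\star\|_2$; I would extract this from the upper bounding function $f^U$ described below.

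\textbf{Stage 2a: unlearning the orthogonal complement.} Projecting~\cref{eq:rec_B} onto $B_{\star,\perp}^\top$ gives
\begin{equation*}
B_{\star,\perp}^\top B_{t+1} = B_{\star,\perp}^\top B_t\bigl(\id_{k'} - \beta\E[w_{t,i}w_{t,i}^\top]\bigr).
\end{equation*}
Using the explicit lower bound on $\E[w_{t,i}w_{t,i}^\top]$ coming from the variance term in~\cref{eq:Eww}, namely $\E[w_{t,i}w_{t,i}^\top]\succeq \alpha^2\frac{\bar{\sigma}^2}{\m}B_t^\top B_{\star,\perp}B_{\star,\perp}^\top B_t$, I would obtain the self-contained recursion~\cref{eq:decay_of_D} on $x_t \coloneqq \|B_{\star,\perp}^\top B_t\|_2^2$, i.e.\ $x_{t+1} \leq x_t(1 - cx_t)$ with $c = \alpha^2\beta\bar\sigma^2/\m$, from which $x_t \to 0$ (and in fact $x_t = \mathcal{O}(1/t)$).

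\textbf{Stage 2b: convergence of $B_t w_t$.} Decompose $B_t w_t = B_\star(B_\star^\top B_t w_t) + B_{\star,\perp}(B_{\star,\perp}^\top B_t w_t)$. The second summand goes to zero by 2a and the boundedness of $w_t$. For the first summand, I would carefully combine~\cref{eq:rec_w,eq:rec_B} to produce a recursion of the shape
\begin{equation*}
\|B_\star^\top B_{t+1} w_{t+1}\|_2 \leq \bigl(1 - \tfrac{\beta}{4\alpha} + \alpha\beta\|\Sigma_\star\|_2\bigr)\|B_\star^\top B_t w_t\|_2 + \mathcal{O}\bigl(\|B_{\star,\perp}^\top B_t\|_2^2\|w_t\|_2\bigr),
\end{equation*}
where the key contraction factor $\tfrac{\beta}{4\alpha}$ comes from the $(\id_{k'}-\alpha B_t^\top B_t)B_t^\top B_t w_t$ piece together with condition (i). Under $\alpha\geq\beta$ and $\alpha = \mathcal{O}(1/\bar\sigma)$ this contracts, and summing the vanishing perturbation forces $B_\star^\top B_t w_t \to 0$.

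\textbf{Stage 2c: feature learning.} The main work is analyzing $\Lambda_t \coloneqq B_\star^\top B_t B_t^\top B_\star$, which satisfies the recursion~\cref{eq:Lambdat}. I would construct scalar-like bounding dynamics: first establish auxiliary maps $f_t^L, f^U\colon \mathbb{S}_+^k \to \mathbb{S}_+^k$ verifying $f_t^L(\Lambda_t)\preceq \Lambda_{t+1} \preceq f^U(\Lambda_t)$, where the asymmetric noise and cross terms in~\cref{eq:Lambdat} are absorbed into $f^U$ using the bounds from Stage 1 and those of 2a--2b. Because the identity-covariance assumption $\Sigma_\star = c\,\id_k$ makes all the matrices appearing in $f^U$ commute with $\Lambda_t$, both $f_t^L$ and $f^U$ preserve the Loewner order along commuting sequences, which lets me define bounding iterates $\Lambda_{t+1}^L = f_t^L(\Lambda_t^L)$, $\Lambda_{t+1}^U = f^U(\Lambda_t^U)$ that sandwich $\Lambda_t$. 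Each of these recursions is essentially scalar-per-eigenvalue, so their limits can be computed by solving the fixed-point equation; by design $\Lambda_\star$ is the only admissible fixed point in the relevant range and both sequences are monotone and bounded, hence converge to it.

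\textbf{Main obstacle.} I expect Stage 2c to be the hardest part: the map in~\cref{eq:Lambdat} is genuinely quadratic in $\Lambda_t$ and contaminated by non-commuting residual terms of the form $B_\star^\top B_t U_t B_t^\top B_\star$ that are only eventually small. The delicate step is designing $f^U$ tight enough to have $\Lambda_\star$ as the stable fixed point, yet loose enough to dominate~\cref{eq:Lambdat} for every $t$ given only the Stage 1 bounds; in particular this is where the precise numerical constraints on $\alpha\beta$ in the statement enter. Stage 1 is also nontrivial because invariants (i)--(iii) are coupled, so the induction must be done simultaneously for all three.
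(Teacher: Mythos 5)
Your plan takes essentially the same route as the paper's own proof: the identical three inductive regularity invariants giving $\|B_t\|_2^2=\mathcal{O}(\alpha^{-1})$, the projected recursion and quadratic decay for $B_{\star,\perp}^\top B_t$, the contraction with factor $1-\tfrac{\beta}{4\alpha}+\alpha\beta\|\Sigma_\star\|_2$ plus vanishing perturbation for $B_\star^\top B_t w_t$, and the Loewner-order sandwich $\Lambda_t^L\preceq\Lambda_t\preceq\Lambda_t^U$ built from order-preserving upper/lower maps whose fixed point is $\Lambda_\star$. The points you leave implicit (choosing the non-increasing noise sequences that feed the lower bound, and showing the monotone bounded sequences converge to the fixed point via a per-eigenvalue scalar reduction) are exactly how the paper completes the argument, so there is no substantive gap.
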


The main tools for the proof are presented and discussed in the following subsections.
Section~\ref{app:regularity} proves monotonic decay in noise terms provided that $B_t$ is bounded by above.
Section~\ref{app:monotonicity} provides bounds for iterates and describes the monotonicity between updates.
Section~\ref{app:bounds} constructs sequences that bound the iterates from above and below.
Section~\ref{sec:convergence} presents the full proof using the tools developed in previous sections.
In the following, common recursions on relevant objects are derived.

The recursion on $B_t$ defined in~\cref{eq:rec_B} leads to the following recursions on $C_t \coloneqq B_\star^\top B_t \in \R^{k \times k'}$ and $D_t \coloneqq B_{\star, \perp}^\top B_t \in \R^{(d-k) \times k'}$,
\begin{align}\label{eq:rec_C}
    C_{t+1} &= \left(\id_{k} + \alpha \beta \left(\id_{k} - \frac{\alpha (\m + 1)}{\m} C_t C_t^\top\right) \Sigma_\star - \frac{\alpha^2 \beta}{\m} \left(\|B_t w_t\|^2 + \mathrm{Tr}(\Sigma_\star) + \sigma^2\right) C_t C_t^\top \right) C_t \nonumber \\
    &\quad - \beta C_t \Bigg[\left(\id_{k'} - \alpha B_t^\top B_t\right) w_t w_t^\top \left(\id_{k'} - \alpha B_t^\top B_t\right) + \frac{\alpha^2}{\m} B_t^\top B_t w_t w_t^\top B_t^\top B_t \nonumber \\
    &\quad + \frac{\alpha^2}{\m} \left(\|B_t w_t\|^2 + \mathrm{Tr}(\Sigma_\star) + \sigma^2\right) D_t^\top D_t\Bigg], \\
    \label{eq:rec_D}
    D_{t+1} &= D_t \Bigg[\id_{k'} - \beta\left(\id_{k'} - \alpha B_t^\top B_t\right) w_t w_t^\top \left(\id_{k'} - \alpha B_t^\top B_t\right) - \frac{\alpha^2 \beta}{\m} B_t^\top B_t w_t w_t^\top B_t^\top B_t \nonumber \\
    &\quad - \frac{\alpha^2 \beta(\m + 1)}{\m} C_t^\top \Sigma_\star C_t - \frac{\alpha^2 \beta}{\m} \left(\|B_t w_t\|^2 + \mathrm{Tr}(\Sigma_\star) + \sigma^2\right) B_t^\top B_t \Bigg].
\end{align}
For ease of notation, let $\bar{\sigma}^2 \coloneqq \mathrm{Tr}(\Sigma_\star) + \sigma^2$, $\delta_t \coloneqq \|B_t w_t\|_2^2 + \bar{\sigma}^2$ and define the following objects,
\begin{align}
    R(\Lambda, \tau) &\coloneqq \left(\id_{k} - \frac{\alpha (\m + 1)}{\m} \Lambda \right) \Sigma_\star - \frac{\alpha}{\m} \left(\bar{\sigma}^2 + \tau\right) \Lambda, \quad R_t(\Lambda) \coloneqq R(\Lambda, \|B_t w_t\|_2^2), \notag\\
    W_t &\coloneqq \left(\id_{k'} - \alpha B_t^\top B_t\right) w_t w_t^\top \left(\id_{k'} - \alpha B_t^\top B_t\right) + \frac{\alpha^2}{\m} B_t^\top B_t w_t w_t^\top B_t^\top B_t, \notag\\
    U_t &\coloneqq W_t + \frac{\alpha^2}{\m} \delta_t D_t^\top D_t, \notag\\
    V_t &\coloneqq W_t + \frac{\alpha^2(\m + 1)}{\m} C_t^\top \Sigma_\star C_t + \frac{\alpha^2}{\m} \delta_t B_t^\top B_t. \label{eq:Vt}
\end{align}
Then, the recursion for $\Lambda_t \coloneqq C_t C_t^\top$ is
\begin{equation}\label{eq:rec_L}
\begin{split}
    \Lambda_{t+1} &= \left(\id_{k} + \alpha \beta R_t(\Lambda_t) \right) \Lambda_t \left(\id_{k} + \alpha \beta R_t(\Lambda_t) \right)^\top + \beta^2 C_t U_t^2 C_t^\top \\
    &\quad - \beta \left(\id_{k} + \alpha \beta R_t(\Lambda_t) \right) C_t U_t C_t^\top -\beta  C_t U_t C_t^\top \left(\id_{k} + \alpha \beta R_t(\Lambda_t) \right)^{\top}.
\end{split}
\end{equation}

\subsection{Regularity conditions}
\label{app:regularity}

\cref{lemma:decay_w,lemma:decay_D} control $\|w_t\|_2$ and $\|D_t\|_2$ across iterations, respectively.
\cref{lemma:decay_C_w} shows that $\|C_t w_t\|_2$ is decaying with a noise term that vanishes as $\|D_t\|_2$ gets small.
\cref{coro:decay_B_w} combines all three results and yields the first two claims of~\cref{thm:main},
\begin{equation*}
    \lim_{t \to \infty} B_{\star, \perp} B_t = 0, \quad \lim_{t \to \infty} B_t w_t = 0,
\end{equation*}
under the assumption that conditions of~\cref{lemma:decay_D,lemma:decay_C_w} are satisfied for all $t$.
\cref{lemma:bound_U,lemma:bound_W} bound $\|U_t\|_2$ and $\|W_t\|_2$, ensuring that the recursions of $\Lambda_t$ are well-behaved in later sections.

\begin{lem}\label{lemma:decay_w}
    Assume that
    \begin{equation*}
        c_0 \id_{k'} \preceq B_t^\top B_t \preceq {\frac{1}{\alpha} \frac{\m + c_1}{\m + 1}} \id_{k'},
    \end{equation*}
    for constants $0 \leq c_0, 0 < c_1 < 1$ such that $\beta c_0 (1 - c_1)\leq\m + 1$.
    Then,
    \begin{equation*}
        \|w_{t+1}\|_2 \leq \left(1 - \beta\frac{c_0 (1 - c_1)}{\m + 1}\right) \|w_t\|_2.
    \end{equation*}
\end{lem}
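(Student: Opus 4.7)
The plan is to rewrite the outer-loop update~(\ref{eq:rec_w}) as a symmetric linear operator applied to $w_t$ and control its operator norm through scalar analysis of the corresponding polynomial. Setting $M \coloneqq B_t^\top B_t$, one has $w_{t+1} = p(M)\, w_t$ where $p(\lambda) \coloneqq 1 - \beta \lambda (1 - \alpha \lambda)$. Since $p(M)$ is a polynomial in the symmetric PSD matrix $M$, it is symmetric and shares eigenvectors with $M$, so $\|w_{t+1}\|_2 \leq \max_\lambda |p(\lambda)|\,\|w_t\|_2$, where $\lambda$ ranges over the spectrum of $M$, contained by hypothesis in $I \coloneqq \bigl[c_0,\, \tfrac{\m+c_1}{\alpha(\m+1)}\bigr]$.

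The key step is bounding $\max_{\lambda \in I} p(\lambda)$. As a quadratic in $\lambda$ with positive leading coefficient $\alpha \beta$, $p$ is convex, so its maximum on $I$ is attained at one of the endpoints. A direct computation at the lower endpoint yields
\[
p(c_0) = 1 - \beta c_0 (1 - \alpha c_0) \leq 1 - \beta c_0 \cdot \tfrac{1 - c_1}{\m+1},
\]
where the inequality uses $\alpha c_0 \leq \tfrac{\m+c_1}{\m+1}$ (from the upper endpoint of $I$); at the upper endpoint,
\[
p\!\left(\tfrac{\m+c_1}{\alpha(\m+1)}\right) = 1 - \beta \tfrac{(\m+c_1)(1-c_1)}{\alpha(\m+1)^2} \leq 1 - \beta \tfrac{c_0(1-c_1)}{\m+1},
\]
where the last step factorizes $\tfrac{(\m+c_1)(1-c_1)}{\alpha(\m+1)^2} = \tfrac{\m+c_1}{\alpha(\m+1)} \cdot \tfrac{1-c_1}{\m+1}$ and uses non-emptiness of $I$, namely $\tfrac{\m+c_1}{\alpha(\m+1)} \geq c_0$. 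Combining both endpoints gives $\max_{\lambda \in I} p(\lambda) \leq 1 - \beta \tfrac{c_0(1-c_1)}{\m+1}$.

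To conclude by replacing $|p(\lambda)|$ with $p(\lambda)$, one has to verify $p(\lambda) \geq 0$ on $I$, i.e.\ $\beta \lambda (1-\alpha\lambda) \leq 1$. This follows easily from $\lambda \leq \tfrac{1}{\alpha}$ on $I$ (since $c_1 < 1$ forces $\tfrac{\m+c_1}{\m+1} < 1$) together with the standing assumption $\beta \leq \alpha$ from~\cref{thm:main_detailed}; the hypothesis $\beta c_0 (1-c_1) \leq \m + 1$ of the lemma itself ensures that the stated contraction factor is non-negative. Plugging into $\|w_{t+1}\|_2 \leq \|p(M)\|_2 \|w_t\|_2$ delivers the claim. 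The main subtlety, and the step requiring real care, is the two-sided control of $p$ on $I$, since the lower bound on $p(\lambda)$ is not immediate from the convexity argument used for the upper bound and must be extracted from the step-size conditions; the rest is routine eigenvalue bookkeeping on a scalar quadratic.
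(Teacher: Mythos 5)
Your proof is correct and follows essentially the same route as the paper: both control the spectrum of the symmetric update operator $\id_{k'} - \beta B_t^\top B_t(\id_{k'} - \alpha B_t^\top B_t)$, the paper via the direct product bound $B_t^\top B_t(\id_{k'}-\alpha B_t^\top B_t) \succeq \frac{c_0(1-c_1)}{\m+1}\id_{k'}$ and you via endpoint evaluation of the convex scalar quadratic $p$ on the spectral interval, which amounts to the same eigenvalue bookkeeping. Your explicit verification that $p(\lambda)\geq 0$ on the interval (importing $\beta\leq\alpha$ from \cref{thm:main_detailed}) addresses a sign issue the paper's proof leaves implicit, but it does not change the substance of the argument.
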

\begin{proof}
    From the assumption,
    \begin{equation*}
        \frac{1 - c_1}{\m + 1}\id_{k'} \preceq \id_{k'} - \alpha B_t^\top B_t \preceq (1 - \alpha c_0) \id_{k'},
    \end{equation*}
    and
    \begin{equation*}
         B_t^\top B_t (\id_{k'} - \alpha B_t^\top B_t) \succeq \frac{c_0 (1 - c_1)}{\m + 1} \id_{k'}.
    \end{equation*}
    Recalling the recursion for $w_t$ defined in~\cref{eq:rec_w},
    \begin{equation*}
        \|w_{t+1}\|_2 \leq \left(1 - \beta\frac{c_0 (1 - c_1)}{\m + 1}\right) \|w_t\|_2.
    \end{equation*}
\end{proof}
\begin{lem}\label{lemma:decay_D}
    Assume that
    \begin{align*}
        \|B_t\|_2^2 \leq \frac{1}{\alpha}, \quad \|w_t\|_2^2 \leq c \alpha,
    \end{align*}
    for a constant $c \geq 0$ and $\alpha, \beta$ satisfy
    \begin{align}
        \label{eq:decay_D_cond_1}
        \frac{1}{\alpha \beta} &\geq \frac{\m + 2}{\m} c + \frac{1}{\m} \left((\m + 1)\|\Sigma_\star\|_2 + \bar{\sigma}^2\right), \\
        \frac{1}{\alpha \beta} &\geq \frac{2\bar{\sigma}^2}{\m}.
        \label{eq:decay_D_cond_2}
    \end{align}
    Then,
    \begin{equation*}
        \|D_{t+1} D_{t+1}^\top\|_2 \leq \left(1 - \frac{\alpha^2 \beta}{\m} \bar{\sigma}^2 \|D_t D_t^\top\|_2\right) \|D_t D_t^\top\|_2.
    \end{equation*}
\end{lem}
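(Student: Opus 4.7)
The plan is to reduce the matrix recursion on $D_t$ to a scalar inequality on the top eigenvalue of $S_t \coloneqq D_t D_t^\top$. The recursion~\eqref{eq:rec_D} reads $D_{t+1} = D_t(\id_{k'} - \beta V_t)$ with $V_t$ as in~\eqref{eq:Vt}; each of its three summands is manifestly PSD, so $V_t \succeq 0$. The strategy is to sandwich $V_t$ between two PSD bounds: an operator-norm upper bound that makes $\id_{k'} - \beta V_t$ a contraction, and a lower bound by a multiple of $D_t^\top D_t$ that supplies the quadratic decay term.

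For the upper bound, I would estimate $\|V_t\|_2$ term by term using the hypotheses $\|B_t\|_2^2 \leq 1/\alpha$ and $\|w_t\|_2^2 \leq c\alpha$. Since eigenvalues of $\alpha B_t^\top B_t$ lie in $[0,1]$, one has $\|\id_{k'} - \alpha B_t^\top B_t\|_2 \leq 1$, giving $\|W_t\|_2 \leq \frac{\m+1}{\m}c\alpha$; the middle summand is bounded by $\frac{\alpha(\m+1)}{\m}\|\Sigma_\star\|_2$; and the last is bounded by $\frac{\alpha}{\m}(c + \bar{\sigma}^2)$ after observing that $\delta_t = \|B_t w_t\|_2^2 + \bar{\sigma}^2 \leq c + \bar{\sigma}^2$. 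Summing yields $\|V_t\|_2 \leq \frac{\alpha}{\m}\bigl[(\m+2)c + (\m+1)\|\Sigma_\star\|_2 + \bar{\sigma}^2\bigr]$, which is $\leq 1/\beta$ by exactly condition~\eqref{eq:decay_D_cond_1}. Hence $\beta V_t$ has spectrum in $[0,1]$, and the scalar fact $(1-\lambda)^2 \leq 1 - \lambda$ on $[0,1]$ lifts by the spectral theorem to $(\id_{k'} - \beta V_t)^2 \preceq \id_{k'} - \beta V_t$. Sandwiching by $D_t$ and $D_t^\top$ gives $D_{t+1}D_{t+1}^\top \preceq S_t - \beta D_t V_t D_t^\top$.

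For the lower bound, I would drop every summand of $V_t$ except $\frac{\alpha^2}{\m}\delta_t B_t^\top B_t$. Since $B_t = B_\star C_t + B_{\star,\perp} D_t$ is an orthogonal decomposition, $B_t^\top B_t = C_t^\top C_t + D_t^\top D_t \succeq D_t^\top D_t$, and $\delta_t \geq \bar{\sigma}^2$, so $V_t \succeq \frac{\alpha^2 \bar{\sigma}^2}{\m} D_t^\top D_t$. Feeding this back, $D_t V_t D_t^\top \succeq \frac{\alpha^2 \bar{\sigma}^2}{\m} S_t^2$, whence $D_{t+1}D_{t+1}^\top \preceq S_t - \beta \frac{\alpha^2 \bar{\sigma}^2}{\m} S_t^2 = f(S_t)$ with $f(x) \coloneqq x - \beta \frac{\alpha^2 \bar{\sigma}^2}{\m} x^2$. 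Because $f(S_t)$ is a polynomial in the PSD matrix $S_t$, its eigenvalues are $f(\lambda_i(S_t))$; if $f$ is nondecreasing on $[0, \|S_t\|_2]$, then $\lambda_{\max}(f(S_t)) = f(\|S_t\|_2)$, yielding the claimed bound. Using $\|S_t\|_2 \leq \|B_t\|_2^2 \leq 1/\alpha$, monotonicity of $f$ on this interval amounts to $\frac{2\beta\alpha^2 \bar{\sigma}^2}{\m}\cdot\frac{1}{\alpha} \leq 1$, which is exactly condition~\eqref{eq:decay_D_cond_2}.

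The argument is mostly PSD bookkeeping and I do not foresee a conceptual obstacle. The only delicate point is in the scalar reduction: a naive operator-norm bound on $f(S_t)$ would require $f$ to be monotone on the \emph{entire} eigenvalue range of $S_t$, and matching that monotonicity window against the a priori upper bound $\|S_t\|_2 \leq 1/\alpha$ is precisely what forces condition~\eqref{eq:decay_D_cond_2}; condition~\eqref{eq:decay_D_cond_1}, on the other hand, is needed purely to make $\id_{k'} - \beta V_t$ a PSD contraction. The two hypotheses thus enter cleanly, one per step, and nothing in the proof recycles them.
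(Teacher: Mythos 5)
Your proposal is correct and follows essentially the same route as the paper's proof: bound $\|V_t\|_2\le 1/\beta$ term by term via condition~\eqref{eq:decay_D_cond_1} so that $(\id_{k'}-\beta V_t)^2\preceq \id_{k'}-\beta V_t$, lower bound $V_t\succeq \frac{\alpha^2\bar{\sigma}^2}{\m}D_t^\top D_t$, and finish with the monotonicity of $x\mapsto x\bigl(1-\frac{\alpha^2\beta\bar{\sigma}^2}{\m}x\bigr)$ on the eigenvalue range of $D_tD_t^\top$, which is exactly where condition~\eqref{eq:decay_D_cond_2} enters in the paper as well. The only differences are presentational (you apply the scalar map spectrally rather than writing the eigendecomposition explicitly, and you rederive the $\|W_t\|_2$ bound inline instead of citing the auxiliary lemma).
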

\begin{proof}
    The recursion on $D_t D_t^\top$ is given by
    \begin{align*}
        D_{t+1} D_{t+1}^\top &= D_t (\id_{k'} - \beta V_t)^2 D_t^\top,
    \end{align*}
    where we recall $V_t$ is defined in \cref{eq:Vt}.
    First step is to show $\id_{k'} - \beta V_t \succeq 0$ by proving $\|V_t\|_2 \leq \frac{1}{\beta}$.
    By the definition of $V_t$,
    \begin{equation*}
        \|V_t\|_2 \leq {\color{vert}\underbrace{\color{black}\|W_t\|_2 \vphantom{\frac{\alpha}{\m}}}_{\mytag{(A)}{tag:decay_D_termA}}} +  {\color{vert}\underbrace{\color{black}\frac{\alpha^2(\m + 1)}{\m} \|C_t^\top \Sigma_\star C_t\|_2}_{\mytag{(B)}{tag:decay_D_termB}}} + {\color{vert}\underbrace{\color{black}\frac{\alpha^2}{\m} \delta_t \|B_t^\top B_t\|_2}_{\mytag{(C)}{tag:decay_D_termC}}}.
    \end{equation*}
    Term~{\hypersetup{linkcolor=vert}\ref{tag:decay_D_termA}} is bounded by~\cref{lemma:bound_W}.
    For the term~{\hypersetup{linkcolor=vert}\ref{tag:decay_D_termB}}, using $\|C_t\|_2 = \|B_\star^\top B_t\|_2 \leq \|B_t\|_2$,
    \begin{align*}
        \|C_t^\top \Sigma_\star C_t\|_2 &\leq \frac{1}{\alpha} \|\Sigma_\star\|_2.
    \end{align*}
    Term~{\hypersetup{linkcolor=vert}\ref{tag:decay_D_termC}} is bounded as
    \begin{equation*}
        \begin{split}
            \delta_t = \|B_t w_t\|_2^2 + \bar{\sigma}^2 \leq \frac{1}{\alpha} \|w_t\|_2^2 + \bar{\sigma}^2 \leq c + \bar{\sigma}^2, \quad \|B_t^\top B_t\|_2 \leq \|B_t\|_2^2 \leq \frac{1}{\alpha}.
        \end{split}
    \end{equation*}
    Combining three bounds and using the condition in~\cref{eq:decay_D_cond_1},
    \begin{align*}
        \|V_t\|_2 \leq \frac{\m + 1}{\m} \alpha c + \frac{\alpha}{\m} \left((\m + 1) \|\Sigma_\star\|_2 + \bar{\sigma}^2\right) + \alpha c \leq \frac{1}{\beta}.
    \end{align*}
    Therefore, it is possible to upper bound $D_{t+1} D_{t+1}^\top$ as follows,
    \begin{equation*}
        \begin{split}
            D_{t+1} D_{t+1}^\top &= D_t (\id_{k'} - \beta V_t)^2 D_t^\top \\
            &\preceq D_t (\id_{k'} - \beta V_t) D_t^\top \\
            &\preceq D_t \left[\id_{k'} - \frac{\alpha^2 \beta}{\m} \bar{\sigma}^2 D_t^\top D_t \right] D_t^\top \\
            &= \left[\id_{k'} - \frac{\alpha^2 \beta}{\m} \bar{\sigma}^2 D_t D_t^\top \right] D_t D_t^\top.
        \end{split}
    \end{equation*}
    Let $D_t D_t^\top = \Omega_t S_t \Omega_t^\top$ be the SVD decomposition of $D_t D_t^\top$ in this proof.
    Then,
    \begin{equation*}
        D_{t+1} D_{t+1}^\top \preceq \Omega_t \left(S_t - \frac{\alpha^2 \beta}{\m} \bar{\sigma}^2 S_t^2\right) \Omega_t^\top.
    \end{equation*}
    Note that $\frac{1}{\alpha} < \frac{\m}{2 \alpha^2 \beta \bar{\sigma}^2}$ by~\cref{eq:decay_D_cond_2} and for any $s_1 \leq s_2 < \frac{1}{\alpha} < \frac{\m}{2 \alpha^2 \beta \bar{\sigma}^2}$,
    \begin{equation*}
        s_2 (1 - \frac{\alpha^2 \beta}{\m} \bar{\sigma}^2 s_2) \geq s_1 (1 - \frac{\alpha^2 \beta}{\m} \bar{\sigma}^2 s_1),
    \end{equation*}
    by monotonicity of $x \mapsto x (1 - \frac{\alpha^2 \beta}{\m} \bar{\sigma}^2 x)$.
    Hence, if $s$ is the largest eigenvalue of $S_t$, $s (1 - \frac{\alpha^2 \beta}{\m} \bar{\sigma}^2 s)$ is the largest eigenvalue of $(S_t - \frac{\alpha^2 \beta}{\m} \bar{\sigma}^2 S_t^2)$ and
    \begin{equation*}
        \|D_{t+1} D_{t+1}^\top\|_2 \leq \left(1 - \frac{\alpha^2 \beta}{\m} \bar{\sigma}^2 \|D_t D_t^\top\|_2\right) \|D_t D_t^\top\|_2.
    \end{equation*}
\end{proof}
\begin{lem}\label{lemma:decay_C_w}
    Suppose that $\beta \leq \alpha$ and the following conditions hold,
    \begin{align*}
        \|B_t\|_2^2 \leq \frac{1}{\alpha}, \|\Lambda_t\|_2^2 \leq \frac{1}{\alpha} \frac{\m}{\m+1}, \quad \|w_t\|_2^2 \leq \alpha c,
    \end{align*}
    where $c \geq 0$ is a constant such that
    \begin{equation*}
        \frac{\left(1 - \frac{\beta}{4\alpha}\right)}{\alpha \beta} \geq \frac{ \bar{\sigma}^2}{\m + 1} + c \frac{\m + 2}{\m + 1} + \frac{\m}{(\m + 1)^2}.
    \end{equation*}
    Then,
    \begin{equation*}
        \|C_{t+1} w_{t+1}\|_2 \leq \left(1 - \frac{\beta}{4 \alpha} + \alpha \beta \|\Sigma_\star\|_2 \right)\|C_t w_t\|_2 + M \|D_t\|_2^2 \|w_t\|_2,
    \end{equation*}
    for a constant $M$ depending only on $\alpha$.
\end{lem}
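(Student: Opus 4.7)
The plan is to expand $C_{t+1} w_{t+1}$ directly from the recursions and decompose the result into a leading-order piece acting on $y_t \coloneqq C_t w_t$ and a noise remainder that factors through $D_t^\top D_t$. Writing~\cref{eq:rec_C} compactly as $C_{t+1} = (\id_k + \alpha\beta\Sigma_\star)C_t - \beta C_t V_t$ and~\cref{eq:rec_w} as $w_{t+1} = (\id_{k'} - \beta P_t(\id_{k'} - \alpha P_t))w_t$ with $P_t \coloneqq B_t^\top B_t = C_t^\top C_t + D_t^\top D_t$, I would multiply these two expressions to obtain an exact (lengthy) formula for $y_{t+1}$. The key observation is that every occurrence of $P_t$ splits orthogonally into a \emph{parallel} part (in which $P_t$ is replaced by $C_t^\top C_t$, so the whole expression factors through $\Lambda_t = C_t C_t^\top$ and $y_t$) and a \emph{perpendicular} part that carries an explicit factor of $D_t^\top D_t$.

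All perpendicular contributions are gathered into a single remainder $\mathcal{E}_t$. Using the uniform bounds $\|B_t\|_2^2 \leq 1/\alpha$, $\|w_t\|_2^2 \leq c\alpha$, $\|\Lambda_t\|_2 \leq \frac{\m}{(\m+1)\alpha}$ and $\beta \leq \alpha$, routine operator-norm algebra yields $\|\mathcal{E}_t\|_2 \leq M\|D_t\|_2^2 \|w_t\|_2$ for an explicit $M$ depending only on $\alpha$, accounting for the second term in the claimed bound. The parallel contribution has the form $\mathcal{A}_t y_t$ with
\[
\mathcal{A}_t = \id_k + \alpha\beta\Sigma_\star - \beta\Bigl(\Lambda_t - \alpha\Lambda_t^2 + a_t \id_k + \bigl(\tfrac{\alpha^2(\delta_t+b_t)}{\m} - \alpha a_t\bigr)\Lambda_t + \alpha^2\tfrac{\m+1}{\m}\Lambda_t\Sigma_\star\Bigr) + O(\beta^2),
\]
where $a_t \coloneqq w_t^\top(\id_{k'} - \alpha P_t)w_t$ and $b_t \coloneqq w_t^\top P_t w_t$. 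The structural fact $\alpha\|\Lambda_t\|_2 \leq \m/(\m+1) < 1$ gives $0 \preceq \Lambda_t - \alpha\Lambda_t^2 \preceq \frac{1}{4\alpha}\id_k$ via the scalar inequality $\alpha\lambda(1-\alpha\lambda) \leq 1/4$, which supplies the contractive ``budget'' $\beta/(4\alpha)$, while $\alpha\beta\Sigma_\star$ contributes at most $\alpha\beta\|\Sigma_\star\|_2$ in operator norm.

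The main obstacle is the final bookkeeping step: showing that all of the remaining positive corrections in $\mathcal{A}_t$ (the $a_t\id_k$ term, the $\Lambda_t\Sigma_\star$ cross products, the $\frac{\alpha^2(\delta_t+b_t)}{\m}\Lambda_t$ piece, and the $O(\beta^2)$ cross terms between $V_t$ and $P_t(\id_{k'}-\alpha P_t)$) are absorbed within the slack offered by the $-\beta/(4\alpha)$ contraction. Substituting the scale estimates $a_t \leq c\alpha$, $b_t \leq c$, $\delta_t \leq c + \bar{\sigma}^2$ and $\|\Lambda_t\|_2 \leq \frac{\m}{(\m+1)\alpha}$, the precondition $\frac{1-\beta/(4\alpha)}{\alpha\beta} \geq \frac{\bar{\sigma}^2}{\m+1} + c\frac{\m+2}{\m+1} + \frac{\m}{(\m+1)^2}$ is calibrated precisely so that $\|\mathcal{A}_t\|_2 \leq 1 - \beta/(4\alpha) + \alpha\beta\|\Sigma_\star\|_2$ holds, after which a triangle inequality against $\mathcal{E}_t$ closes the proof. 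The delicate part is tracking the many noncommuting matrix factors (notably the rank-one pieces hidden in $V_t$ via $W_t$) without losing the sign structure that makes the bound close.
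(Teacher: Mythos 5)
Your overall route is the same as the paper's: expand $C_{t+1}w_{t+1}$ exactly from \cref{eq:rec_w,eq:rec_C}, split every occurrence of $B_t^\top B_t$ into $C_t^\top C_t$ plus $D_t^\top D_t$, collect all $D_t$-carrying terms into a remainder bounded by $M\|D_t\|_2^2\|w_t\|_2$ (this part of your sketch is fine), and bound in operator norm the matrix $\mathcal{A}_t$ acting on $C_tw_t$. The genuine gap is in the decisive step, the claim $\|\mathcal{A}_t\|_2\le 1-\frac{\beta}{4\alpha}+\alpha\beta\|\Sigma_\star\|_2$. You attribute the ``contractive budget'' $\frac{\beta}{4\alpha}$ to the scalar fact $\alpha\lambda(1-\alpha\lambda)\le\frac14$, i.e.\ $0\preceq\Lambda_t-\alpha\Lambda_t^2\preceq\frac{1}{4\alpha}\id_k$. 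That inequality points the wrong way: it yields the \emph{lower} bound $\id_k-\beta\Lambda_t+\alpha\beta\Lambda_t^2\succeq\bigl(1-\frac{\beta}{4\alpha}\bigr)\id_k$, i.e.\ it caps how much contraction the inner-loop term can produce, whereas your norm bound needs, in every eigendirection of $\Lambda_t$, that $\beta\lambda(1-\alpha\lambda)\ge\frac{\beta}{4\alpha}$ (or enough additional negative mass from the $a_t$, $\delta_t$ pieces to cover the deficit). The first requirement fails whenever $\lambda$ is small (indeed $\alpha\lambda(1-\alpha\lambda)=\frac14$ only at $\alpha\lambda=\frac12$), and the hypotheses give no lower bound on $\lambda_{\min}(\Lambda_t)$ or on $\|w_t\|_2$ — only upper bounds through $c$ — so the extra terms $-\beta a_t\id_k$ etc.\ cannot be guaranteed to fill the hole. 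Concretely, take $k=1$, $D_t=0$, $\Lambda_t=\epsilon$ small and $\|w_t\|_2^2=\eta^2<\frac{1}{4\alpha}$: the exact multiplicative factor tends to $1+\alpha\beta\|\Sigma_\star\|_2-\beta\eta^2$, which exceeds $1-\frac{\beta}{4\alpha}+\alpha\beta\|\Sigma_\star\|_2$ while your remainder term vanishes. Hence the precondition on $c$, which only upper-bounds $c$ and therefore the corrections, is not ``calibrated precisely'' to close this step as you assert; the absorption argument you defer is exactly the part that does not go through from the stated hypotheses.

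For comparison, the paper argues its key matrix (term (F)) via a two-sided Loewner sandwich, $0\preceq(\mathrm{F})\preceq\id_k-\beta\Lambda_t+\alpha\beta\Lambda_t^2$, using non-negativity of the inner products $d_1,d_2$ for the upper bound and the condition on $c$ for positive semi-definiteness, before extracting an operator-norm bound; note that this final extraction also hinges on the same directional fact you would need, so if you rewrite your proof you should either carry an explicit lower bound on $\lambda_{\min}(\Lambda_t)$ (which is what is actually available along the FO-ANIL trajectory via the lower-bound sequence $\Lambda_t^L$) or phrase the contraction factor in terms of $\lambda_{\min}(\Lambda_t)$ rather than the universal constant $\frac{\beta}{4\alpha}$.
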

\begin{proof}
    Let $\Omega_t \coloneqq C_t^\top C_t$.
    Expanding the recursion for $w_{t+1}$,
    \begin{equation*}
        \begin{split}
            C_{t+1} w_{t+1} &= {\color{vert} \underbrace{\color{black} C_{t+1} \left(\id_{k'} - \beta \Omega_t + \alpha \beta \Omega_t^2\right) w_t}_{\mytag{(A)}{tag:B_w_decay_1}}} \\
            &+\alpha \beta {\color{vert} \underbrace{\color{black} C_{t+1} \left(D_t^\top D_t - 2 \alpha D_t^\top D_t - \alpha^2 D_t^\top D_t \Omega_t - \alpha^2 \Omega_t D_t^\top D_t - \alpha^2 D_t^\top D_t D_t^\top D_t\right) w_{t}}_{\mytag{(B)}{tag:B_w_decay_2}}}.
        \end{split}
    \end{equation*}
    Since $ \|B_t\|_2^2 \leq \frac{1}{\alpha}$, there is some constant $M_B$ depending only on $\alpha$ such that
    \begin{equation*}
        M_B \|D_t\|_2^2 \|w_t\|_2 \geq \|{\hypersetup{linkcolor=vert}\ref{tag:B_w_decay_2}}\|_2.
    \end{equation*}
    Expanding term~{\hypersetup{linkcolor=vert}\ref{tag:B_w_decay_1}},
    \begin{align*}
       C_{t+1}&\left(\id_{k'} - \beta \Omega_t + \alpha \beta \Omega_t^2\right) w_{t} = \alpha \beta {\color{vert} \underbrace{\color{black} \left(I - \alpha \frac{\m + 1}{\m} \Lambda_t\right) \Sigma_\star C_t \left(\id_{k'} - \beta \Omega_t + \alpha \beta \Omega_t^2\right) w_t}_{\mytag{(C)}{tag:B_w_decay_3}}} \\
       &-{\color{vert} \underbrace{\color{black} C_t \left(\id_{k'} - \frac{\alpha^2 \beta}{\m} \delta_t \Omega_t - \beta\left(\id_{k'} - \alpha \Omega_t\right) w_t w_t^\top \left(\id_{k'} - \alpha \Omega_t\right) - \frac{\alpha^2\beta}{\m} \Omega_t w_t w_t^\top \Omega_t \right) \left(\id_{k'} - \beta \Omega_t + \alpha \beta \Omega_t^2\right) w_t}_{\mytag{(D)}{tag:B_w_decay_4}}} \\
       &+\alpha \beta {\color{vert} \underbrace{\color{black} C_t \left(D_t^\top D_t w_t w_t^\top \left(\id_{k'} - \bar{\alpha} \Omega_t\right) + \left(\id_{k'} - \bar{\alpha} \Omega_t\right) w_t w_t^\top D_t^\top D_t - \bar{\alpha} D_t^\top D_t w_t w_t^\top D_t^\top D_t \right)}_{\mytag{(E)}{tag:B_w_decay_5}}},
    \end{align*}
    where $\bar{\alpha} \coloneqq \alpha \frac{\m + 1}{\m}$.
    Similarly to term~{\hypersetup{linkcolor=vert}\ref{tag:B_w_decay_2}}, there is a constant $M_E$ depending only on $\alpha$ such that
    \begin{equation*}
        M_E \|D_t\|_2^2 \|w_t\|_2^2 \geq \|{\hypersetup{linkcolor=vert}\ref{tag:B_w_decay_5}}\|_2.
    \end{equation*}
    Bounding term~{\hypersetup{linkcolor=vert}\ref{tag:B_w_decay_3}},
    \begin{equation*}
        \begin{split}
            \|{\hypersetup{linkcolor=vert}\ref{tag:B_w_decay_3}}\|_2 &= \Bigg\|\left(I - \alpha \frac{\m + 1}{\m} \Lambda_t\right) \Sigma_\star \left(\id_{k} - \beta \Lambda_t + \alpha \beta \Lambda_t^2\right) C_t w_t\Bigg\|_2 \\
            &\leq \|I - \alpha \frac{\m + 1}{\m} \Lambda_t\|_2 \|\Sigma_\star\|_2 \|\id_{k} - \beta \Lambda_t + \alpha \beta \Lambda_t^2\|_2 \|C_t w_t\|_2 \\
            &= \|\Sigma_\star\|_2 \left(1 - \alpha \frac{\m + 1}{\m} \lambda_k(\Lambda_t)\right) \left(1 - \beta \lambda_k(\Lambda_t) + \alpha \beta \lambda_k(\Lambda_t)^2\right) \|C_t w_t\|_2.
        \end{split}        
    \end{equation*}
    Re-writing term~{\hypersetup{linkcolor=vert}\ref{tag:B_w_decay_4}},
    \begin{equation*}
        \begin{split}
            {\hypersetup{linkcolor=vert}\ref{tag:B_w_decay_4}} &= {\color{vert} \underbrace{\color{black}\left(\left(\id_k - \frac{\alpha^2\beta}{\m} \delta_t \Lambda_t \right) \left(\id_k - \beta \Lambda_t + \alpha \beta \Lambda_t^2\right) - \beta d_1 \left(\id_{k'} - \alpha \Lambda_t\right) - \frac{\alpha^2\beta}{\m} d_2 \Lambda_t\right)}_{\mytag{(F)}{tag:B_w_decay_6}}} C_t w_t \\
        \end{split}
    \end{equation*}
    where $d_1$ and $d_2$ are defined as
    \begin{equation*}
        d_1 \coloneqq \big\langle \left(\id_{k'} - \alpha \Omega_t\right) w_t, \left(\id_{k'} - \beta \Omega_t + \alpha \beta \Omega_t^2\right) w_t \big\rangle, \quad d_2 \coloneqq \big\langle \Omega_t w_t, \left(\id_{k} - \beta \Omega_t + \alpha \beta \Omega_t^2\right) w_t \big\rangle.
    \end{equation*}
    As all eigenvalues of $\Omega_t$ are in $\left[0, \frac{1}{\alpha}\frac{\m}{\m + 1}\right]$,
    \begin{equation*}
        \left(1 - \frac{\beta}{4 \alpha}\right)\id_{k'} \preceq \id_{k'} - \beta \Omega_t + \alpha \beta \Omega_t^2 \preceq \id_{k'},
    \end{equation*}
    and    
    \begin{align*}
        \frac{1}{\m + 1}\left(1 - \frac{\beta}{4 \alpha}\right) &\preceq \left(\id_{k'} - \alpha \Omega_t\right)\left(\id_{k'} - \beta \Omega_t + \alpha \beta \Omega_t^2\right) \preceq \id_{k'}, \\
        0 &\preceq \Omega_t \left(\id_{k} - \beta \Omega_t + \alpha \beta \Omega_t^2\right) \preceq \frac{1}{\alpha} \frac{\m}{\m + 1}.
    \end{align*}
    Therefore, $d_1$ and $d_2$ are non-negative and bounded from above as follows,
    \begin{align*}
        \frac{\alpha c \left(1 - \frac{\beta}{4 \alpha}\right)}{\m + 1} \leq d_1 \leq \alpha c, \quad 0 \leq d_2 \leq \frac{c \m}{\m + 1}.
    \end{align*}
    By assumptions,
    \begin{align*}
        \frac{\alpha^2\beta}{\m} \delta_t \big\|\Lambda_t \left(\id_{k} - \beta \Lambda_t + \alpha \beta \Lambda_t^2\right)\big\|_2 &\leq \frac{\alpha\beta}{\m + 1} \delta_t \leq \frac{\alpha \beta \left(c + \bar{\sigma}^2\right)}{\m + 1},
    \end{align*}
    and combining all the negative terms in~{\hypersetup{linkcolor=vert}\ref{tag:B_w_decay_6}},
    \begin{equation*}
         \frac{\alpha^2\beta}{\m} \delta_t \Lambda_t \left(\id_k - \beta \Lambda_t + \alpha \beta \Lambda_t^2\right) + \beta d_1 \left(\id_{k'} - \alpha \Lambda_t\right) + \frac{\alpha^2\beta}{\m} d_2 \Lambda_t \preceq \alpha \beta \left(\frac{c + \bar{\sigma}^2}{\m + 1} + c + \frac{\m}{(\m + 1)^2}\right) \id_{k'}.
    \end{equation*}
    Hence,~{\hypersetup{linkcolor=vert}\ref{tag:B_w_decay_6}} is bounded by below and above,
    \begin{equation*}
         0 \preceq {\hypersetup{linkcolor=vert}\ref{tag:B_w_decay_6}} \preceq \left(\id_k - \beta \Lambda_t + \alpha \beta \Lambda_t^2\right).
    \end{equation*}
    Thus, the norm of~{\hypersetup{linkcolor=vert}\ref{tag:B_w_decay_6}} is bounded by above,
    \begin{equation*}
        \|{\hypersetup{linkcolor=vert}\ref{tag:B_w_decay_6}}\|_2 \leq \left(1 - \frac{\beta}{4\alpha}\right).
    \end{equation*}
    Combining all the bounds,
    \begin{equation*}
        \|C_{t+1} w_{t+1}\|_2 \leq \left(1 - \frac{\beta}{4 \alpha} + \alpha \beta \|\Sigma_\star\|_2 \right) \|C_t w_t\|_2 + M \|D_t\|_2^2 \|w_t\|_2,
    \end{equation*}
    where $M$ is a constant depending only on $\alpha$.
\end{proof}
\begin{coro}\label{coro:decay_B_w}
    Assume that conditions of~\cref{lemma:decay_D} are satisfied for a fixed $c > 0$ for all times $t$.
    Then,~\cref{lemma:decay_D} directly implies that
    \begin{equation*}
        \lim_{t \to \infty} B_{\star, \perp}^\top B_t = \lim_{t \to \infty} D_t = 0.
    \end{equation*}
    Further, assume that conditions of~\cref{lemma:decay_C_w} is satisfied for all times $t$ and
    \begin{align}\label{eq:decay_B_w_cond}
        \frac{1}{\alpha^2} \geq 4 \|\Sigma_\star\|_2.
    \end{align}
    Then,~\cref{lemma:decay_D,lemma:decay_C_w} together imply that
    \begin{equation*}
        \lim_{t \to \infty} \|B_t w_t\|_2 = 0.
    \end{equation*}
\end{coro}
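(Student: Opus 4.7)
The plan is to chain \cref{lemma:decay_D} and \cref{lemma:decay_C_w} together through a standard scalar-sequence analysis and the orthogonal decomposition $B_t = B_\star C_t + B_{\star,\perp} D_t$. No new recursions are needed; the two lemmas already do the heavy lifting, and the corollary is essentially a clean convergence statement extracted from them.

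First, for the limit $B_{\star,\perp}^\top B_t \to 0$, I would set $a_t \coloneqq \|D_t D_t^\top\|_2 = \|D_t\|_2^2$. \cref{lemma:decay_D} gives the scalar recursion
\[
a_{t+1} \leq a_t\bigl(1 - c\, a_t\bigr), \qquad c \coloneqq \frac{\alpha^2\beta\bar{\sigma}^2}{\m}.
\]
Under the standing assumption $\|B_t\|_2^2 \leq 1/\alpha$ together with the step-size conditions of the theorem, the factor $1 - c\, a_t$ sits in $[0,1]$, so $(a_t)$ is non-negative and non-increasing and thus converges to some $a_\infty \geq 0$. Passing to the limit forces $c\, a_\infty^2 \leq 0$, hence $a_\infty = 0$, i.e.\ $D_t \to 0$.

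Second, for $B_t w_t \to 0$, I would use the orthogonal decomposition to write
\[
\|B_t w_t\|_2 \leq \|C_t w_t\|_2 + \|D_t\|_2 \|w_t\|_2.
\]
The second summand vanishes by the previous step and the uniform bound $\|w_t\|_2 \leq \sqrt{\alpha c}$ coming from the hypotheses of \cref{lemma:decay_C_w}. For the first, that lemma provides the affine recursion $\|C_{t+1} w_{t+1}\|_2 \leq \rho\, \|C_t w_t\|_2 + \epsilon_t$ with $\rho \coloneqq 1 - \tfrac{\beta}{4\alpha} + \alpha\beta\|\Sigma_\star\|_2$ and $\epsilon_t \coloneqq M\|D_t\|_2^2 \|w_t\|_2$. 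Condition \cref{eq:decay_B_w_cond} gives $\alpha\beta\|\Sigma_\star\|_2 \leq \tfrac{\beta}{4\alpha}$, so $\rho \leq 1$, while $\epsilon_t \to 0$ by the first step. A discrete Grönwall / telescoping argument then closes the loop and yields $\|C_t w_t\|_2 \to 0$.

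The main obstacle I anticipate is that the condition $\tfrac{1}{\alpha^2}\geq 4\|\Sigma_\star\|_2$ only guarantees $\rho \leq 1$ rather than strict contraction. I would address this by extracting the quantitative rate $a_t = \mathcal{O}(1/t)$ from the recursion in the first step (using $1/a_{t+1} - 1/a_t \geq c$), so $\epsilon_t = \mathcal{O}(1/t)$; combined with any strict slack in the step-size condition (giving $\rho < 1$), the standard bound $\|C_t w_t\|_2 \leq \rho^t\|C_0 w_0\|_2 + \sum_{s<t}\rho^{t-1-s}\epsilon_s$ is driven to zero by splitting the sum at a suitably large index where $\epsilon_s$ is small. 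The borderline case $\rho = 1$ would require a strengthening of \cref{lemma:decay_C_w} exploiting the $\bar{\sigma}^2/\m$-part of $\|V_t\|$ to recover strict contraction.
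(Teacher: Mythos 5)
Your proposal is correct and follows essentially the same route as the paper: the paper likewise deduces $D_t \to 0$ directly from the recursion of \cref{lemma:decay_D}, and then combines the decomposition $\|B_t w_t\|_2 \le \|C_t w_t\|_2 + \|D_t w_t\|_2$ with the contraction from \cref{lemma:decay_C_w} and a vanishing noise term, using an $\epsilon$-threshold argument where you use a geometric-series/Grönwall bound. Your worry about the borderline case $\rho = 1$ is shared by the paper's own proof, which reads the non-strict condition \cref{eq:decay_B_w_cond} as yielding $1 - \frac{\beta}{4\alpha} + \alpha\beta\|\Sigma_\star\|_2 < 1$ and divides by $\frac{1}{4\alpha^2} - \|\Sigma_\star\|_2$, so strict inequality is implicitly assumed there as well.
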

\begin{proof}
    The first result directly follows as by~\cref{lemma:decay_D},
    \begin{equation*}
        \lim_{t \to \infty} \|D_t\|_2 = 0.
    \end{equation*}
    Hence, for any $\epsilon > 0$, there exist a $t_\epsilon$ such that
    \begin{equation*}
        \forall t > t_\epsilon, \quad \|D_t\|_2 < \frac{\epsilon}{\sqrt{c \alpha}}.
    \end{equation*}
    Observe that for any $t$,
    \begin{equation*}
        \|B_{t+1} w_{t+1}\|_2 \leq \|B_\star C_{t+1} w_{t+1}\|_2 + \|B_{\star, \perp} D_{t+1} w_{t+1}\|_2 = \|C_{t+1} w_{t+1}\|_2 + \|D_{t+1} w_{t+1}\|_2. \\
    \end{equation*}
    Therefore, by~\cref{lemma:decay_C_w}, for any $t > t_\epsilon$,
    \begin{equation*}
        \begin{split}
            \|B_{t+1} w_{t+1}\|_2 &\leq \left(1 - \frac{\beta}{4\alpha} + \alpha \beta \|\Sigma_\star\|_2 \right) \|C_t w_t\|_2 + \epsilon^2 \frac{M}{\sqrt{c \alpha}} + \epsilon \\
            &\leq \left(1 - \frac{\beta}{4\alpha} + \alpha \beta \|\Sigma_\star\|_2 \right) \|B_t w_t\|_2 + \epsilon^2 \frac{M}{\sqrt{c \alpha}} + \epsilon.
        \end{split}
    \end{equation*}
    By~\cref{eq:decay_B_w_cond},
    \begin{equation*}
        \left(1 - \frac{\beta}{4\alpha} + \alpha \beta \|\Sigma_\star\|_2 \right) < 1,
    \end{equation*}
    and $\|B_t w_t\|_2$ is decaying for $t > t_\epsilon$ as long as
    \begin{equation*}
        \|B_t w_t\|_2 \geq \dfrac{\epsilon \left(1 + \epsilon \frac{M}{\sqrt{c \alpha}}\right)}{\alpha \beta \left(\frac{1}{4 \alpha^2} - \|\Sigma_\star\|_2\right)}.
    \end{equation*}
    Hence, for any $\epsilon' > 0$, it is possible to find $t_{\epsilon'} > t_\epsilon$ such that for all $t > t_{\epsilon'}$,
    \begin{equation*}
        \|B_t w_t\|_2 \leq \dfrac{\epsilon \left(1 + \epsilon \frac{M}{\sqrt{c \alpha}}\right)}{\alpha \beta \left(\frac{1}{4 \alpha^2} - \|\Sigma_\star\|_2\right)} + \epsilon'.
    \end{equation*}
    As $\epsilon$ and $\epsilon'$ are arbitrary,
    \begin{equation*}
        \lim_{t \to \infty} \|B_t w_t\|_2 = 0.
    \end{equation*}
\end{proof}
\begin{lem}\label{lemma:bound_U}
    Assume that $\|D_t\|_2^2 \leq \frac{1}{\alpha} \frac{c_1}{2(\m + 1)}$, $\|B_t\|_2^2 \leq \frac{1}{\alpha}$, $\|w_t\|_2^2 \leq \alpha c_2$ for constants $c_1, c_2 \in \R_+$.
    Then,
    \begin{equation*}
        \|U_t\|_2 \leq \alpha \left(c_2 \frac{\m + 1}{\m} + \frac{c_1\left(c_2 + \bar{\sigma}^2\right)}{2 \m (\m + 1)}\right).
    \end{equation*}
\end{lem}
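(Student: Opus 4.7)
The plan is to bound each of the two summands in $U_t = W_t + \frac{\alpha^2}{\m}\delta_t D_t^\top D_t$ separately using the triangle inequality and then add them up, checking that the sum matches the claimed right-hand side term by term.

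First I would bound $\|W_t\|_2$. The key observation is that under $\|B_t\|_2^2 \leq \alpha^{-1}$, every eigenvalue of $\alpha B_t^\top B_t$ lies in $[0,1]$, so $\id_{k'} - \alpha B_t^\top B_t$ is PSD with spectral norm at most $1$. Therefore the first summand of $W_t$ satisfies
\begin{equation*}
\bigl\|(\id_{k'} - \alpha B_t^\top B_t)\, w_t w_t^\top\, (\id_{k'} - \alpha B_t^\top B_t)\bigr\|_2 \leq \|w_t\|_2^2 \leq \alpha c_2,
\end{equation*}
and for the second summand, $\|B_t^\top B_t w_t w_t^\top B_t^\top B_t\|_2 \leq \|B_t\|_2^4 \|w_t\|_2^2 \leq \alpha^{-2} \cdot \alpha c_2 = c_2/\alpha$. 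Combining these gives $\|W_t\|_2 \leq \alpha c_2 + \frac{\alpha^2}{\m} \cdot \frac{c_2}{\alpha} = \alpha c_2 \frac{\m+1}{\m}$, which is exactly the first term of the stated bound.

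Next, I would bound the residual piece $\frac{\alpha^2}{\m}\delta_t \|D_t^\top D_t\|_2$. Using $\|B_t w_t\|_2^2 \leq \|B_t\|_2^2 \|w_t\|_2^2 \leq \alpha^{-1}\cdot \alpha c_2 = c_2$, we get $\delta_t \leq c_2 + \bar{\sigma}^2$. The hypothesis $\|D_t\|_2^2 \leq \frac{c_1}{2\alpha(\m+1)}$ then yields
\begin{equation*}
\frac{\alpha^2}{\m}\delta_t \|D_t^\top D_t\|_2 \leq \frac{\alpha^2}{\m}(c_2 + \bar{\sigma}^2) \cdot \frac{c_1}{2\alpha(\m+1)} = \frac{\alpha c_1 (c_2 + \bar{\sigma}^2)}{2\m(\m+1)},
\end{equation*}
which matches the second term in the claimed bound. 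Adding the two bounds completes the proof. There is no genuine obstacle here: the only subtlety is recognising that the assumption $\alpha \|B_t\|_2^2 \leq 1$ makes $\id_{k'} - \alpha B_t^\top B_t$ a contraction, which is what allows the first summand of $W_t$ to be controlled by $\|w_t\|_2^2$ alone rather than incurring an extra factor of $\|B_t\|_2^2$.
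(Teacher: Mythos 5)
Your proof is correct and follows essentially the same route as the paper: decompose $U_t$ by the triangle inequality, bound the $\frac{\alpha^2}{\m}\delta_t D_t^\top D_t$ term via $\delta_t \leq c_2 + \bar{\sigma}^2$ and the hypothesis on $\|D_t\|_2^2$, and control $\|W_t\|_2$ exactly as in the paper's \cref{lemma:bound_W} (which you simply inline rather than cite), using that $\id_{k'} - \alpha B_t^\top B_t$ is a contraction under $\|B_t\|_2^2 \leq \alpha^{-1}$.
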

\begin{proof}
    By definition of $U_t$,
    \begin{align*}
        \|U_t\|_2 &\leq {\color{vert}\underbrace{\color{black}\|W_t\|_2\vphantom{\frac{\alpha^2}{\m}}}_{\mytag{(A)}{term:bound_U_1}}}  + {\color{vert}\underbrace{\color{black}\frac{\alpha^2}{\m} \delta_t \|D_t^\top D_t\|_2}_{\mytag{(B)}{term:bound_U_2}}}.
    \end{align*}
    Term~{\hypersetup{linkcolor=vert}\ref{tag:decay_D_termA}} is bounded by~\cref{lemma:bound_W}.
    For the term~{\hypersetup{linkcolor=vert}\ref{tag:decay_D_termB}}, bounding $\delta_t$ by conditions on $B_t$ and $w_t$,
    \begin{equation*}
        \delta_t = \|B_t w_t\|_2^2 + \bar{\sigma}^2 \leq c_2 + \bar{\sigma}^2,
    \end{equation*}
    one has the following bound
    \begin{equation*}
        \frac{\alpha^2}{\m} \delta_t \|D_t^\top D_t\|_2 \leq \frac{\alpha c_1 \left(c_2 + \bar{\sigma}^2\right)}{2 \m \left(\m + 1\right)}.
    \end{equation*}
    Combining the two bounds yields the result,
    \begin{equation*}
        \begin{split}
            \|U_t\|_2 &\leq \alpha \left(c_2 \frac{\m + 1}{\m} + \frac{c_1\left(c_2 + \bar{\sigma}^2\right)}{2 \m (\m + 1)}\right). \\
        \end{split}
    \end{equation*}
\end{proof}
\begin{lem}\label{lemma:bound_W}
    Assume that $\|B_t\|_2^2 \leq \frac{1}{\alpha}$ and $\|w_t\|_2^2 \leq \alpha c$ for a constant $c \in \R_+$.
    Then,
    \begin{align*}
        \|W_t\|_2 \leq \alpha c\frac{\m + 1}{\m}. 
    \end{align*}
\end{lem}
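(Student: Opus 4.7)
The plan is to apply the triangle inequality to split $W_t$ into its two summands, and then bound each in operator norm by sub-multiplicativity. Concretely, I would write
\begin{equation*}
\|W_t\|_2 \leq \bigl\|(\id_{k'} - \alpha B_t^\top B_t)\, w_t w_t^\top\, (\id_{k'} - \alpha B_t^\top B_t)\bigr\|_2 + \frac{\alpha^2}{\m}\bigl\|B_t^\top B_t\, w_t w_t^\top\, B_t^\top B_t\bigr\|_2,
\end{equation*}
and for each term use $\|X Y X\|_2 \leq \|X\|_2^2\, \|Y\|_2$ together with the fact that $w_t w_t^\top$ has operator norm exactly $\|w_t\|_2^2 \leq \alpha c$.

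The two remaining ingredients are: (i) $\|\alpha B_t^\top B_t\|_2 \leq 1$, which is just the hypothesis $\|B_t\|_2^2 \leq \alpha^{-1}$; and (ii) $\|\id_{k'} - \alpha B_t^\top B_t\|_2 \leq 1$. For (ii), since $B_t^\top B_t$ is PSD with eigenvalues in $[0, \alpha^{-1}]$, the matrix $\alpha B_t^\top B_t$ has eigenvalues in $[0,1]$, hence $\id_{k'} - \alpha B_t^\top B_t$ has eigenvalues in $[0,1]$ as well and its operator norm is bounded by $1$. Combining these two facts with the bound on $\|w_t w_t^\top\|_2$ yields
\begin{equation*}
\|W_t\|_2 \leq \alpha c + \frac{1}{\m}\cdot \alpha c = \alpha c\, \frac{\m+1}{\m},
\end{equation*}
which is the claimed bound.

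There is no substantive obstacle in this proof: the only subtlety is recognizing that $\alpha$ is tuned precisely so that $\id_{k'} - \alpha B_t^\top B_t$ is a contraction under the hypothesis on $\|B_t\|_2$, which is exactly why the $\alpha^{-1}$ scaling appears in the regularity conditions of Section~\ref{app:regularity}. The factor $\frac{\m+1}{\m}$ arises as $1 + \frac{1}{\m}$, the first contribution being the infinite-sample part of $W_t$ and the second being the finite-sample correction $\frac{\alpha^2}{\m} B_t^\top B_t w_t w_t^\top B_t^\top B_t$.
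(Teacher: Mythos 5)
Your proposal is correct and follows essentially the same route as the paper: a triangle inequality over the two summands of $W_t$, with each term bounded using $\|w_t w_t^\top\|_2 = \|w_t\|_2^2 \leq \alpha c$ and the fact that $\alpha B_t^\top B_t$ and $\id_{k'} - \alpha B_t^\top B_t$ both have operator norm at most $1$ under the hypothesis $\|B_t\|_2^2 \leq \alpha^{-1}$. The paper phrases the per-term bounds as $\|X w_t w_t^\top X\|_2 = \|X w_t\|_2^2$ rather than via $\|XYX\|_2 \leq \|X\|_2^2\|Y\|_2$, but this is only a cosmetic difference.
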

\begin{proof}
    By using $0 \preceq B_t^\top B_t \preceq \frac{1}{\alpha} \id_{k'}$,
    \begin{align*}
        \|(\id_{k'} - \alpha B_t^\top B_t) w_t w_t^\top (\id_{k'} - \alpha B_t^\top B_t)\|_2 &= \|(\id_{k'} - \alpha B_t^\top B_t) w_t\|_2^2 \leq \|w_t\|_2^2 \leq \alpha c, \\
        \|B_t^\top B_t w_t w_t^\top B_t^\top B_t\|_2 &= \|B_t^\top B_t w_t\|_2^2 \leq \frac{1}{\alpha^2}\|w_t\|_2^2 \leq \frac{c}{\alpha},
    \end{align*}
    and the result follows by
    \begin{align*}
        \|W_t\|_2 \leq \|(\id_{k'} - \alpha B_t^\top B_t) w_t w_t^\top (\id_{k'} - \alpha B_t^\top B_t)\|_2 + \frac{\alpha^2}{\m} \|B_t^\top B_t w_t w_t^\top B_t^\top B_t\|_2 \leq \alpha c \frac{\m + 1}{\m}.
    \end{align*}
\end{proof}

\subsection{Bounds on iterates and monotonicity}
\label{app:monotonicity}

The recursion for $\Lambda_t$ given in \cref{eq:rec_L} has the following main term:
\begin{align*}
    (\id_{k} + \alpha \beta R_t(\Lambda_t)) \Lambda_t (\id_{k} + \alpha \beta R_t(\Lambda_t))^\top.
\end{align*}
\cref{lemma:single_step_upper} bounds $\Lambda_{t+1}$ from above by this term, i.e., terms involving $U_t$ are negative.
On the other hand, \cref{lemma:single_step_lower} bounds $\Lambda_{t+1}$ from below with the expression
\begin{align}\label{eq:update_step}
    (\id_{k} + \alpha \beta R_t(\Lambda_t) - \alpha \beta \gamma_t \id_{k}) \Lambda_t (\id_{k} + \alpha \beta R_t(\Lambda_t) - \alpha \beta \gamma_t \id_{k})^\top,
\end{align}
where $\gamma_t \in \R_+$ is a scalar such that $\|U_t\|_2 \leq \alpha \gamma_t$.
Lastly,~\cref{lemma:partial_order} shows that updates of the form of~\cref{eq:update_step} enjoy a monotonicity property which allows the control of $\Lambda_t$ over time from above and below by constructing sequences of matrices, as described in~\cref{app:bounds}.

\begin{lem}
    \label{lemma:single_step_upper}
    Suppose that $\|U_t\|_2 \leq \frac{1}{\beta}$.
    Then,
    \begin{equation*}
        \Lambda_{t+1} \preceq \left(\id_{k} + \alpha \beta R_t(\Lambda_t)\right) \Lambda_t \left(\id_{k} + \alpha \beta R_t(\Lambda_t)\right)^\top.
    \end{equation*}
\end{lem}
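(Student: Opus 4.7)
}

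The plan is to expand $\Lambda_{t+1}$ from the compact $C_t$-update, reduce the goal to a single Loewner comparison using the hypothesis $\|U_t\|_2 \leq 1/\beta$, and then close the argument by exploiting the polynomial structure of $M$ in $\Lambda_t$ that the isotropic covariance $\Sigma_\star = c\,\id_k$ (under \cref{ass:random}) provides. First I would consolidate \cref{eq:rec_C} into the compact form $C_{t+1} = M C_t - \beta C_t U_t$ with $M \coloneqq \id_k + \alpha\beta R_t(\Lambda_t)$: the left-factor on $C_t$ assembles exactly into $M$, while the right-factor collapses to $\beta C_t U_t$ after substituting $W_t$ and $U_t = W_t + \frac{\alpha^2}{\m}\delta_t D_t^\top D_t$. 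Squaring gives
\begin{equation*}
\Lambda_{t+1} = M\Lambda_t M^\top - \beta\bigl(M C_t U_t C_t^\top + C_t U_t C_t^\top M^\top\bigr) + \beta^2 C_t U_t^2 C_t^\top,
\end{equation*}
so the lemma reduces to the Loewner inequality $\beta^2 C_t U_t^2 C_t^\top \preceq \beta\bigl(M C_t U_t C_t^\top + C_t U_t C_t^\top M^\top\bigr)$.

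Second, the hypothesis $\|U_t\|_2 \leq 1/\beta$ together with $U_t \succeq 0$ (which is built into the definition of $U_t$ as a sum of two manifestly PSD contributions) yields $0 \preceq \beta U_t \preceq \id_{k'}$; by functional calculus applied to the commuting pair $(\beta U_t, \id_{k'} - \beta U_t)$ this gives $\beta U_t^2 \preceq U_t$, and hence $\beta^2 C_t U_t^2 C_t^\top \preceq \beta C_t U_t C_t^\top$. It therefore suffices to show the simpler bound
\begin{equation*}
C_t U_t C_t^\top \preceq M C_t U_t C_t^\top + C_t U_t C_t^\top M^\top.
\end{equation*}

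Third, under \cref{ass:random}, $R_t$ is affine in its matrix argument, producing the intertwining identity $R_t(\Lambda_t) C_t = C_t R_t(C_t^\top C_t)$, hence $M C_t = C_t \tilde M$ with $\tilde M \coloneqq \id_{k'}+\alpha\beta R_t(C_t^\top C_t)$. Writing the SVD-type factorisation $C_t = \mathrm{diag}(\sqrt{\lambda_i})\, V^\top$ in the eigenbasis of $\Lambda_t$, the residual rewrites in Schur-product form $\mathrm{diag}(\sqrt{\lambda_i})(A \circ \hat U)\mathrm{diag}(\sqrt{\lambda_i})$ with $A_{ij} = m_i + m_j - 1$, where $m_i$ are the (diagonal) eigenvalues of $M$ and $\hat U \coloneqq V^\top U_t V \succeq 0$. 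Using the identity $A \circ \hat U = M\hat U M - (M - \id_k)\hat U(M - \id_k)$, positive semidefiniteness of the residual follows from the operator inequality $(M-\id_k)\hat U(M-\id_k) \preceq M \hat U M$, which by Douglas' factorisation theorem reduces to the norm bound $\|\id_k - M^{-1}\|_2 \leq 1$.

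The main obstacle is the last step: non-commutativity of $M$ with $\hat U$ prevents a purely scalar argument on the diagonal entries, and the bound $\|\id_k - M^{-1}\|_2 \leq 1$ does not follow from $\|U_t\|_2 \leq 1/\beta$ alone. I would close it by invoking the ambient regularity framework of \cref{app:regularity}, specifically conditions~3 and~4 of \cref{thm:main_detailed}, which together force $\|\alpha\beta R_t(\Lambda_t)\|_2 \leq 1/2$ and hence $M \succeq \tfrac{1}{2}\id_k$, giving the required Douglas contraction.
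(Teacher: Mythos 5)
Your first two steps are right and match the paper's starting point: $C_{t+1}=MC_t-\beta C_tU_t$ with $M=\id_k+\alpha\beta R_t(\Lambda_t)$, the exact expansion of $\Lambda_{t+1}$, $U_t\succeq 0$, and $\beta U_t^2\preceq U_t$. The gap is in the closing third. After discarding the slack you aim at the stronger claim $C_tU_tC_t^\top \preceq MC_tU_tC_t^\top+C_tU_tC_t^\top M^\top$, i.e.\ (in your own notation) positive semidefiniteness of $A\circ\hat U$ with $A_{ij}=m_i+m_j-1$, and this is false whenever the $m_i$ are not all equal: the $2\times 2$ block of $A$ on indices $i\neq j$ has determinant $(2m_i-1)(2m_j-1)-(m_i+m_j-1)^2=-(m_i-m_j)^2<0$, so for a rank-one $\hat U=\hat u\hat u^\top$ with $\hat u$ supported on two coordinates with $m_i\neq m_j$ the residual $\mathrm{diag}(\sqrt{\lambda_i})(A\circ\hat U)\mathrm{diag}(\sqrt{\lambda_i})$ has a strictly negative eigenvalue, no matter how close $M$ is to $\id_k$. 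Consequently your proposed rescue cannot work: $(M-\id_k)\hat U(M-\id_k)\preceq M\hat UM$ is, after congruence by $M^{-1}$, exactly $(\id_k-M^{-1})\hat U(\id_k-M^{-1})\preceq \hat U$, and a norm bound $\|\id_k-M^{-1}\|_2\leq 1$ does not imply $K\hat UK\preceq\hat U$ when $K$ and $\hat U$ do not commute (e.g.\ $K=\mathrm{diag}(1,0)$ and $\hat U$ the all-ones matrix); Douglas' lemma requires a range-inclusion/intertwining condition, not merely a contraction bound, so importing $M\succeq\tfrac12\id_k$ from conditions 3--4 of \cref{thm:main_detailed} does not give the claimed step. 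Note also that the obstruction is quadratic in $M-\id_k$ while the slack you threw away in $\beta U_t^2\preceq U_t$ can vanish (rank-one $U_t$ with $\|U_t\|_2=1/\beta$), so the two reductions cannot be decoupled the way you propose: any argument along your route must keep $\beta^2C_tU_t^2C_t^\top$ and the cross terms together rather than compare $C_tU_tC_t^\top$ alone against the symmetrised product.

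The paper's own proof takes a different, two-line route that never isolates this non-commutative comparison: it regroups the update as $\Lambda_{t+1}=M\left(\Lambda_t-\beta C_tU_tC_t^\top\right)M^\top-\beta C_tU_tC_t^\top+\beta^2C_tU_t^2C_t^\top$ and then uses only $C_t(U_t-\beta U_t^2)C_t^\top\succeq 0$ and $C_tU_tC_t^\top\succeq 0$. For your bookkeeping, that regrouped expression and your exact expansion differ by precisely $\beta(M-\id_k)C_tU_tC_t^\top(M-\id_k)^\top=\alpha^2\beta^3R_t(\Lambda_t)C_tU_tC_t^\top R_t(\Lambda_t)^\top$, which is exactly the higher-order term your route is struggling to dominate; indeed, with a rank-one $U_t$ of norm $1/\beta$ aligned so that $C_tu$ is not an eigenvector of $\Lambda_t$, the displayed conclusion fails by a term of that order, so it cannot be derived from the hypothesis $\|U_t\|_2\leq 1/\beta$ alone. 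So your diagnosis of where the difficulty sits is accurate, but the Schur-product/Douglas step you propose does not close it, and the argument as proposed has a genuine gap.
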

\begin{proof}
    As $\|U_t\|_2 \leq \frac{1}{\beta}$,
    \begin{equation*}\label{eq:bound_Ut}
        C_t U_t C_t^\top - \beta C_t U_t^2 C_t^\top = C_t (U_t - \beta U_t^2) C_t^\top \succeq 0.
    \end{equation*}
    Using \cref{eq:bound_Ut},
    \begin{equation*}
        \begin{split}
            \Lambda_{t+1} &= \left(\id_{k} + \alpha \beta R_t(\Lambda_t)\right) \left(\Lambda_t - \beta C_t U_t C_t^\top\right) \left(\id_{k} + \alpha \beta R_t(\Lambda_t)\right)^\top - \beta C_t U_t C_t^\top + \beta^2 C_t U_t^2 C_t^\top \\
            &\preceq \left(\id_{k} + \alpha \beta R_t(\Lambda_t)\right) \left(\Lambda_t - \beta C_t U_t C_t^\top\right) \left(\id_{k} + \alpha \beta R_t(\Lambda_t)\right)^\top \\
            &\preceq \left(\id_{k} + \alpha \beta R_t(\Lambda_t)\right) \Lambda_t \left(\id_{k} + \alpha \beta R_t(\Lambda_t)\right)^\top. \\
        \end{split}
    \end{equation*}
\end{proof}
\begin{lem}
    \label{lemma:single_step_lower}
    Let $\gamma_t$ be a scalar such that $\|U_t\|_2 \leq \alpha \gamma_t \leq \frac{1}{2\beta}$.
    Then,
    \begin{equation*}
        (\id_{k} + \alpha\beta R_t(\Lambda_t) - \alpha \beta \gamma_t \id_{k}) \Lambda_t (\id_{k} + \alpha\beta R_t(\Lambda_t) - \alpha \beta \gamma_t \id_{k})^\top \preceq \Lambda_{t+1}.
    \end{equation*}
\end{lem}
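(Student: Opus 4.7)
The plan is to set $A_t := \alpha\gamma_t\,\id_{k'} - U_t$, which is PSD by the first hypothesis, and rewrite the recursion $C_{t+1} = M_t C_t - \beta C_t U_t$ as
\[C_{t+1} \;=\; PC_t + \beta C_t A_t, \qquad P := M_t - \alpha\beta\gamma_t\,\id_k.\]
Taking Gram matrices and cancelling the leading term gives
\[\Lambda_{t+1} - P\Lambda_t P^\top \;=\; \beta\bigl(PE + EP^\top\bigr) + \beta^2 F, \qquad E := C_t A_t C_t^\top, \quad F := C_t A_t^2 C_t^\top,\]
so the claim reduces to the matrix inequality $\beta(PE + EP^\top) + \beta^2 F \succeq 0$.

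To handle it, I would exploit the commutation of $P$ and $\Lambda_t$: under \cref{ass:random}, $R_t(\Lambda_t)$ is a polynomial in $\Lambda_t$, so $P = \id_k + \alpha\beta R_t(\Lambda_t) - \alpha\beta\gamma_t\,\id_k$ is simultaneously diagonalisable with $\Lambda_t$. In that basis, write $\Lambda_t = \mathrm{diag}(\lambda_i)$ and $P = \mathrm{diag}(p_i)$, and let $c_i^\top \in \R^{k'}$ be the rows of $C_t$, which are mutually orthogonal with $\|c_i\|_2^2 = \lambda_i$. A direct entrywise computation then gives, for every $v \in \R^k$,
\[v^\top\bigl[\Lambda_{t+1} - P\Lambda_t P^\top\bigr] v \;=\; \bigl\|\tilde c + \beta A_t c\bigr\|_2^2 - \|\tilde c\|_2^2, \qquad c := \sum_i v_i c_i,\ \ \tilde c := \sum_i v_i p_i c_i.\]
The second hypothesis $\alpha\gamma_t \leq 1/(2\beta)$ yields $\beta A_t \preceq (1/2)\,\id_{k'}$, and the regularity conditions of the main theorem (which force $\alpha\beta\|R_t\|_2$ small enough) ensure $P \succeq 0$, i.e.\ all $p_i \geq 0$. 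These two facts make $p_i\,\id_{k'} + \beta A_t$ PSD and dominating $p_i\,\id_{k'}$ for each $i$; rewriting $\tilde c + \beta A_t c = \sum_i v_i\,(p_i\,\id_{k'} + \beta A_t) c_i$ and re-collecting cross terms, the required non-negativity $\|\tilde c + \beta A_t c\|_2 \geq \|\tilde c\|_2$ follows.

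The main obstacle is that the cross term $PE + EP^\top$ is not automatically PSD since $P$ and $E$ generally do not commute, and a blunt operator Young's inequality is too lossy to be absorbed by $\beta^2 F$. Turning the argument into a genuine matrix PSD statement requires exploiting the structural link between $A_t$ and $C_t$: since $U_t = W_t + (\alpha^2/\m)\,\delta_t\, D_t^\top D_t$ is itself built from the same objects ($w_t$, $\Omega = B_t^\top B_t$, $D_t$) that determine $C_t$, the quantity $\langle \tilde c, A_t c\rangle$ is not arbitrary but becomes sign-controlled once $P \succeq 0$. Making this dependence explicit, and verifying that the threshold $\alpha\beta\gamma_t \leq 1/2$ is exactly what is needed to absorb the indefinite part of the cross term into $\beta^2 F$, is where the difficulty concentrates.
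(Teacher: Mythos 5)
Your opening reduction is correct and is in fact the same starting point as the paper's proof: with $A_t \coloneqq \alpha\gamma_t \id_{k'} - U_t \succeq 0$ and $C_{t+1} = PC_t + \beta C_t A_t$, $P \coloneqq \id_k + \alpha\beta R_t(\Lambda_t) - \alpha\beta\gamma_t \id_k$, one gets $\Lambda_{t+1} - P\Lambda_t P^\top = \beta\left(PE + EP^\top\right) + \beta^2 F$ with $E = C_tA_tC_t^\top$, $F = C_tA_t^2C_t^\top$. But the positive semi-definiteness of this right-hand side \emph{is} the content of the lemma, and your proposal never establishes it. The ``re-collecting cross terms'' step is fallacious: from $p_i\id_{k'} + \beta A_t \succeq p_i\id_{k'} \succeq 0$ you only get $\|(p_i\id_{k'} + \beta A_t)c_i\|_2 \geq p_i\|c_i\|_2$ for each $i$ separately, and since the vectors $(p_i\id_{k'} + \beta A_t)c_i$ are no longer orthogonal, termwise domination says nothing about $\|\sum_i v_i(p_i\id_{k'} + \beta A_t)c_i\|_2$ versus $\|\sum_i v_ip_ic_i\|_2$; the offending inner products for $i \neq j$ are exactly the entries of $\mathrm{Sym}(PE)$ that your last paragraph admits are uncontrolled. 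Moreover, the qualitative facts you invoke ($P\succeq 0$ and diagonal in the eigenbasis of $\Lambda_t$, $0 \preceq \beta A_t \preceq \tfrac12\id_{k'}$, orthogonal rows $c_i$) cannot suffice on their own: taking $C_t = \id$, $P = \mathrm{diag}(p,0)$ with $p$ large and $A_t$ with a nonzero off-diagonal entry makes $\beta(PA_t + A_tP) + \beta^2A_t^2$ indefinite. So any valid argument must use quantitative information beyond what your sketch brings in, and your proposal also needs to import conditions from \cref{thm:main_detailed} (to get $P \succeq 0$ and commutation of $R_t(\Lambda_t)$ with $\Lambda_t$) that the lemma itself does not assume. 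As written, the proposal is a correct reformulation followed by an unproven claim.

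The idea you are missing is how the paper exploits the second hypothesis $\alpha\gamma_t \leq \tfrac{1}{2\beta}$: not merely as $\beta A_t \preceq \tfrac12\id_{k'}$, but through the scalar map $x \mapsto x - \beta x^2$, which is increasing on $[0,\tfrac{1}{2\beta}]$. Since $U_t \succeq 0$ and $\|U_t\|_2 \leq \alpha\gamma_t \leq \tfrac{1}{2\beta}$, this gives the operator bound $U_t - \beta U_t^2 \preceq \left(\alpha\gamma_t - \alpha^2\beta\gamma_t^2\right)\id_{k'}$, hence $C_t\left(U_t - \beta U_t^2\right)C_t^\top \preceq \left(\alpha\gamma_t - \alpha^2\beta\gamma_t^2\right)\Lambda_t$. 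The paper then regroups the needed inequality $2\beta\,\mathrm{Sym}\!\left(M\left(\alpha\gamma_t\Lambda_t - C_tU_tC_t^\top\right)\right) \succeq \beta^2\left(\alpha^2\gamma_t^2\Lambda_t - C_tU_t^2C_t^\top\right)$, with $M = \id_k + \alpha\beta R_t(\Lambda_t)$, into a conjugation of the PSD matrix $C_tA_tC_t^\top$ by $M$ plus the PSD remainder $C_t\left[\left(\alpha\gamma_t - \alpha^2\beta\gamma_t^2\right)\id_{k'} - \left(U_t - \beta U_t^2\right)\right]C_t^\top$; no positivity of $P$, no commutation with $\Lambda_t$, and no eigenbasis computation is used, so the argument runs under the lemma's two stated hypotheses alone. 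Closing your route would require an additional quantitative ingredient of this type (or an explicit well-conditioning argument for $P$), which is precisely the step your sketch leaves open.
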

\begin{proof}
    By using $\|U_t\|_2 \leq \alpha \gamma_t$,
    \begin{equation*}
        \alpha \gamma_t \Lambda_t - C_t U_t C_t^\top = C_t (\alpha \gamma_t \id_{k} - U_t) C_t^\top \succeq 0. 
    \end{equation*}
    Moreover, as
    \begin{equation*}
        x \mapsto x - \beta x^2,
    \end{equation*}
    is an increasing function in $[0, \frac{1}{2\beta}]$, the maximal eigenvalue of
    \begin{equation*}
        U_t - \beta U_t^2
    \end{equation*}
    is $s - \beta s^2 \leq \alpha \gamma_t - \alpha^2 \beta \gamma_t^2$ where $s$ is the maximal eigenvalue $U_t$.
    Hence,
    \begin{equation*}
        \left(\alpha \gamma_t - \alpha^2 \beta \gamma_t^2\right) \id_{k'} - \left(U_t - \beta U_t^2\right) \succeq 0.
    \end{equation*}
    Therefore, the following expression is positive semi-definite,
    \begin{align*}
        2\mathrm{Sym} &\left(\left(\id_{k} + \alpha \beta R_t(\Lambda_t)\right) \left(\alpha \gamma_t \Lambda_t - C_t U_t C_t^\top\right)\right) - \beta (\alpha^2 \gamma_t^2 \Lambda_t - C_t U_t^2 C_t^\top) \\
        &= \left(\id_{k} + \alpha \beta R_t(\Lambda_t)\right) \left(\alpha \gamma_t \Lambda_t - C_t U_t C_t^\top\right) \left(\id_{k} + \alpha \beta R_t(\Lambda_t)\right)^\top \\
        &\quad + \left(\left(\alpha \gamma_t \Lambda_t - C_t U_t C_t^\top\right) - \beta \left(\alpha^2 \gamma_t^2 \Lambda_t - C_t U_t^2 C_t^\top\right) \right) \\
        &\succeq C_t \left[\left(\alpha\gamma_t - \alpha^2 \beta \gamma_t^2\right)\id_{k'} - \left(U_t - \beta U_t^2\right)\right] C_t^\top.
    \end{align*}
    The result follows by
    \begin{align*}
        \Lambda_{t+1} &= \left(\id_{k} + \alpha \beta R_t(\Lambda_t)\right) \Lambda_t \left(\id_{k} + \alpha \beta R_t(\Lambda_t)\right)^\top - 2 \beta \mathrm{Sym}\left(\left(\id_{k} + \alpha\beta R_t(\Lambda_t)\right) C_t U_t C_t^\top\right) - \beta^2 C_t U_t^2 C_t^\top \\
        &\succeq \left(\id_{k} + \alpha \beta R_t(\Lambda_t)\right) \Lambda_t \left(\id_{k} + \alpha \beta R_t(\Lambda_t)\right)^\top - 2 \alpha \beta \gamma_t \mathrm{Sym}\left(\left(\id_{k} + \alpha\beta R_t(\Lambda_t)\right) \Lambda_t\right) - \alpha^2 \beta^2 \gamma_t^2 \Lambda_t \\
        &= (\id_{k} + \alpha\beta R_t(\Lambda_t) - \alpha \beta \gamma_t \id_{k}) \Lambda_t (\id_{k} + \alpha\beta R_t(\Lambda_t) - \alpha \beta \gamma_t \id_{k})^\top.
    \end{align*}
\end{proof}
\begin{lem}\label{lemma:single_step_lower_refined}
    Let $C_t = \Psi_t S_t \Gamma_t^\top$ be the (thin) SVD decomposition of $C_t$ and let $\gamma_t$ be a scalar such that $\|\Gamma_t^\top U_t \Gamma_t\|_2 \leq \alpha \gamma_t \leq \frac{1}{2\beta}$.
    Then,
    \begin{equation*}
        (\id_{k} + \alpha\beta R_t(\Lambda_t) - \alpha \beta \gamma_t \id_{k}) \Lambda_t (\id_{k} + \alpha\beta R_t(\Lambda_t) - \alpha \beta \gamma_t \id_{k})^\top \preceq \Lambda_{t+1}.
    \end{equation*}
\end{lem}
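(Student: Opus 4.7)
The plan is to mirror the proof of \cref{lemma:single_step_lower}, noting that in the recursion for $\Lambda_{t+1}$ given by \cref{eq:rec_L}, the matrix $U_t$ only ever appears sandwiched between $C_t$ and $C_t^\top$. Writing the thin SVD $C_t = \Psi_t S_t \Gamma_t^\top$ (with $\Gamma_t^\top \Gamma_t = \id_r$ for $r = \mathrm{rank}(C_t)$), the two sandwiched quantities factor as
\begin{align*}
    C_t U_t C_t^\top &= \Psi_t S_t \,\tilde{U}_t\, S_t \Psi_t^\top, \\
    C_t U_t^2 C_t^\top &= \Psi_t S_t \bigl(\tilde{U}_t^2 + N_t\bigr) S_t \Psi_t^\top,
\end{align*}
where $\tilde{U}_t \coloneqq \Gamma_t^\top U_t \Gamma_t$ and $N_t \coloneqq \Gamma_t^\top U_t (\id_{k'} - \Gamma_t \Gamma_t^\top) U_t \Gamma_t \succeq 0$ (obtained by inserting $\id_{k'} = \Gamma_t \Gamma_t^\top + (\id_{k'} - \Gamma_t \Gamma_t^\top)$ between the two copies of $U_t$). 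Hence all quantities of interest are governed by the projected matrix $\tilde{U}_t$, up to a positive semidefinite correction $N_t$ that points in the favourable direction.

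I would next re-derive the two positive semidefiniteness ingredients driving the original proof, with $\tilde{U}_t$ playing the role of $U_t$. First, the hypothesis $\|\tilde{U}_t\|_2 \leq \alpha \gamma_t$ gives $\alpha \gamma_t \id_r - \tilde{U}_t \succeq 0$, whence
\begin{equation*}
    \alpha \gamma_t \Lambda_t - C_t U_t C_t^\top = \Psi_t S_t (\alpha \gamma_t \id_r - \tilde{U}_t) S_t \Psi_t^\top \succeq 0.
\end{equation*}
Second, since $U_t \succeq 0$ implies $\tilde{U}_t \succeq 0$ and $\|\tilde{U}_t\|_2 \leq \alpha \gamma_t \leq \frac{1}{2\beta}$, functional calculus on $\tilde{U}_t$ combined with the monotonicity of $x \mapsto x - \beta x^2$ on $[0, \tfrac{1}{2\beta}]$ yields the commuting factorization
\begin{equation*}
    (\alpha\gamma_t \id_r - \tilde{U}_t) - \beta(\alpha^2\gamma_t^2 \id_r - \tilde{U}_t^2) = (\alpha\gamma_t \id_r - \tilde{U}_t)\bigl(\id_r - \beta(\alpha\gamma_t \id_r + \tilde{U}_t)\bigr) \succeq 0.
\end{equation*}
Combined with $N_t \succeq 0$, this lifts to $(\alpha\gamma_t - \alpha^2\beta\gamma_t^2)\Lambda_t - (C_t U_t C_t^\top - \beta C_t U_t^2 C_t^\top) \succeq 0$.

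With these two bounds in place, the proof concludes by the same algebraic manipulation as in \cref{lemma:single_step_lower}: setting $A \coloneqq \id_k + \alpha \beta R_t(\Lambda_t)$ and expanding
\begin{equation*}
    \Lambda_{t+1} - (A - \alpha \beta \gamma_t \id_k) \Lambda_t (A - \alpha \beta \gamma_t \id_k)^\top = \beta \bigl[2\, \mathrm{Sym}\bigl(A(\alpha \gamma_t \Lambda_t - C_t U_t C_t^\top)\bigr) - \beta (\alpha^2 \gamma_t^2 \Lambda_t - C_t U_t^2 C_t^\top) \bigr],
\end{equation*}
the original argument applies verbatim to show this difference is positive semidefinite. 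The only delicate point, and the place where the refinement really pays off, is tracking the sign of the correction $N_t$: since $C_t U_t^2 C_t^\top$ enters $\Lambda_{t+1}$ with a $+\beta^2$ coefficient, the positivity of $N_t$ tightens rather than loosens the lower bound, so dropping it to reduce to the tight version involving only $\tilde{U}_t$ is consistent with the direction of the desired inequality. No new scalar conditions on $\alpha, \beta$ are required beyond $\alpha \gamma_t \leq \frac{1}{2\beta}$.
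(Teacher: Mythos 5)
Your proposal is correct and takes essentially the same route as the paper, whose proof consists exactly of the observation $\alpha \gamma_t \Lambda_t - C_t U_t C_t^\top = \Psi_t S_t \left(\alpha \gamma_t \id_{k'} - \Gamma_t^\top U_t \Gamma_t\right) S_t \Psi_t^\top \succeq 0$ followed by rerunning the argument of \cref{lemma:single_step_lower}. Your additional bookkeeping for the $U_t^2$ term, writing $\Gamma_t^\top U_t^2 \Gamma_t = \left(\Gamma_t^\top U_t \Gamma_t\right)^2 + N_t$ with $N_t \succeq 0$ and noting that it enters with the favourable sign, is precisely the detail the paper leaves implicit in ``the same argument''.
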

\begin{proof}
    It is sufficient to observe that
    \begin{equation*}
        \alpha \gamma_t \Lambda_t - C_t U_t C_t^\top = \Psi_t S_t \left(\alpha \gamma_t \id_{k'} - \Gamma_t^\top U_t \Gamma_t\right) S_t \Psi_t^\top \succeq 0,
    \end{equation*}
    and use the same argument as in the proof of Lemma~\ref{lemma:single_step_lower}.
\end{proof}
\begin{lem}
    \label{lemma:partial_order}
    For non-negative scalars $\tau, \gamma$, let $f(\cdot; \tau, \gamma): \sym{k} \to \sym{k}$ be defined as follows,
    \begin{equation*}
        f(\Lambda; \tau, \gamma) \coloneqq (\id_k + \alpha \beta R(\Lambda, \tau) - \alpha \beta \gamma \id_k) \Lambda (\id_k + \alpha \beta R(\Lambda, \tau) - \alpha \beta \gamma \id_k)^\top.
    \end{equation*}
    Then, $f(\cdot; \tau, \gamma)$ preserves the partial order between any $\Lambda, \Lambda'$ that commutes with each other and $\Sigma_\star$, i.e.,
    \begin{equation*}
        \frac{1}{\alpha} \frac{\m}{\m + 1} \id_k \succeq \Lambda \succeq \Lambda' \succeq 0 \implies f(\Lambda; \tau, \gamma) \succeq f(\Lambda'; \tau, \gamma),
    \end{equation*}
    when the following condition holds,
    \begin{equation*}
        1 - \alpha \beta \gamma \geq 5 \alpha \beta (\|\Sigma_\star\|_2 + \frac{\bar{\sigma}^2 + \tau}{\m + 1}).
    \end{equation*}
\end{lem}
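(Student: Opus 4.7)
The plan is to exploit the commutativity hypothesis to reduce the matrix inequality to an eigenvalue-wise scalar comparison, and then carry out a one-dimensional monotonicity analysis on the resulting cubic.

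Since $\Lambda, \Lambda', \Sigma_\star$ are symmetric and pairwise commute, they admit a common orthonormal eigenbasis $v_1, \ldots, v_k$. Let $\lambda_i, \lambda'_i, s_i$ denote the corresponding eigenvalues, so that by hypothesis $0 \leq \lambda'_i \leq \lambda_i \leq \frac{1}{\alpha}\frac{\m}{\m+1}$ and $0 \leq s_i \leq \|\Sigma_\star\|_2$. Next I would observe that $R(\Lambda,\tau)$ is a polynomial in $\Lambda$ and $\Sigma_\star$, hence symmetric and commuting with $\Lambda$; the same holds for $M(\Lambda) \coloneqq \id_k + \alpha\beta R(\Lambda,\tau) - \alpha\beta\gamma\,\id_k$. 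Consequently
\begin{equation*}
f(\Lambda;\tau,\gamma) \;=\; M(\Lambda)\,\Lambda\,M(\Lambda)^\top \;=\; M(\Lambda)^2\,\Lambda
\end{equation*}
is diagonal in the common basis, with eigenvalues $h(\lambda_i; s_i) \coloneqq (a_i - b_i\lambda_i)^2\,\lambda_i$, where
\begin{equation*}
a_i \coloneqq 1 - \alpha\beta\gamma + \alpha\beta s_i, \qquad b_i \coloneqq \frac{\alpha^2\beta}{\m}\bigl((\m+1)s_i + \bar{\sigma}^2 + \tau\bigr).
\end{equation*}
The analogous identity holds for $f(\Lambda';\tau,\gamma)$ with the \emph{same} scalars $a_i, b_i$, because $R(\cdot,\tau)$ is evaluated at the same eigenvalue of $\Sigma_\star$.

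The inequality $f(\Lambda) \succeq f(\Lambda')$ therefore reduces to the coordinate-wise statement $h(\lambda_i; s_i) \geq h(\lambda'_i; s_i)$, which, since $\lambda_i \geq \lambda'_i$, follows once $\lambda \mapsto h(\lambda; s)$ is shown to be non-decreasing on $[0, \frac{1}{\alpha}\frac{\m}{\m+1}]$ for every $s \in [0, \|\Sigma_\star\|_2]$. A direct computation gives
\begin{equation*}
\partial_\lambda h(\lambda; s) = (a - b\lambda)(a - 3b\lambda),
\end{equation*}
which is non-negative precisely on $[0, a/(3b)]$. The required monotonicity thus boils down to $\frac{1}{\alpha}\frac{\m}{\m+1} \leq a/(3b)$. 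Substituting the expressions for $a$ and $b$ and taking the worst case $s = \|\Sigma_\star\|_2$ turns this into
\begin{equation*}
1 - \alpha\beta\gamma \;\geq\; 2\alpha\beta\,\|\Sigma_\star\|_2 + \frac{3\alpha\beta(\bar{\sigma}^2+\tau)}{\m+1},
\end{equation*}
which is implied by the stated sufficient condition $1 - \alpha\beta\gamma \geq 5\alpha\beta\bigl(\|\Sigma_\star\|_2 + \frac{\bar{\sigma}^2+\tau}{\m+1}\bigr)$.

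The main (and essentially only) subtlety is that the scalar map $\lambda \mapsto (a - b\lambda)^2\lambda$ is \emph{not} globally monotone: it has a local maximum at $a/(3b)$ followed by a decreasing segment ending at the local minimum $a/b$. The step-size condition serves exactly to place the admissible window $[0, \frac{1}{\alpha}\frac{\m}{\m+1}]$ strictly inside the first increasing branch $[0, a/(3b)]$, ensuring that the simultaneous spectral comparison goes through. Beyond this quantitative check, the remaining manipulations are routine.
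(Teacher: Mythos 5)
Your proof is correct, but it takes a genuinely different route from the paper's. The paper stays at the matrix level: it reduces the claim to a sufficient Loewner inequality on $f(\Lambda;\tau,\gamma)-f(\Lambda';\tau,\gamma)$, splits the difference into the terms $R(\Lambda',\tau)\Lambda'-R(\Lambda,\tau)\Lambda$, its transpose, and $R(\Lambda',\tau)\Lambda' R(\Lambda',\tau)-R(\Lambda,\tau)\Lambda R(\Lambda,\tau)$, and bounds each via auxiliary lemmas on commuting p.s.d. matrices (namely $\Lambda^2-\Lambda'^2\preceq 2\|\Lambda_\star\|_2(\Lambda-\Lambda')$ and $\Lambda^3\succeq\Lambda'^3$), arriving at a quadratic sufficient condition that the constant-$5$ hypothesis implies. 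You instead use the commutativity hypothesis at the outset to pass to a common orthonormal eigenbasis, in which $f$ acts coordinatewise as the cubic $\lambda\mapsto(a-b\lambda)^2\lambda$, so the lemma reduces to one-variable monotonicity on $\big[0,\frac{1}{\alpha}\frac{\m}{\m+1}\big]$; this is essentially the scalar-reduction device the paper itself deploys later (in its convergence lemma for the bounding sequences), applied here instead. Your computation is correct — $a_i-b_i\lambda$ stays positive on the window once $\frac{1}{\alpha}\frac{\m}{\m+1}\leq a_i/(3b_i)$ — and it is in fact sharper: you only need $1-\alpha\beta\gamma\geq 2\alpha\beta\|\Sigma_\star\|_2+3\alpha\beta\frac{\bar{\sigma}^2+\tau}{\m+1}$, which is weaker than the stated condition, whereas the paper's matrix-level bounds are looser but keep the argument in the same Loewner-order language as the surrounding lemmas and avoid invoking the spectral decomposition explicitly. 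One cosmetic imprecision: the derivative $(a-b\lambda)(a-3b\lambda)$ is also non-negative on $[a/b,\infty)$, not only on $[0,a/(3b)]$ as your ``precisely'' suggests; this does not affect the argument, since your step-size condition places the admissible interval inside the first increasing branch, as you note.
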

\begin{proof}
    The result follows if and only if
    \begin{align}\label{eq:partial_order_sufficient}
        (1 - \alpha \beta \gamma)^2 (\Lambda - \Lambda') &\succeq \alpha \beta (1 - \alpha \beta \gamma) {\color{vert}\underbrace{\color{black}\left[R(\Lambda', \tau) \Lambda' - R(\Lambda, \tau) \Lambda\right]}_{\mytag{(A)}{tag:partial_1}}} \nonumber \\
        & + \alpha \beta (1 - \alpha \beta \gamma) {\color{vert}\underbrace{\color{black}\left[\Lambda' R(\Lambda', \tau) - \Lambda R(\Lambda, \tau)\right]}_{\mytag{(B)}{tag:partial_2}}} \nonumber \\
        &+ \alpha^2 \beta^2 {\color{vert}\underbrace{\color{black}\left[R(\Lambda', \tau) \Lambda' R(\Lambda', \tau) - R(\Lambda, \tau) \Lambda R(\Lambda, \tau)\right]}_{\mytag{(C)}{tag:partial_3}}}.
    \end{align}
    By~\cref{lemma:commuting_psd},
    \begin{equation*}
        \begin{split}
            \Lambda^2 - \Lambda'^2 &= \frac{1}{2} (\Lambda - \Lambda') (\Lambda + \Lambda') + \frac{1}{2} (\Lambda + \Lambda') (\Lambda - \Lambda') \\
            &\preceq \|\Lambda + \Lambda'\|_2 (\Lambda - \Lambda') \preceq 2 \|\Lambda_\star\|_2 (\Lambda - \Lambda').
        \end{split}
    \end{equation*}
    Bounding term~{\hypersetup{linkcolor=vert}\ref{tag:partial_1}} by using commutativity of $\Lambda, \Lambda'$ with $\Sigma_\star$ and $\Lambda, \Lambda' \preceq \frac{1}{\alpha} \frac{\m}{\m + 1}$,
    \begin{equation*}
        \begin{split}
            R(\Lambda', \tau) \Lambda' &- R(\Lambda, \tau) \Lambda = \frac{\alpha (\m + 1)}{\m} \Lambda \left[\Sigma_\star + \frac{\bar{\sigma}^2 + \tau}{\m + 1} \id_k\right] \Lambda \\
            &\quad - \frac{\alpha (\m + 1)}{\m} \Lambda' \left[\Sigma_\star + \frac{\bar{\sigma}^2 + \tau}{\m + 1} \id_k\right] \Lambda' - \Sigma_\star (\Lambda - \Lambda') \\
            &\preceq \frac{\alpha (\m + 1)}{\m}\left[\|\Sigma_\star\|_2 + \frac{\bar{\sigma}^2 + \tau}{\m + 1}\right] (\Lambda^2 - \Lambda'^2). \\
        \end{split}
    \end{equation*}
    The term~{\hypersetup{linkcolor=vert}\ref{tag:partial_2}} is equal to the term~{\hypersetup{linkcolor=vert}\ref{tag:partial_1}} and thus bounded by the same expression.
    By Lemma~\ref{lemma:commuting_higher_order},
    \begin{equation*}
        \begin{split}
            \Lambda^3 - \Lambda'^3 &\succeq 0. \\
        \end{split}
    \end{equation*}
    Bounding term~{\hypersetup{linkcolor=vert}\ref{tag:partial_3}}, using the commutativity of $\Lambda, \Lambda'$ with $\Sigma_\star$ and $\Lambda, \Lambda' \preceq \frac{1}{\alpha} \frac{\m}{\m + 1}$,
    \begin{equation*}
        \begin{split}
            R(\Lambda', \tau) &\Lambda' R(\Lambda', \tau) - R(\Lambda, \tau) \Lambda R(\Lambda, \tau) = 2 \frac{\alpha (\m + 1)}{\m} \left[\Sigma_\star^2 + \frac{\bar{\sigma}^2 + \tau}{\m + 1} \Sigma_\star\right] (\Lambda^2 - \Lambda'^2) \\
            &- \Sigma_\star (\Lambda - \Lambda') \Sigma_\star - \left(\frac{\alpha (\m + 1)}{\m}\right)^2 \left[\Sigma_\star + \frac{\bar{\sigma}^2 + \tau}{\m + 1} \id_k\right]^2 (\Lambda^3 - \Lambda'^3) \\
            &\preceq 4 \|\Sigma_\star\|_2 \left[\|\Sigma_\star\|_2 + \frac{\bar{\sigma}^2 + \tau}{\m + 1}\right] (\Lambda - \Lambda').
        \end{split}
    \end{equation*}
    Therefore,~\cref{eq:partial_order_sufficient} is satisfied if
    \begin{equation*}
        \begin{split}
            (1 - \alpha \beta \gamma)^2 &\geq 4 \alpha \beta (1 - \alpha \beta \gamma) \left[\|\Sigma_\star\|_2 + \frac{\bar{\sigma}^2 + \tau}{\m + 1}\right] + 4 \alpha^2 \beta^2 \|\Sigma_\star\|_2 \left[\|\Sigma_\star\|_2 + \frac{\bar{\sigma}^2 + \tau}{\m + 1}\right],
        \end{split}
    \end{equation*}
    which holds by the given condition.
\end{proof}
\begin{rem}\label{rem:fixed_points}
    Let $\tau, \gamma$ be scalars such that $0 < \tau$ and $0 < \gamma < \lambda_{\min}(\Sigma_\star)$.
    Define $\Lambda_\star(\tau, \gamma)$ as follows,
    \begin{equation}\label{eq:fixed_point}
        \Lambda_\star\left(\tau, \gamma\right) \coloneqq \frac{1}{\alpha} \frac{\m}{\m + 1} \left(\id_k - \left(\frac{\bar{\sigma}^2 + \tau}{\m + 1} + \gamma \right)\left(\Sigma_\star + \frac{\bar{\sigma}^2 + \tau}{\m + 1} \id_k\right)^{-1}\right).
    \end{equation}
    $\left(\Lambda_\star, \tau, \gamma\right)$ is a fixed point of the function $f$ as
    \begin{equation*}
        R\left(\Lambda_\star(\tau, \gamma)\right) = \gamma \id_k.
    \end{equation*}
\end{rem}
\begin{coro}\label{coro:increase_and_no_overshooting}
    Let $\Lambda$ be a symmetric p.s.d. matrix which commutes with $\Sigma_\star$ and satisfy
    \begin{equation*}
        \Lambda \preceq \Lambda_\star(\tau, \gamma),
    \end{equation*}
    for some scalars $0 < \tau$ and $0 < \gamma < \lambda_{\min}(\Sigma_\star)$.
    Then, assuming that conditions of~\cref{lemma:partial_order} are satisfied,
    \begin{equation*}
        \Lambda \preceq f(\Lambda; \tau, \gamma) \preceq \Lambda_\star(\tau, \gamma).
    \end{equation*}
\end{coro}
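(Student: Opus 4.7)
The plan is to leverage the fact, established in~\cref{rem:fixed_points}, that $\Lambda_\star(\tau, \gamma)$ is a fixed point of $f(\cdot;\tau,\gamma)$, together with the monotonicity statement of~\cref{lemma:partial_order}. Both bounds then reduce to short algebraic manipulations once commutativity is tracked carefully. Note first that $\Lambda_\star(\tau,\gamma)$ is a rational function of $\Sigma_\star$ by~\cref{eq:fixed_point}, so it commutes with $\Sigma_\star$; consequently it also commutes with $\Lambda$ by our hypothesis that $\Lambda$ commutes with $\Sigma_\star$. Moreover $\Lambda_\star(\tau,\gamma) \preceq \frac{1}{\alpha}\frac{\m}{\m+1}\id_k$ from~\cref{eq:fixed_point}, so the chain $\Lambda \preceq \Lambda_\star(\tau,\gamma) \preceq \frac{1}{\alpha}\frac{\m}{\m+1}\id_k$ makes all three matrices eligible for~\cref{lemma:partial_order}.

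\textbf{Upper bound.} Apply~\cref{lemma:partial_order} directly to the pair $\Lambda_\star(\tau,\gamma) \succeq \Lambda \succeq 0$ (which commute with each other and with $\Sigma_\star$). This gives
\begin{equation*}
    f(\Lambda; \tau, \gamma) \preceq f(\Lambda_\star(\tau,\gamma); \tau, \gamma) = \Lambda_\star(\tau, \gamma),
\end{equation*}
where the equality uses $R(\Lambda_\star(\tau,\gamma),\tau)=\gamma\,\id_k$ from~\cref{rem:fixed_points}, which makes $\id_k + \alpha\beta R(\Lambda_\star,\tau) - \alpha\beta\gamma\,\id_k = \id_k$ and hence fixes $\Lambda_\star$.

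\textbf{Lower bound.} Observe from the definition of $R$ that $\Lambda \mapsto R(\Lambda,\tau) = \Sigma_\star - \frac{\alpha(\m+1)}{\m}\Lambda\Sigma_\star - \frac{\alpha}{\m}(\bar\sigma^2+\tau)\Lambda$ is Loewner-monotone decreasing in $\Lambda$ when $\Lambda$ commutes with $\Sigma_\star$ (all differences reduce to products of commuting p.s.d. matrices, which are p.s.d.). Hence $\Lambda \preceq \Lambda_\star(\tau,\gamma)$ implies
\begin{equation*}
    R(\Lambda,\tau) \succeq R(\Lambda_\star(\tau,\gamma),\tau) = \gamma\,\id_k,
\end{equation*}
so that $M \coloneqq \id_k + \alpha\beta R(\Lambda,\tau) - \alpha\beta\gamma\,\id_k \succeq \id_k$. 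Since $\Lambda$ commutes with $\Sigma_\star$, it commutes with $R(\Lambda,\tau)$ and therefore with $M$; moreover $M$ is symmetric, so $M^2 \succeq \id_k$. The commutativity of $M$ and $\Lambda$ then yields
\begin{equation*}
    f(\Lambda;\tau,\gamma) = M\Lambda M = M^2\Lambda = \Lambda + (M^2-\id_k)\Lambda \succeq \Lambda,
\end{equation*}
since $M^2-\id_k$ and $\Lambda$ are commuting p.s.d. matrices.

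\textbf{Main obstacle.} The argument is essentially a direct consequence of~\cref{lemma:partial_order} and~\cref{rem:fixed_points}; there is no hard estimate to perform. The only subtlety is keeping track of commutativity: one must observe that $\Lambda_\star(\tau,\gamma)$, being a rational function of $\Sigma_\star$, commutes with every matrix that commutes with $\Sigma_\star$, and that $R(\Lambda,\tau)$ (and hence $M$) commutes with $\Lambda$ under the same hypothesis. Once these commutativity facts are in place, the upper bound uses monotonicity of $f$ at the fixed point, while the lower bound uses the monotonicity of $R$ to drive the ``effective step'' $M$ above the identity.
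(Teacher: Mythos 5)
Your proposal is correct and follows essentially the same route as the paper: the lower bound comes from the fact that $\Lambda \preceq \Lambda_\star(\tau,\gamma)$ forces $R(\Lambda,\tau) \succeq \gamma\,\id_k$, so the commuting update factor dominates the identity, while the upper bound applies \cref{lemma:partial_order} to the pair $\Lambda \preceq \Lambda_\star(\tau,\gamma)$ together with the fixed-point identity $f(\Lambda_\star;\tau,\gamma)=\Lambda_\star$ from \cref{rem:fixed_points}. You merely spell out the commutativity bookkeeping (that $\Lambda_\star$, being a rational function of $\Sigma_\star$, commutes with $\Lambda$) more explicitly than the paper does.
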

\begin{proof}
    For the left-hand side, note that
    \begin{equation*}
        R(\Lambda, \tau, \gamma) \succeq \gamma \id_k \iff \Lambda \preceq \Lambda_\star(\tau, \gamma).
    \end{equation*}
    Hence, by the given assumption and commutativity,
    \begin{equation*}
        \Lambda \preceq \left(\id_k + \alpha \beta R(\Lambda, \tau) - \alpha \beta \gamma \id_k\right) \Lambda \left(\id_k + \alpha \beta R(\Lambda, \tau) - \alpha \beta \gamma \id_k\right)^\top = f(\Lambda; \tau, \gamma).
    \end{equation*}
    For the right-hand side, note that by~\cref{lemma:partial_order}
    \begin{equation*}
        f(\Lambda; \tau, \gamma) \preceq f(\Lambda_\star(\tau, \gamma); \tau, \gamma) = \Lambda_\star(\tau, \gamma).
    \end{equation*}
\end{proof}
\begin{lem}\label{lemma:convergence_f}
    Let $\tau_t$ and $\gamma_t$ be non-negative, non-increasing scalar sequences such that $\gamma_0 < \lambda_{\min}(\Sigma_\star)$, and $\Lambda$ be a symmetric p.s.d. matrix that commutes with $\Sigma_\star$
    such that
    \begin{equation*}
        \Lambda \preceq \Lambda_\star(\tau_0, \gamma_0),
    \end{equation*}
    where $\Lambda_\star(\tau, \gamma)$ is defined in~\cref{eq:fixed_point}.
    Furthermore, suppose that $\alpha$ and $\beta$ satisfy
    \begin{equation*}
        \frac{1}{\alpha \beta} \geq \left(s_\star + \frac{\bar{\sigma}^2 + \tau_0}{\m + 1}\right).
    \end{equation*}
    Then, the sequence of matrices that are defined recursively as
    \begin{equation*}
        \Lambda^{(0)} \coloneqq \Lambda, \quad \Lambda^{(t+1)} \coloneqq f(\Lambda^{(t)}; \tau_t, \gamma_t),
    \end{equation*}
    satisfy
    \begin{equation*}
        \lim_{t \to \infty} \Lambda^{(t)} = \Lambda_\star(\lim_{t \to \infty}{\tau_t}, \lim_{t \to \infty}{\gamma_t}).
    \end{equation*}
\end{lem}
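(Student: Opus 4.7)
The plan is to establish that $(\Lambda^{(t)})$ is Loewner-monotone and trapped under a moving ceiling $\Lambda_\star(\tau_t,\gamma_t)$, use monotone convergence in a joint diagonal basis to pass to a limit $\Lambda_\infty$, and finally classify the fixed points of $f(\cdot;\tau_\infty,\gamma_\infty)$ to identify $\Lambda_\infty$ with $\Lambda_\star(\tau_\infty,\gamma_\infty)$.

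The inductive hypothesis is that $\Lambda^{(t)}$ commutes with $\Sigma_\star$ and satisfies $\Lambda^{(t)} \preceq \Lambda_\star(\tau_t,\gamma_t)$. Commutativity is preserved because $R(\Lambda,\tau)$ is a linear combination of $\id_k$, $\Lambda$, $\Sigma_\star$ and $\Lambda\Sigma_\star$, so $f$ is a polynomial in mutually commuting matrices. For the bound, \cref{coro:increase_and_no_overshooting} gives $\Lambda^{(t)} \preceq \Lambda^{(t+1)} \preceq \Lambda_\star(\tau_t,\gamma_t)$ provided the assumption of \cref{lemma:partial_order} holds; using $\gamma_t \leq \gamma_0 < \lambda_{\min}(\Sigma_\star)$ and $\tau_t \leq \tau_0$, this reduces to the stated step-size condition up to absolute constants. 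To carry the ceiling from $t$ to $t+1$, I would verify that $\Lambda_\star(\tau,\gamma)$ is non-increasing in each of $\tau$ and $\gamma$: differentiating the explicit formula \cref{eq:fixed_point} in $a\coloneqq (\bar{\sigma}^2+\tau)/(\m+1)$ produces $(\Sigma_\star+a\id_k)^{-2}(\Sigma_\star-\gamma\id_k)\succ 0$, and differentiating in $\gamma$ produces $(\Sigma_\star+a\id_k)^{-1}\succ 0$, both strictly positive because $\gamma<\lambda_{\min}(\Sigma_\star)$. Combined with the non-increasing hypothesis on $\tau_t,\gamma_t$, this gives $\Lambda_\star(\tau_t,\gamma_t)\preceq\Lambda_\star(\tau_{t+1},\gamma_{t+1})$, closing the induction and simultaneously establishing $\Lambda^{(t)} \preceq \Lambda^{(t+1)}$.

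The sequence $(\Lambda^{(t)})$ is thus Loewner-non-decreasing, mutually commuting and uniformly bounded above by $\Lambda_\star(\tau_0,\gamma_0)$. Working in a basis that jointly diagonalises $\Sigma_\star$ and all iterates, each diagonal entry is a bounded monotone scalar sequence, hence converges, and the matrix limit $\Lambda_\infty$ still commutes with $\Sigma_\star$. Continuity of $f$ in its three arguments yields $\Lambda_\infty = f(\Lambda_\infty;\tau_\infty,\gamma_\infty)$. On each common eigendirection of $\Sigma_\star$ with eigenvalue $s$, the scalar fixed-point equation reads $\lambda = (1+\alpha\beta(R_s(\lambda,\tau_\infty)-\gamma_\infty))^2\lambda$, whose three roots are $\lambda=0$, the diagonal entry $\lambda_\star$ of $\Lambda_\star(\tau_\infty,\gamma_\infty)$ given by $R_s-\gamma_\infty=0$, and a third root where $R_s-\gamma_\infty=-2/(\alpha\beta)$ that lies strictly above $\lambda_\star$ and is therefore ruled out by the moving ceiling.

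I expect the main obstacle to be excluding the trivial fixed point $\lambda=0$: on any eigendirection where $\Lambda^{(0)}$ is strictly positive, monotonicity keeps the scalar sub-sequence above its positive initial value and forces convergence to $\lambda_\star$, but on directions in the null space of $\Lambda$ the iterate remains at $0$ forever. In the intended application within \cref{thm:main_detailed}, the full-rank hypothesis on $B_\star^\top B_0$ guarantees $\Lambda_0 = C_0 C_0^\top \succ 0$, so every eigendirection contributes positively and the limit equals $\Lambda_\star(\tau_\infty,\gamma_\infty)$ exactly, as claimed. The remainder of the argument is careful bookkeeping to check that the step-size hypothesis of the lemma indeed implies the preconditions of \cref{lemma:partial_order} and \cref{coro:increase_and_no_overshooting} at every step of the induction.
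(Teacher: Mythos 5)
Your proposal is correct in substance, but it follows a genuinely different route from the paper on both halves of the argument, so a comparison is in order. For the monotonicity-under-a-moving-ceiling part, you stay at the matrix level and invoke \cref{lemma:partial_order} and \cref{coro:increase_and_no_overshooting} together with an explicit proof that $(\tau,\gamma)\mapsto\Lambda_\star(\tau,\gamma)$ is non-increasing; the paper instead diagonalises immediately (all iterates are polynomials in $\Lambda^{(0)}$ and $\Sigma_\star$, hence diagonal in a common basis) and verifies the no-overshoot and monotone-increase properties directly on the scalar recursion $s_{t+1}=g(s_t;s_\star,\tau_t,\gamma_t)$, using only the step-size condition stated in the lemma. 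The price of your route is that \cref{lemma:partial_order} needs $1-\alpha\beta\gamma\geq 5\alpha\beta\big(\|\Sigma_\star\|_2+\frac{\bar{\sigma}^2+\tau}{\m+1}\big)$, which is a constant-factor stronger requirement than the hypothesis $\frac{1}{\alpha\beta}\geq s_\star+\frac{\bar{\sigma}^2+\tau_0}{\m+1}$ appearing in the lemma itself; you acknowledge this ("up to absolute constants"), and it is harmless where the lemma is used, since \cref{thm:main_detailed} and \cref{thm:lower_bounds} assume exactly such conditions, but strictly speaking your argument proves the lemma under slightly stronger step sizes than stated. For the identification of the limit, your argument (continuity of $f$ in all three arguments, so the limit is a fixed point of the frozen map; the scalar fixed points are $0$, the entry of $\Lambda_\star(\tau_\infty,\gamma_\infty)$, and a third root with $R_s-\gamma_\infty=-2/(\alpha\beta)$ lying strictly above it, hence excluded by the ceiling) is cleaner than the paper's, which identifies the limit by comparing against an auxiliary sequence with parameters frozen at a late time $t_\epsilon$ and deriving a contradiction. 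Your explicit treatment of the trivial fixed point $0$ is also a genuine improvement in rigor: the paper's contradiction argument silently assumes the scalar iterates start strictly positive, and, as you note, the lemma's conclusion can only hold on eigendirections where $\Lambda^{(0)}$ is positive — which is guaranteed in the application since $\Lambda_0^L$ in \cref{thm:lower_bounds} is a positive multiple of $\id_k$ thanks to the full-rank assumption on $B_\star^\top B_0$. One small bookkeeping remark: what you differentiate in \cref{eq:fixed_point} is the subtracted term $(a+\gamma)(\Sigma_\star+a\,\id_k)^{-1}$, whose derivative being positive definite is what makes $\Lambda_\star$ non-increasing in $\tau$ and $\gamma$; the sign convention in your sentence should be read that way, and the conclusion you draw from it is correct.
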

\begin{proof}
    By the monotone convergence theorem, $\tau_t$ and $\gamma_t$ are convergent.
    Let $\tau_\infty$ and $\gamma_\infty$ denote the limits, i.e.,
    \begin{equation*}
        \tau_\infty \coloneqq \lim_{t \to \infty} \tau_t, \quad \gamma_\infty \coloneqq \liminf_{t \to \infty} \gamma_t.
    \end{equation*}
    As $\Lambda^{(0)}$ and $\Sigma_\star$ are commuting normal matrices, they are simultaneously diagonalizable, \ie there exists an orthogonal matrix $Q \in \R^{k \times k}$ and diagonal matrices with positive entries $D^{(0)}, D_\star$ such that
    \begin{equation*}
        \Lambda^{(0)} = Q D^{(0)} Q^\top, \quad \Sigma_\star = Q D_\star Q^\top.
    \end{equation*}
    Then, applying $f$ to any matrix of from $\Lambda = Q D Q^\top$, where $D$ is a diagonal matrix with positive entries, yields
    \begin{equation*}
        f(\Lambda; \tau, \gamma) = Q \left(\id_k + \alpha \beta D_\star - \alpha^2 \beta\frac{\m + 1}{\m} D \left(D_\star + \frac{\bar{\sigma}^2 + \tau}{\m + 1} \id_k\right) - \alpha\beta \gamma \right)^2 D Q^\top.
    \end{equation*}
    Observe that $f$ operates entry-wise on diagonal elements of $D$, \ie for any diagonal element $s$ of $D$, the output in the corresponding entry of $f$ is given by the following map $g(\cdot, s_\star, \tau, \gamma): \R \to \R$,
    \begin{equation*}
        g(s; s_\star, \tau, \gamma) \coloneqq \left(1 + \alpha \beta s_{\star} - \alpha^2 \beta \frac{\m + 1}{\m} s (s_\star + \frac{\bar{\sigma}^2 + \tau}{\m + 1}) - \alpha \beta \gamma\right)^2 s,
    \end{equation*}
    where $s_\star$ is the corresponding diagonal entry of $D_\star$.
    Hence,~\cref{lemma:convergence_f} holds if
    \begin{equation*}
        \lim_{t \to \infty} s_t = s_\infty(\tau_\infty, \gamma_\infty),
    \end{equation*}
    where $s_t$ is defined recursively from an initial value $s_0$ for any $t \geq 1$ as follows,
    \begin{equation*}
        s_{t+1} \coloneqq g(s_t; s_\star, \tau_t, \gamma_t),
    \end{equation*}
    and $s_\infty(\tau, \gamma)$ is defined as
    \begin{equation*}
        s_\infty(\tau, \gamma) \coloneqq \frac{1}{\alpha} \frac{\m}{\m + 1} \left(1 - \left(\gamma + \frac{\bar{\sigma}^2 + \tau}{\m + 1} \right)\left(s_\star + \frac{\bar{\sigma}^2 + \tau}{\m + 1} \right)^{-1}\right).
    \end{equation*}
    Observe that
    \begin{equation*}
        s_\infty(\tau, \gamma) \left(s_\star + \frac{\bar{\sigma}^2 + \tau}{\m + 1} \right) = \frac{1}{\alpha} \frac{\m}{\m + 1} \left(s_\star - \gamma\right),
    \end{equation*}
    and
    \begin{equation*}
        g(s_t; s_\star, \tau_t, \gamma_t) = \left(1 + \alpha \beta \left(s_\infty(\tau, \gamma) - s_t\right)\left(s_\star + \frac{\bar{\sigma}^2 + \tau}{\m + 1}\right)^{-1} \right) s_t.
    \end{equation*}
    Hence,
    \begin{equation*}
        s_\infty(\tau_t, \gamma_t) - s_{t+1} = \left(s_\infty(\tau_t, \gamma_t) - s_t\right) \left(1 - \alpha \beta \left(s_\star + \frac{\bar{\sigma}^2 + \tau}{\m + 1}\right)^{-1}\right),
    \end{equation*}
    and in each iteration $s_t$ takes a step towards $s_\infty(\tau_t, \gamma_t)$.
    By assumptions $s_0 \leq s_\infty(\tau_0, \gamma_0)$ and as
    \begin{equation*}
        \frac{1}{\alpha \beta} \geq \left(s_\star + \frac{\bar{\sigma}^2 + \tau_t}{\m + 1}\right),
    \end{equation*}
    for all $t$, $s_{t+1}$ never overshoots $s_\infty(\tau_t, \gamma_t)$, \ie
    \begin{equation*}
        s_t \leq s_{t+1} \leq s_\infty(\tau_t, \gamma_t) \leq s_\infty(\tau_{t+1}, \gamma_{t+1}).
    \end{equation*}
    Therefore, $s_t$ is an increasing sequence bounded above by $s_\infty(\tau_\infty, \gamma_\infty)$ and by invoking the monotone convergence theorem, $s_t$ is convergent.
    Assume that $s_t$ convergences to a $s_\infty' < s_\infty(\tau_\infty, \gamma_\infty)$.
    Then, there exist a $t_\epsilon$ such that $s_\infty(\tau_{t_\epsilon}, \gamma_{t_\epsilon}) > s_\infty' + \epsilon$.
    By analyzing the sequence,
    \begin{equation*}
        s'_{t_\epsilon} = s_{t_\epsilon}, \quad s'_{{t_\epsilon}+s} = g(s'_{{t_\epsilon}+s-1}, s_\star, \tau_{t_\epsilon}, \gamma_{t_\epsilon}),
    \end{equation*}
    it is easy to show that
    \begin{equation*}
        s_{{t_\epsilon}+s} \geq s'_{{t_\epsilon}+s}, \quad \text{ and } \quad \lim_{s \to \infty} s'_{{t_\epsilon}+s} = s_\infty(\tau_{t_\epsilon}, \gamma_{t_\epsilon}) > s'_\infty,
    \end{equation*}
    which leads to a contradiction.
    Hence, $\lim_{t \to \infty} s_t = s_\infty(\tau_\infty, \gamma_\infty)$.
\end{proof}
\begin{rem}\label{rem:convergence_f}
    Assume the setup of~\cref{lemma:convergence_f} and that the sequences $\tau_t$ and $\gamma_t$ converge to $0$.
    Then, as $t \to \infty$, $\Lambda_t$ convergences to $\Lambda_\star$,
    \begin{equation*}
        \lim_{t \to \infty} \Lambda_t = \Lambda_{\star}.
    \end{equation*}
\end{rem}

\subsection{Sequence of bounds}\label{app:bounds}

\cref{thm:upper_bounds} constructs a sequence of matrices $\Lambda_t^U$ that upper bounds iterates of $\Lambda_t$.
The idea is to use the monotonicity property described in~\cref{lemma:partial_order}, together with the upper bound in~\cref{lemma:single_step_upper}, to control $\Lambda_t$ from above.
\cref{lemma:convergence_f} with~\cref{rem:convergence_f} then allow to conclude $\lim_{t \to \infty} \Lambda_t^U = \Lambda_\star$.
For this purpose,~\cref{thm:upper_bounds} assume a sufficiently small initialization that leads to a dynamics where $\|B_t\|_2 \leq \alpha^{-1/2}$ and $\|w_t\|_2$, $\|D_t\|_2$ are monotonically decreasing.

In a similar spirit,~\cref{thm:lower_bounds} construct a sequence of lower bound matrices $\Lambda_t^L$ given that it is possible to select two scalar sequences $\tau_t$ and $\gamma_t$.
At each step, the lower bounds $\Lambda_t^L$ takes a step towards $\Lambda_{\star}(\tau_t, \gamma_t)$ described by~\cref{rem:fixed_points}.
For ensuring that $\Lambda_t$ does not decay, the sequences $\tau_t$ and $\gamma_t$ are chosen to be non-increasing, which results in increasing $\Lambda_{\star}(\tau_t, \gamma_t)$ and $\Lambda_t^L$.
In the limit $t \to \infty$, $\Lambda_t^L$ convergences to the fixed-point $\Lambda_\star(\lim_{t \to \infty} \tau_t, \lim_{t \to \infty} \gamma_t)$, which serves as the asymptotic lower bound.
Finally,~\cref{coro:noise_decays} shows that it is possible to construct these sequences with the limit $0$ under some conditions.

\begin{lem}
    \label{thm:upper_bounds}
    Assume that $B_0$ and $w_0$ are initialized such that
    \begin{equation*}
        \|B_0\|_2^2 \preceq \frac{c_1}{\alpha} \frac{1}{\m + 1}, \quad \|w_0\|_2^2 \leq \alpha c_2,
    \end{equation*}
    for constants $0 < c_1 < 1, 0 < c_2$ and $\alpha, \beta$ satisfy the following conditions:
    \begin{enumerate}
        \item $\begin{aligned}[t]
            \label{cond:upper_condition_D}
            \frac{1}{\alpha \beta} &\geq \max\left(c_2\frac{\m + 2}{\m} + \frac{1}{\m} \left((\m + 1)\|\Sigma_\star\|_2 + \bar{\sigma}^2\right), \frac{2 \bar{\sigma}^2}{\m}\right),
        \end{aligned}$
        
        \item $\begin{aligned}[t]
            \label{cond:upper_condition_U}
            \frac{1}{\alpha \beta} &\geq 2 \left(c_2 \frac{\m + 1}{\m} + \frac{c_1\left(c_2 + \bar{\sigma}^2\right)}{2 \m (\m + 1)}\right),
        \end{aligned}$

        \item $\begin{aligned}[t]
            \label{cond:upper_inv_alpha_beta_bigger_complex}
            \frac{1}{\alpha \beta} \geq 5 (\|\Sigma_\star\|_2 + \frac{\bar{\sigma}^2}{\m + 1}),
        \end{aligned}$

        \item $\begin{aligned}[t]
            \label{cond:upper_beta_smaller}
            \beta \leq \alpha.
        \end{aligned}$
    \end{enumerate}
    The series $\Lambda_t^U$ defined recursively as 
    \begin{align*}
        \Lambda_0^U &\coloneqq \|\Lambda_0\|_2\id_k, \\
        \Lambda_{t+1}^U &\coloneqq \|(\id_k+ \alpha \beta R(\Lambda_t^U)) \Lambda_t^U (\id_k + \alpha \beta R(\Lambda_t^U))\|_2 \id_k,
    \end{align*}
    upper bounds the iterates $\Lambda_t$, i.e., for all $t$, $\Lambda_t^U \succeq \Lambda_t$. Moreover, $\Lambda_\star \succeq \Lambda_t^U$ for all $t$.
\end{lem}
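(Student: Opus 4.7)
The plan is to establish both assertions jointly—the pointwise domination $\Lambda_t \preceq \Lambda_t^U$ and the uniform ceiling $\Lambda_t^U \preceq \Lambda_\star$—by strong induction on $t$, strengthened with the regularity statements $\|w_t\|_2^2 \leq \alpha c_2$ and $\|D_t D_t^\top\|_2 \leq \|D_0 D_0^\top\|_2 \leq \tfrac{c_1}{\alpha(\m+1)}$ that the preparatory lemmas of~\cref{app:regularity} require. The base case $t = 0$ is immediate from the definition $\Lambda_0^U = \|\Lambda_0\|_2\, \id_k$ and the initialisation bound $\|\Lambda_0\|_2 \leq \|B_0\|_2^2 \leq \tfrac{c_1}{\alpha(\m+1)}$; the initial ceiling $\Lambda_0^U \preceq \Lambda_\star$ follows from the standing inequality tying $\lambda_{\min}(\Sigma_\star)$ to $c_1$ and $c_2$.

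For the inductive step, the hypotheses imply $\|B_t\|_2^2 \leq \|C_t\|_2^2 + \|D_t\|_2^2 \leq \|\Lambda_\star\|_2 + \tfrac{c_1}{\alpha(\m+1)} \leq \alpha^{-1}$ (using $c_1<1$), so the regularity prerequisites of the preparatory lemmas are met. Invoking them in order: \cref{lemma:decay_D} with condition~1 propagates $\|D_{t+1}\|_2^2 \leq \|D_t\|_2^2$; \cref{lemma:decay_w} (taken with $c_0=0$) propagates $\|w_{t+1}\|_2 \leq \|w_t\|_2$; \cref{lemma:bound_U} together with condition~2 yields $\|U_t\|_2 \leq 1/\beta$; and \cref{lemma:single_step_upper} then gives $\Lambda_{t+1} \preceq (\id_k + \alpha\beta R_t(\Lambda_t))\Lambda_t(\id_k + \alpha\beta R_t(\Lambda_t))^\top$.

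It remains to majorise this one-step bound by $\Lambda_{t+1}^U$. Since $R_t(\Lambda_t) = R(\Lambda_t) - \tfrac{\alpha}{\m}\|B_t w_t\|_2^2\, \Lambda_t \preceq R(\Lambda_t)$, and since under~\cref{ass:random} the covariance $\Sigma_\star = c\,\id_k$ is scalar so that $R(\Lambda_t)$, $R_t(\Lambda_t)$, $\Lambda_t$ and $\Lambda_t^U$ all mutually commute, the monotonicity of squaring on commuting non-negative operators—with condition~3 ensuring $\id_k + \alpha\beta R_t(\Lambda_t) \succeq 0$—upgrades the bound to $f(\Lambda_t;0,0) \coloneqq (\id_k + \alpha\beta R(\Lambda_t))\Lambda_t(\id_k + \alpha\beta R(\Lambda_t))^\top$. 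Condition~3 is precisely what~\cref{lemma:partial_order} needs at $\tau=\gamma=0$, giving $f(\Lambda_t;0,0) \preceq f(\Lambda_t^U;0,0)$; because $\Lambda_t^U$ is scalar, the right-hand side is itself a scalar matrix equal to $\|f(\Lambda_t^U;0,0)\|_2\, \id_k = \Lambda_{t+1}^U$. Finally, \cref{coro:increase_and_no_overshooting} applied at $\tau=\gamma=0$ to $\Lambda_t^U \preceq \Lambda_\star$ yields $\Lambda_{t+1}^U \preceq \Lambda_\star$, which closes the induction.

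The principal obstacle is the passage from $R_t(\Lambda_t)$ to $R(\Lambda_t)$ inside the quadratic form: in general $A \preceq B$ does not imply $AMA \preceq BMB$ for psd $M$, so the argument relies crucially on~\cref{ass:random} forcing $\Sigma_\star$ to be a scalar matrix and thereby making all the operators in sight commute. A secondary bookkeeping difficulty is that the initialisation only gives $\|D_0\|_2^2 \leq \tfrac{c_1}{\alpha(\m+1)}$, twice the nominal hypothesis of~\cref{lemma:bound_U}; the factor of two in condition~2 is exactly what absorbs this slack (by re-instantiating~\cref{lemma:bound_U} with constant $2c_1$) so that $\|U_t\|_2 \leq 1/\beta$ still holds.
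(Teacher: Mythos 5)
Your proof follows essentially the same route as the paper's: an induction propagating the monotone decay of $\|w_t\|_2$ and $\|D_t\|_2$ (via \cref{lemma:decay_w,lemma:decay_D}), the one-step bound from \cref{lemma:bound_U} and \cref{lemma:single_step_upper}, and the order preservation of $f(\cdot;0,0)$ from \cref{lemma:partial_order} combined with the fixed-point/no-overshooting property of $\Lambda_\star$ to close the induction. It is correct, and on two points — the passage from $R_t(\Lambda_t)$ to $R(\Lambda_t)$ justified by commutativity, and the factor-two slack when instantiating \cref{lemma:bound_U} with only $\|D_0\|_2^2\le \frac{c_1}{\alpha(\m+1)}$ — you are in fact more explicit than the paper, which glosses over both.
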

\begin{proof}
    The result follows by induction.
    It is easy to check that the given assumptions satisfy the conditions of~\cref{lemma:decay_D,lemma:partial_order} for all time steps.
    Assume that for time $t$, the following assumptions hold.
    \begin{enumerate}
        \item $\|D_s D_s^\top\|_2$ is a non-increasing sequence for $s \leq t$.
        \item $\|w_s\|_2$ is a non-increasing sequence for $s \leq t$.
        \item $\Lambda_s \preceq \Lambda_s^U \preceq \Lambda_\star$ for all $s \leq t$.
    \end{enumerate}
    Then, for time $t+1$, the following conditions holds:
    \begin{enumerate}
        \item By using $\Lambda_t \preceq \Lambda_\star \preceq \frac{1}{\alpha} \frac{\m}{(\m + 1)}$ and $D_t D_t^\top \preceq B_0 B_0^\top \preceq \frac{c_1}{\alpha} \frac{1}{\m + 1}$,
        \begin{equation*}
            B_t B_t^\top \preceq \frac{1}{\alpha} \frac{\m + c_1}{\m +1}, \quad \|B_t\|_2 \leq \frac{1}{\alpha}.
        \end{equation*}
        Therefore, by~\cref{lemma:decay_w}, and~\cref{lemma:decay_D},
        \begin{equation*}
            \|w_{t+1}\|_2 \leq \|w_t\|_2, \quad \|D_{t+1} D_{t+1}^\top\|_2 \leq \|D_t D_t^\top\|_2.
        \end{equation*}
    
        \item By applying~\cref{lemma:bound_U},
        \begin{equation*}
            \|U_t\|_2 \leq \alpha \left(c_2\frac{\m + 1}{\m} + \frac{c_1\left(c_2 + \bar{\sigma}^2\right)}{2\m(\m + 1)} \right) \leq \frac{1}{2\beta}.
        \end{equation*}
        Therefore, by~\cref{lemma:single_step_upper},
        \begin{equation*}
            \Lambda_{t+1} \preceq (\id_k+ \alpha \beta R_t) \Lambda_t (\id_k+ \alpha \beta R_t)^\top.
         \end{equation*}

        \item By applying~\cref{lemma:partial_order} with $\Lambda \coloneqq \Lambda_t^U$ and $\Lambda' \coloneqq \Lambda_t$,
        \begin{equation*}
            (\id_k+ \alpha \beta R_t) \Lambda_t (\id_k+ \alpha \beta R_t)^\top = f(\Lambda_t; 0, 0) \preceq f(\Lambda_t^U; 0, 0) \preceq \Lambda_{t+1}^U. 
        \end{equation*}

        \item By applying~\cref{lemma:partial_order} with $\Lambda \coloneqq \Lambda_\star$ and $\Lambda' \coloneqq \Lambda_{t}^U$,
        \begin{align*}
            f(\Lambda_t^U; 0, 0) \preceq f(\Lambda_\star; 0, 0) = \Lambda_\star.
        \end{align*}
        Therefore, $\Lambda_{t+1}^U \preceq \|\Lambda_\star\|_2 \id_k= \Lambda_\star$.
        
        \item Combining all the results,
        \begin{align*}
            \Lambda_{t+1} \preceq \Lambda_{t+1}^U \preceq \Lambda_{\star}. 
        \end{align*}
    \end{enumerate}
\end{proof}
\begin{lem}
    \label{thm:lower_bounds}
    Let $\tau_t$ and $\gamma_t$ be non-increasing scalar sequences such that
    \begin{equation*}
        \|B_t w_t\|_2^2 \leq \tau_t, \quad \|U_t\|_2 \leq \alpha \gamma_t \leq \frac{1}{2\beta},
    \end{equation*}
    and $\tau_0 \leq c_2, \gamma_0 < \lambda_{\min}(\Sigma_\star)$.
    Assume that all the assumptions of~\cref{thm:upper_bounds} hold with constants $c_1$ and $c_2$.
    and $\alpha, \beta$ satisfy the following extra conditions
    \begin{equation*}
        \label{cond:lower_inv_alpha_beta_bigger_complex}
        \frac{1}{\alpha \beta} \geq 5 (\|\Sigma_\star\|_2 + \frac{c_2 + \bar{\sigma}^2}{\m + 1}) + \lambda_{\min}(\Sigma_\star).
    \end{equation*}
    Then, the series $\Lambda_t^L$ defined as follows 
    \begin{align*}
        \Lambda_0^L &= \min\left(\lambda_{\min}\left(\Lambda_0\right), \lambda_{\min}\left(\Lambda_\star\left(\tau_0, \gamma_0\right)\right)\right) \id_k, \\
        \Lambda_{t+1}^L &= \lambda_{\min}\left((\id_k+ \alpha \beta R(\Lambda_t^L, \tau_t) - \alpha\beta \gamma_t\id_k) \Lambda_t^L (\id_k+ \alpha \beta R(\Lambda_t^L, \tau_t) - \alpha\beta \gamma_t\id_k)^\top\right) \id_k,
    \end{align*}
    lower bounds the iterates $\Lambda_t$, \ie for all $t$, $\Lambda_t^L \preceq \Lambda_t$.
    Moreover, $\Lambda_t^L \preceq \Lambda_{t+1}^L$ for all $t$.
\end{lem}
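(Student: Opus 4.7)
The plan is to mirror the construction used in~\cref{thm:upper_bounds}, but replacing~\cref{lemma:single_step_upper} with its lower-bound counterpart~\cref{lemma:single_step_lower}, and exploiting~\cref{coro:increase_and_no_overshooting} to guarantee monotonic growth of the bounding sequence. The proof proceeds by induction on $t$, maintaining simultaneously three invariants: (i) $\Lambda_t^L$ is a scalar multiple of $\id_k$, so in particular it commutes with $\Sigma_\star$ and with $\Lambda_t$; (ii) $\Lambda_t^L \preceq \Lambda_t$; and (iii) $\Lambda_t^L \preceq \Lambda_\star(\tau_t, \gamma_t)$. The base case $t=0$ is immediate from the definition of $\Lambda_0^L$ and from~\cref{thm:upper_bounds} (which ensures that all regularity hypotheses needed downstream are in force).

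For the inductive step, I would first apply~\cref{lemma:single_step_lower} to the exact iterate: since the hypothesis $\|U_t\|_2 \leq \alpha \gamma_t \leq \tfrac{1}{2\beta}$ is assumed, it yields
\begin{equation*}
    \Lambda_{t+1} \succeq f(\Lambda_t; \tau_t, \gamma_t).
\end{equation*}
Next, I would invoke~\cref{lemma:partial_order} with $\Lambda \coloneqq \Lambda_t$ and $\Lambda' \coloneqq \Lambda_t^L$: both commute with $\Sigma_\star$ (since $\Lambda_t^L$ is a scalar multiple of the identity by induction), both are bounded above by $\tfrac{1}{\alpha}\tfrac{\m}{\m+1}\id_k$ thanks to~\cref{thm:upper_bounds}, and the extra condition on $\tfrac{1}{\alpha\beta}$ assumed in this lemma (which incorporates the $\gamma_t \leq \gamma_0 < \lambda_{\min}(\Sigma_\star)$ term and the $\tfrac{c_2 + \bar\sigma^2}{\m + 1}$ noise offset) supplies the precondition required by~\cref{lemma:partial_order}. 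This gives $f(\Lambda_t; \tau_t, \gamma_t) \succeq f(\Lambda_t^L; \tau_t, \gamma_t)$. Finally, taking the smallest eigenvalue preserves the Loewner order from below, so
\begin{equation*}
    \Lambda_{t+1} \succeq f(\Lambda_t^L; \tau_t, \gamma_t) \succeq \lambda_{\min}(f(\Lambda_t^L; \tau_t, \gamma_t))\,\id_k = \Lambda_{t+1}^L,
\end{equation*}
closing invariant (ii).

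For monotonicity of the sequence itself, I would apply~\cref{coro:increase_and_no_overshooting} to the scalar matrix $\Lambda_t^L$: since $\Lambda_t^L \preceq \Lambda_\star(\tau_t, \gamma_t)$ by inductive hypothesis and $\Lambda_t^L$ commutes with $\Sigma_\star$, the corollary yields $\Lambda_t^L \preceq f(\Lambda_t^L; \tau_t, \gamma_t) \preceq \Lambda_\star(\tau_t, \gamma_t)$. Taking the minimum eigenvalue of the middle term then gives $\Lambda_{t+1}^L \succeq \Lambda_t^L$ (because $\Lambda_t^L$ is already a multiple of identity, $\lambda_{\min}(f(\Lambda_t^L;\tau_t,\gamma_t)) \geq \lambda_{\min}(\Lambda_t^L)$), and also $\Lambda_{t+1}^L \preceq \lambda_{\min}(\Lambda_\star(\tau_t,\gamma_t))\id_k$. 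To propagate invariant (iii) to step $t+1$, it then suffices to observe that $\Lambda_\star(\tau, \gamma)$ is entrywise non-increasing in each of $\tau$ and $\gamma$ (a direct computation on eigenvalues using $\gamma_t < \lambda_{\min}(\Sigma_\star)$), so the non-increasing nature of $\tau_t, \gamma_t$ yields $\Lambda_\star(\tau_t,\gamma_t) \preceq \Lambda_\star(\tau_{t+1}, \gamma_{t+1})$, whence $\Lambda_{t+1}^L \preceq \Lambda_\star(\tau_{t+1}, \gamma_{t+1})$.

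The main obstacle is the bookkeeping for invariant (iii): the monotonicity of the bounding sequence relies on never overshooting the moving target $\Lambda_\star(\tau_t,\gamma_t)$, and this is precisely what the step-size condition in the lemma statement guarantees via~\cref{lemma:partial_order}. Once this is in place, the remaining verifications (commutativity, eigenvalue comparisons, smallest-eigenvalue extraction) are essentially the same mechanics as in~\cref{thm:upper_bounds}, adapted to the lower-bound direction.
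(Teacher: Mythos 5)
Your proof is correct and follows essentially the same route as the paper: induction maintaining the invariants $\Lambda_t^L \preceq \Lambda_t$ and $\Lambda_t^L \preceq \Lambda_\star(\tau_t,\gamma_t)$, chaining \cref{lemma:single_step_lower}, \cref{lemma:partial_order} and \cref{coro:increase_and_no_overshooting}, and closing with the monotonicity of $\Lambda_\star(\tau,\gamma)$ in $(\tau,\gamma)$ for $\gamma < \lambda_{\min}(\Sigma_\star)$. The only (minor) difference is where the true noise $\|B_t w_t\|_2^2$ is traded for $\tau_t$: \cref{lemma:single_step_lower} literally yields the bound with $R_t(\Lambda_t)$, and the paper performs the switch to $R(\cdot\,,\tau_t)$ only after passing to the scalar matrix $\Lambda_t^L$, where it is immediate, whereas you assert $\Lambda_{t+1} \succeq f(\Lambda_t;\tau_t,\gamma_t)$ directly at $\Lambda_t$ — a step that additionally needs the commutation/positivity argument ($R(\Lambda_t,\tau)$ decreasing in $\tau$ together with $\id_k + \alpha\beta R(\Lambda_t,\tau_t) - \alpha\beta\gamma_t\id_k \succeq 0$), which is harmless under \cref{ass:random} but should be spelled out.
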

\begin{proof}
    The result follows by induction.
    It is easy to check that given assumptions satisfy the conditions of~\cref{lemma:decay_D,lemma:partial_order} for all time steps.
    Suppose that for all time $s \leq t$,
    \begin{equation*}
        \Lambda_s^L \preceq \Lambda_s \preceq \Lambda_\star, \quad \Lambda_s^L \preceq \Lambda_\star(\tau_t, \gamma_t).
    \end{equation*}
    Then, for time $t + 1$, the following conditions hold:
    \begin{enumerate}
        \item By~\cref{lemma:single_step_lower},
        \begin{equation*}
            \Lambda_{t+1} \succeq (\id_k+ \alpha \beta R_t(\Lambda_t) - \alpha \beta \gamma_t\id_k) \Lambda_{t} (\id_k+ \alpha \beta R_t(\Lambda_t) - \alpha \beta \gamma_t\id_k)^\top.
         \end{equation*}

        \item By Lemma~\ref{lemma:partial_order},
            \begin{equation*}
                \begin{split}
                    (\id_k&+ \alpha \beta R_t(\Lambda_t) - \alpha \beta \gamma_t\id_k) \Lambda_{t} (\id_k+ \alpha \beta R_t(\Lambda_t) - \alpha \beta \gamma_t\id_k)^\top \\
                    &\quad \succeq (\id_k+ \alpha \beta R_t(\Lambda^L_t) - \alpha \beta \gamma_t\id_k) \Lambda^L_{t} (\id_k+ \alpha \beta R_t(\Lambda_t^L) - \alpha\beta \gamma_t\id_k)^\top.
                \end{split}
            \end{equation*}
    
        \item Using commutativity of $\Sigma_\star$ and $\Lambda_t^L$,
            \begin{equation*}
                \begin{split}
                    (\id_k&+ \alpha \beta R_t(\Lambda^L_t) - \alpha \beta \gamma_t\id_k) \Lambda^L_{t} (\id_k+ \alpha \beta R_t(\Lambda_t^L) - \alpha \beta \gamma_t\id_k)^\top \\
                    &\quad \succeq (\id_k+ \alpha \beta R(\Lambda^L_t, \tau_t) - \alpha \beta \gamma_t\id_k) \Lambda^L_{t} (\id_k+ \alpha \beta R(\Lambda_t^L, \tau_t) - \alpha \beta \gamma_t\id_k)^\top \\
                    &\quad \succeq \Lambda_{t+1}^L.
                \end{split}
            \end{equation*}

        \item By~\cref{coro:increase_and_no_overshooting},
            \begin{equation*}
                \Lambda_t^L \preceq \Lambda_{t+1}^L \preceq \Lambda_\star(\tau_t, \gamma_t).
            \end{equation*}
        As $\tau_{t+1} \leq \tau_t$ and $\gamma_{t+1} \leq \gamma_t$,
        \begin{equation*}
            \Lambda_{t+1}^L \preceq \Lambda_\star(\tau_t, \gamma_t) \preceq \Lambda_\star(\tau_{t+1}, \gamma_{t+1}).
        \end{equation*}

        \item Combining all the results,
            \begin{equation*}
                \Lambda^L_{t+1} \preceq \Lambda_{t+1}, \quad \Lambda^L_{t+1} \preceq \Lambda_{\star}(\tau_t, \gamma_t) \preceq \Lambda_{\star}(\tau_{t+1}, \gamma_{t+1}).
            \end{equation*}
    \end{enumerate}
\end{proof}
\begin{rem}
    The condition on $\gamma_t$ in~\cref{thm:lower_bounds} can be relaxed by the condition used in~\cref{lemma:single_step_lower_refined}.
\end{rem}
\begin{coro}\label{coro:noise_decays}
    Assume that~\cref{thm:lower_bounds} holds with constants $c_1$, $c_2$, and constant sequences
    \begin{equation*}
        \tau \coloneqq c_2, \quad \gamma \coloneqq \left(c_2 \frac{\m + 1}{\m} + \frac{c_1\left(c_2 + \bar{\sigma}^2\right)}{2 \m (\m + 1)}\right) < \lambda_{\min}(\Sigma_\star).
    \end{equation*}
    Furthermore, suppose $\alpha, \beta$ satisfy the following extra properties,
    \begin{align*}
        \frac{1}{\alpha^2} &\geq 4 \|\Sigma_\star\|_2, \\[1ex]
        \frac{\left(1 - \frac{\beta}{4\alpha}\right)}{\alpha \beta} &\geq \frac{\bar{\sigma}^2}{\m + 1} + c_2 \frac{\m + 2}{\m + 1} + \frac{\m}{(\m + 1)^2}.
    \end{align*}
    Let $C_t = \Psi_t S_t \Gamma_t^\top$ be the (thin) SVD decomposition of $C_t$.
    Then, there exist non-increasing scalar sequences $\tau_t$ and $\gamma_t$ such that
    \begin{equation*}
        \|B_t w_t\|_2^2 \leq \tau_t \leq c_2, \quad \|\Gamma_t U_t \Gamma_t^\top\|_2 \leq \alpha \gamma_t \leq \frac{1}{2 \beta},
    \end{equation*}
    with the limit
    \begin{equation*}
        \lim_{t \to \infty} \tau_t = 0, \quad \lim_{t \to \infty} \gamma_t = 0.
    \end{equation*}
\end{coro}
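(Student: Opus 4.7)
The plan is to take the two sequences to be the running suprema,
\begin{equation*}
\tau_t \coloneqq \sup_{s \geq t} \|B_s w_s\|_2^2, \qquad \gamma_t \coloneqq \frac{1}{\alpha}\sup_{s \geq t} \|\Gamma_s^\top U_s \Gamma_s\|_2,
\end{equation*}
which are automatically non-increasing. The required initial bounds are immediate from the regularity supplied by \cref{thm:upper_bounds}: one has $\|B_t\|_2^2 \|w_t\|_2^2 \leq \frac{\m + c_1}{\m + 1}\, c_2 \leq c_2$ since $c_1 < 1$, giving $\tau_0 \leq c_2$; and \cref{lemma:bound_U} combined with condition (2) of \cref{thm:upper_bounds} gives $\alpha \gamma_0 \leq \alpha \gamma \leq \frac{1}{2\beta}$. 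Thus the only substantive claim is that both sequences tend to $0$.

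For $\tau_t \to 0$, the target is to invoke \cref{coro:decay_B_w}. Its hypotheses split into those of \cref{lemma:decay_D} (for $\|D_t\|_2 \to 0$), those of \cref{lemma:decay_C_w} (controlling $\|C_t w_t\|_2$), and the extra assumption $\frac{1}{\alpha^2} \geq 4\|\Sigma_\star\|_2$. All are available at every time step: \cref{thm:upper_bounds} supplies the uniform bounds $\|B_t\|_2^2 \leq \alpha^{-1}$, $\|\Lambda_t\|_2 \leq \alpha^{-1}\frac{\m}{\m+1}$, and $\|w_t\|_2^2 \leq \alpha c_2$, while the rate conditions on $\alpha,\beta$ are among those already imposed in \cref{thm:upper_bounds} or appear as the two extra conditions in the present corollary (with $c = c_2$). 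Consequently $\|B_t w_t\|_2 \to 0$, and hence $\tau_t \to 0$.

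The bulk of the work is $\gamma_t \to 0$. Decompose $U_t = W_t + \tfrac{\alpha^2}{\m} \delta_t D_t^\top D_t$; since $\delta_t = \|B_t w_t\|_2^2 + \bar{\sigma}^2$ is uniformly bounded and $\|D_t\|_2 \to 0$ by \cref{lemma:decay_D}, the second term vanishes in norm. For the $W_t$ piece, using the thin SVD $C_t = \Psi_t S_t \Gamma_t^\top$ one has $B_t^\top B_t \Gamma_t = \Gamma_t S_t^2 + D_t^\top D_t \Gamma_t$, so that
\begin{equation*}
\Gamma_t^\top (\id_{k'} - \alpha B_t^\top B_t)\, w_t = (\id_k - \alpha S_t^2)\, \Gamma_t^\top w_t - \alpha \Gamma_t^\top D_t^\top D_t w_t,
\end{equation*}
and an analogous identity for $\Gamma_t^\top B_t^\top B_t w_t$. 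Since $\|w_t\|_2$ is uniformly bounded and $\|D_t\|_2 \to 0$, $\|\Gamma_t^\top W_t \Gamma_t\|_2$ is controlled, up to vanishing terms, by $\|\Gamma_t^\top w_t\|_2$. To drive this projection to $0$, I apply the conclusion of \cref{thm:lower_bounds} with the constant sequences $\tau = c_2,\, \gamma$ (whose applicability is granted by hypothesis): this gives a non-decreasing lower bound $\Lambda_t^L \preceq \Lambda_t$ which, by \cref{lemma:convergence_f}, converges to $\Lambda_\star(c_2,\gamma)$. Because $\gamma < \lambda_{\min}(\Sigma_\star)$, this limit is strictly positive definite, so $\sigma_{\min}(S_t)^2 = \lambda_{\min}(\Lambda_t)$ is bounded below by a positive constant for all sufficiently large $t$. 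From $C_t w_t = \Psi_t S_t \Gamma_t^\top w_t$ one then concludes $\|\Gamma_t^\top w_t\|_2 \leq \sigma_{\min}(S_t)^{-1}\|C_t w_t\|_2 \leq \sigma_{\min}(S_t)^{-1}\|B_t w_t\|_2 \to 0$, using the second paragraph. Putting the pieces together, $\|\Gamma_t^\top U_t \Gamma_t\|_2 \to 0$ and thus $\gamma_t \to 0$.

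The key obstacle, and the very reason the statement involves $\Gamma_t^\top U_t \Gamma_t$ rather than $\|U_t\|_2$, is that in the misspecified regime $k' > k$ the full norm $\|U_t\|_2$ does not decay, because $\|w_t\|_2$ need not decay: only the projection of $U_t$ onto the row space of $C_t$ does. The argument therefore must bootstrap simultaneously from the upper bound (\cref{thm:upper_bounds}, for regularity of the iterates) and from the lower bound obtained with \emph{constant} sequences (\cref{thm:lower_bounds}, for asymptotic non-degeneracy of $\Lambda_t$), which is exactly the setting in which the refined control \cref{lemma:single_step_lower_refined} can then be brought to bear when this corollary is used.
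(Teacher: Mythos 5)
Your proposal is correct and follows essentially the same route as the paper's proof: the regularity from \cref{thm:upper_bounds} together with \cref{coro:decay_B_w} give $\lim_t D_t = 0$ and $\lim_t B_t w_t = 0$, \cref{thm:lower_bounds} applied with the constant sequences keeps $\lambda_{\min}(\Lambda_t)$ bounded away from zero, and the two limits are then packaged into non-increasing envelopes (your running suprema versus the paper's explicit step-wise construction). The only cosmetic difference is that you cancel $S_t$ against $C_t w_t$ to control $\|\Gamma_t^\top w_t\|_2$ and then bound $\Gamma_t^\top U_t \Gamma_t$ term by term, whereas the paper first shows $C_t U_t C_t^\top \to 0$ and divides out $S_t$ on both sides of $C_t U_t C_t^\top = \Psi_t S_t \Gamma_t^\top U_t \Gamma_t S_t \Psi_t^\top$; both arguments rest on the same eigenvalue lower bound.
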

\begin{proof}
    All the assumptions of~\cref{coro:decay_B_w} are satisfied with constant $c \coloneqq c_2$.
    Hence, 
    \begin{equation}\label{eq:limit_Bw}
        \lim_{t \to \infty} \|D_t\|_2 = 0, \quad \lim_{t \to \infty} \|B_t w_t\|_2 = 0.
    \end{equation}
    Moreover, the sequence $\|B_t w_t\|_2$ is upper bounded above,
    \begin{equation*}
        \|B_t w_t\|_2 \leq \|B_t\|_2 \|w_t\|_2 \leq c_2.
    \end{equation*}

    Take any sequence $0 \leq \tau'_t \leq c_2$ that monotonically decays to $0$.
    Set $\tau_0 = \tau'_0$ and $s_t = 0$.
    Recursively define $\tau_t$ as follows:
    for each $t > 0$, find the smallest $s_t$ such that
    \begin{equation*}
        \|B_{s} w_{s}\|_2 \leq \tau'_t,
    \end{equation*}
    for all $s \geq s_t$.
    Then, set $\tau_{s_t} = \tau'_t$ and for all $s_{t-1} \leq s < s_t$, set $\tau_{s} = \tau'_{t-1}$. 
    It is easy to check that this procedure yields a non-increasing scalar sequence $\tau_t$ with the desired limit.

    By~\cref{thm:lower_bounds} with $\gamma_t \coloneqq \gamma$, $\Lambda_t$ is non-decaying, and its lowest eigenvalue is bounded from below.
    Using the limits in~\cref{eq:limit_Bw},
    \begin{equation*}
        \lim_{t \to \infty} C_t U_t C_t^\top = 0,
    \end{equation*}
    which implies that $\lim_{t \to \infty} \|\Gamma_t U_t \Gamma_t^\top\|_2 = 0$.
    A similar argument yields a non-increasing scalar sequence $\gamma_t$ with the desired limit.
\end{proof}

\subsection{Proof of~\cref{thm:main_detailed}}\label{sec:convergence}

By~\cref{thm:upper_bounds}, $\Lambda_t \preceq \Lambda_t^U \preceq \Lambda_{\star}$ and $\|D_{t+1}\|_2 \leq \|D_t\|_2$ for all $t$.
Using the initialization condition,
\begin{equation*}
    \|B_t\|_2^2 = \|C_t\|_2^2 + \|D_t\|_2^2 \leq \|\Lambda_t\|_2 + \|D_0\|_2^2 \leq \|\Lambda_\star\|_2 + \|B_0\|_2^2 \leq \frac{1}{\alpha}.
\end{equation*}

Now, the conditions of~\cref{coro:decay_B_w} are satisfied with $c \coloneqq c_2$.
By~\cref{coro:decay_B_w},
\begin{equation*}
    \lim_{t \to \infty} D_t = 0, \quad \lim_{t \to \infty} B_t w_t = 0.
\end{equation*}

Moreover, by~\cref{coro:noise_decays}, there exist non-increasing sequences $\tau_t$ and $\gamma_t$ that are decaying.
By ~\cref{thm:lower_bounds} with these sequences yield
$\Lambda_{t}^L \preceq \Lambda_t$, for all $t$.
Finally, by~\cref{lemma:convergence_f},
\begin{equation*}
    \lim_{t \to \infty} \Lambda_t^L \to \Lambda_\star \quad\text{and} \quad \lim_{t \to \infty} \Lambda_t^U \to \Lambda_\star,
\end{equation*}
which concludes~\cref{thm:main_detailed}.

\vfill
\pagebreak
\section{Proof of \cref{coro:testloss}}\label{app:proofcoro}

\cref{coro:testlossapp} below gives a more complete version of \cref{coro:testloss}, stating an upper bound holding with probability at least $1-\delta$ for any $\delta>0$.

\begin{prop}\label{coro:testlossapp}
Let $\hat{B}, \wtest$ satisfy \cref{eq:learnedparam,eq:paramtest} for a new task defined by \cref{eq:newtask}. For any $\delta>0$ with probability at least $1-\delta$,
\begin{align*}
    \|\hat{B} \wtest-B_{\star}w_{\star}\|_2 =\mathcal{O}\Bigg(&\frac{1+\nicefrac{\bar{\sigma}^2}{\lambda_{\min}(\Sigma_{\star})}}{\m}\|w_{\star}\| + \max\left(\frac{\sqrt{k}+\sqrt{\log(\frac{4}{\delta})}}{\sqrt{\mtest}}, \frac{k+\log(\frac{4}{\delta})}{\mtest}\right)\|w_{\star}\|\\
    & + \sigma\sqrt{\frac{k}{\mtest}}\left(1+\sqrt{\frac{\log(\frac{4}{\delta})}{k}}\right)\left(1+\sqrt{\frac{\log(\frac{4}{\delta})}{\mtest}}\right)\Bigg),
\end{align*}
where we recall $\bar{\sigma}^2 = \mathrm{Tr}(\Sigma_{\star}) + \sigma^2$.
\end{prop}

Using \cref{eq:paramtest}, it comes
\begin{align*}
    \hat{B} \wtest-B_{\star}w_{\star} & = \left(\alpha \hat{B}\hat{B}^\top \Sigmatest B_{\star} - B_{\star} \right) w_{\star} + \frac{\alpha}{\mtest} \hat{B}\hat{B}^\top X^\top z\\
    & = {\color{vert}\underbrace{\color{black}B_{\star}\left(\alpha \Lambda_{\star}-\id_k\right) w_{\star}\vphantom{\frac{\alpha}{\mtest}}}_{\mytag{(A)}{termA}} 
    {\color{black}+} 
    \underbrace{\color{black}\alpha B_{\star}\Lambda_{\star}\left(B_{\star}^\top \Sigmatest B_{\star} - \id_k\right)w_{\star}\vphantom{\frac{\alpha}{\mtest}}}_{\mytag{(B)}{termB}}
    {\color{black}+}
    \underbrace{\color{black}\frac{\alpha}{\mtest}B_{\star}\Lambda_{\star}B_{\star}^\top X^\top z}_{\mytag{(C)}{termC}}}.
\end{align*}
The rest of the proof aims at individually bounding the norms of the terms~{\hypersetup{linkcolor=vert}
\ref{termA}, \ref{termB} and~\ref{termC}}.
First note that by definition of $\Lambda_{\star}$,
\begin{equation*}
    \alpha\Lambda_{\star}-\id_k = -\frac{1}{\m+1}\id_k - \frac{\m \bar{\sigma}^2}{(\m+1)^2}\left[\Sigma_{\star}+\frac{\bar{\sigma}^2}{\m+1}\id_k \right]^{-1}.
\end{equation*}
This directly implies that
\begin{align}
    \|\alpha\Lambda_{\star}-\id_k\|_2 & = \frac{1}{\m+1}+\frac{\m\bar{\sigma}^2}{(\m+1)^2}\cdot\frac{1}{\lambda_{\min}(\Sigma_{\star})+\frac{\bar{\sigma}^2}{\m+1}}\notag\\
    & \leq \frac{1+\frac{\bar{\sigma}^2}{\lambda_{\min}(\Sigma_{\star})+\nicefrac{\bar{\sigma}^2}{\m}}}{\m+1}\label{eq:bound1}.
\end{align}
Moreover, the concentration inequalities of \cref{lemma:conc1,lemma:conc2} claim that with probability at least~$1-\delta$:
\begin{gather*}
    \|B_{\star}^\top \Sigmatest B_{\star} - \id_k\|_2 \leq 3\max\left(\frac{\sqrt{k}+\sqrt{2\log(\frac{4}{\delta})}}{\sqrt{\mtest}}, \frac{\left(\sqrt{k}+\sqrt{2\log(\frac{4}{\delta})}\right)^2}{\mtest}\right),\\
    \|B_{\star}^\top X^\top z\|_2 \leq 16\sigma\sqrt{\mtest k}\ (1+\sqrt{\frac{\log(\frac{4}{\delta})}{2k}})(1+\sqrt{\frac{\log(\frac{4}{\delta})}{2\mtest}}).
\end{gather*}
These two bounds along with \cref{eq:bound1} then allow to bound the terms\hypersetup{linkcolor=vert}~\ref{termA}, \ref{termB} and~\ref{termC}
\hypersetup{linkcolor=blue}
as follows
\begin{align*}
    \|{\color{vert}(A)}\|_2 & \leq  \frac{1+\frac{\bar{\sigma}^2}{\lambda_{\min}(\Sigma_{\star})+\nicefrac{\bar{\sigma}^2}{\m}}}{\m+1}\|w_{\star}\| \\
     \|{\color{vert}(B)}\|_2 & \leq  3\left(1-\frac{1+\frac{\bar{\sigma}^2}{\|\Sigma_{\star}\|_2+\nicefrac{\bar{\sigma}^2}{\m}}}{\m+1} \right)\max\left(\frac{\sqrt{k}+\sqrt{2\log(\frac{4}{\delta})}}{\sqrt{\mtest}}, \frac{\left(\sqrt{k}+\sqrt{2\log(\frac{4}{\delta})}\right)^2}{\mtest}\right)\|w_{\star}\| \\
     \|{\color{vert}(C)}\|_2 & \leq 16\left(1-\frac{1+\frac{\bar{\sigma}^2}{\|\Sigma_{\star}\|_2+\nicefrac{\bar{\sigma}^2}{\m}}}{\m+1} \right)  \sigma\sqrt{\frac{k}{\mtest}}\ (1+\sqrt{\frac{\log(\frac{4}{\delta})}{2k}})(1+\sqrt{\frac{\log(\frac{4}{\delta})}{2\mtest}}),
\end{align*}
where we used in the two last bounds that $\alpha\|\Lambda_{\star}\|_2\leq1-\frac{1+\frac{\bar{\sigma}^2}{\|\Sigma_{\star}\|_2+\nicefrac{\bar{\sigma}^2}{\m}}}{\m+1}$. Summing these three bounds finally yields \cref{coro:testlossapp}, and \cref{coro:testloss} with the particular choice $\delta=4e^{-\frac{k}{2}}$. \qed

\begin{lem}\label{lemma:conc1}
For any $\delta>0$, with probability at least $1-\frac{\delta}{2}$,
\begin{equation*}
    \|B_{\star}^\top \Sigmatest B_{\star} - \id_k\|_2 \leq 3\max\left(\frac{\sqrt{k}+\sqrt{2\log(\frac{4}{\delta})}}{\sqrt{\mtest}}, \frac{\left(\sqrt{k}+\sqrt{2\log(\frac{4}{\delta})}\right)^2}{\mtest}\right)
\end{equation*}
and $\|B_{\star}^\top X^\top\|_2\leq \sqrt{\mtest}\left(1+\frac{\sqrt{k}+\sqrt{2\log(\frac{1}{4\delta})}}{\sqrt{\mtest}}\right)$.
\end{lem}
\begin{proof}
Note that $B_{\star}^\top X^\top$ is a matrix in $\R^{k\times \mtest}$ whose entries are independent standard Gaussian variables. From there, applying Corollary~5.35 and Lemma~5.36 from \citet{vershynin2010introduction} with $t=\sqrt{2\log(\frac{1}{4\delta})}$ directly leads to \cref{lemma:conc1}.
\end{proof}

\begin{lem}\label{lemma:conc2}
\begin{equation*}
    \p\left(\|B_{\star}^\top X^\top z\|_2 \geq 16\sigma\sqrt{\mtest k}\ (1+\sqrt{\frac{\log(\frac{4}{\delta})}{2k}})(1+\sqrt{\frac{\log(\frac{4}{\delta})}{2\mtest}})\right) \leq \frac{\delta}{2}.
\end{equation*}
\end{lem}
\begin{proof}
    Let $A=B_{\star}^\top X^\top$ in this proof. Recall that $A$ has independent entries following a standard normal distribution. $A$ and $z$ are independent, which implies that $A\frac{z}{\|z\|}\sim \cN(0,\id_k)$. 
    Typical bounds on Gaussian variables then give \citep[see e.g.][Theorem 1.19]{rigollet2023highdimensional}
    \begin{equation*}
        \p\left(\frac{\|Az\|}{\|z\|}\geq4\sqrt{k}\ (1+\sqrt{\frac{\log(\frac{4}{\delta})}{2k}})\right)\leq \frac{\delta}{4}.
    \end{equation*}
    A similar bound holds on the $\sigma$ sub-Gaussian vector $z$, which is of dimension $\mtest$:
    \begin{equation*}
        \p\left(\|z\|\geq 4\sqrt{\mtest}\ (1+\sqrt{\frac{\log(\frac{4}{\delta})}{2\mtest}})\right)\leq e^{-\frac{\mtest}{2}}.
    \end{equation*}
    Combining these two bounds then yields \cref{lemma:conc2}.
\end{proof}

\vfill
\pagebreak
\section{Technical lemmas}

\begin{lem}
    \label{lemma:wwtop}
    Let $\Sigma = \frac{1}{n} X^\top X$ where $X \in \R^{n \times d}$ is such that each row is composed of i.i.d. samples $x \sim N(0, \id_d)$.
    For any unit vector $v$,
    \begin{equation*}
        \E[\Sigma v v^\top \Sigma] = \frac{1}{n} \id_d + \frac{n+1}{n} v v^\top.
    \end{equation*}
\end{lem}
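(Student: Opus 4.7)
The plan is to expand $\Sigma v v^\top \Sigma$ as a double sum over the i.i.d. rows, split into diagonal and off-diagonal contributions, and compute each using standard Gaussian moment identities.

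Writing $\Sigma = \frac{1}{n}\sum_{i=1}^n x_i x_i^\top$ with the $x_i$ i.i.d.\ $\cN(0,\id_d)$, I first expand
\begin{equation*}
\Sigma v v^\top \Sigma = \frac{1}{n^2}\sum_{i,j=1}^n (x_i^\top v)(x_j^\top v)\, x_i x_j^\top.
\end{equation*}
Taking expectations, I separate the $n(n-1)$ off-diagonal terms from the $n$ diagonal terms.

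For the off-diagonal contribution ($i\neq j$), I use independence of $x_i$ and $x_j$ together with the elementary identity $\E[(x^\top v)x] = v$ (which holds because $\E[xx^\top] = \id_d$). This yields $\E[(x_i^\top v)(x_j^\top v)\,x_i x_j^\top] = v v^\top$, so the off-diagonal terms contribute $\frac{n-1}{n} v v^\top$.

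For the diagonal contribution, I need to evaluate $\E[(x^\top v)^2\, x x^\top]$ for a single standard Gaussian $x$. By rotational invariance of $\cN(0,\id_d)$ (and the assumption that $v$ is a unit vector), I may assume without loss of generality $v = e_1$, in which case the $(i,j)$-th entry is $\E[x_1^2 x_i x_j]$. Vanishing of odd moments kills the off-diagonal entries, and Isserlis' theorem gives $\E[x_1^4] = 3$ and $\E[x_1^2 x_i^2] = 1$ for $i\neq 1$. Thus $\E[(x^\top v)^2\, x x^\top] = \id_d + 2 vv^\top$, contributing $\frac{1}{n}\id_d + \frac{2}{n}vv^\top$.

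Adding the two pieces gives $\frac{1}{n}\id_d + \big(\frac{n-1}{n} + \frac{2}{n}\big)vv^\top = \frac{1}{n}\id_d + \frac{n+1}{n}vv^\top$, as claimed. The only computation requiring care is the fourth-moment identity, but this is standard via Wick's formula (or a one-line direct calculation after the rotational reduction), so there is no real obstacle.
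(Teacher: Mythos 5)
Your proof is correct and follows essentially the same route as the paper: split the double sum into the $n$ diagonal terms and $n(n-1)$ cross terms, handle the cross terms via independence and $\E[(x^\top v)x]=v$, and compute the fourth-moment term $\E[(x^\top v)^2 xx^\top]=\id_d+2vv^\top$. The only cosmetic difference is that you reduce to $v=e_1$ by rotational invariance, whereas the paper computes the entries directly for a general unit vector $v$; both are fine.
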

\begin{proof}
    Let $x, x' \sim N(0, \id_d)$.
    By expanding covariance $\Sigma$ and i.i.d. assumption,
    \begin{equation*}
        \begin{split}
            \E[\Sigma v v^\top \Sigma] &= {\color{vert}\underbrace{\color{black} \frac{1}{n} \E \left[\langle x, v\rangle^2 x x^\top\right]}_{\mytag{(A)}{wwtop_termA}}}  + {\color{vert}\underbrace{\color{black}\frac{n-1}{n} \E \left[\langle x, v\rangle \langle x', v\rangle x x'^\top \right]}_{\mytag{(B)}{wwtop_termB}}}.
        \end{split}
    \end{equation*}
    For the term{\hypersetup{linkcolor=vert}~\ref{wwtop_termA}},
    \begin{equation*}
        \E \left[\langle x, v\rangle^2 x x^\top\right]_{jk} = \E \left[(\sum_{i=1}^d x_i v_i)^2 x_j x_k\right].
    \end{equation*}
    Any term with an odd-order power cancels out as the data is symmetric around the origin, and
    \begin{equation*}
        \E \left[\langle x, v\rangle^2 x x^\top\right] = 2 v v^\top + \id_d,
    \end{equation*}
    by the following computations,
    \begin{equation*}
        \begin{split}
            \E \left[\langle x, v\rangle^2 x x^\top\right]_{jj} &= v_j^2 \E [x_j^4] + \sum_{i \neq j} v_i^2 \E[x_i^2 x_j^2] = 3 v_j^2 + \sum_{i \neq j} v_i^2 = 2 v_j^2 + 1, \\
            \E \left[\langle x, v\rangle^2 x x^\top\right]_{jk} &= 2 v_j v_k \E[x_j^2 x_k^2] = 2 v_j v_k.
        \end{split}
    \end{equation*}
    For the term{\hypersetup{linkcolor=vert}~\ref{wwtop_termB}}, by i.i.d. assumption,
    \begin{equation*}
        \begin{split}
            \E \left[\langle x, v\rangle \langle x', v\rangle x x'^\top \right] &= \E[\langle x, v \rangle x]\E[\langle x, v \rangle x]^\top.
        \end{split}
    \end{equation*}
    With a similar argument, it is easy to see
    \begin{equation*}
        \E[\langle x, v \rangle x]_i = \E[x_i^2 v_i] = v_i, \quad\text{and} \quad \E[\langle x, v\rangle x] = v.
    \end{equation*}
    Combining the two terms yields Lemma~\ref{lemma:wwtop}.
\end{proof}
\begin{lem}
    \label{lemma:commuting_psd}
    Let $A$ and $B$ be positive semi-definite symmetric matrices of shape $k \times k$ and $AB = BA$.
    Then,
    \begin{equation*}
        A B \preceq \|A\|_2 B.
    \end{equation*}
\end{lem}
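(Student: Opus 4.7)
The plan is to exploit the fact that two commuting symmetric matrices are simultaneously diagonalisable by an orthogonal matrix. Since $A, B$ are symmetric and $AB = BA$, there exists an orthogonal $Q \in \R^{k \times k}$ and diagonal matrices $D_A, D_B$ with non-negative entries (since $A, B \succeq 0$) such that $A = Q D_A Q^\top$ and $B = Q D_B Q^\top$.

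Next, I would compute the difference $\|A\|_2 B - AB$ in this common eigenbasis:
\begin{equation*}
\|A\|_2 B - AB = Q \bigl( \|A\|_2 D_B - D_A D_B \bigr) Q^\top = Q \bigl( \|A\|_2 \id_k - D_A \bigr) D_B Q^\top.
\end{equation*}
Here I used that $D_A$ and $D_B$ are diagonal, hence commute.

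The crucial observation is that $\|A\|_2 \id_k - D_A$ is diagonal with non-negative entries, because $\|A\|_2$ equals the largest diagonal entry of $D_A$ (the maximal eigenvalue of $A$). Likewise $D_B$ is diagonal with non-negative entries, so their product is again a diagonal matrix with non-negative entries and thus positive semi-definite. Conjugating by the orthogonal matrix $Q$ preserves positive semi-definiteness, which gives $\|A\|_2 B - AB \succeq 0$, i.e.\ $AB \preceq \|A\|_2 B$.

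There is essentially no obstacle here: the entire argument rests on simultaneous diagonalisation, and the product of commuting PSD matrices being PSD (which fails in general but holds precisely because of commutativity, which reduces the product to a diagonal-times-diagonal computation). The only subtle point worth flagging is that the statement is false without the commutativity hypothesis, so the proof must genuinely use $AB = BA$, which it does at the step where $D_A D_B$ is rewritten as $(\|A\|_2 \id_k - D_A) D_B$ within a single orthogonal change of basis.
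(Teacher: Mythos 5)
Your proof is correct and follows essentially the same route as the paper: simultaneous diagonalisation via a common orthogonal basis, followed by an entrywise comparison of the diagonal factors showing $\|A\|_2 B - AB \succeq 0$. The paper phrases the final step as a quadratic-form bound $v^\top AB\, v \le \|A\|_2\, v^\top B v$ rather than factoring out $(\|A\|_2 \id_k - D_A) D_B$, but this is the same argument.
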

\begin{proof}
    As $A$ and $B$ are normal matrices that commute, there exist an orthogonal $Q$ such that $A = Q \Lambda_A Q^\top$ and $B = Q \Lambda_B Q^\top$ where $\Lambda_A$ and $\Lambda_B$ are diagonal.
    Then,
    \begin{equation*}
        A B = Q \Lambda_A \Lambda_B Q^\top \preceq \|A\| Q \Lambda_B Q^\top,
    \end{equation*}
    as for any vector $v \in \R^k$,
    \begin{align*}
        v^\top A B v = \sum_{i=1}^k (\Lambda_A)_{ii} (\Lambda_B)_{ii} (Qv_i)^2 \leq \|A\|_2 \sum_{i=1}^k (\Lambda_B)_{ii} (Qv_i)^2 = \|A\|_2 B.
    \end{align*}
\end{proof}
\begin{lem}
    \label{lemma:commuting_higher_order}
    Let $A$ and $B$ be positive semi-definite symmetric matrices of shape $k \times k$ such that $AB = BA$ and $A \preceq B$.
    Then, for any $k \in \N$,
    \begin{equation}
        A^k \preceq B^k.
    \end{equation}
\end{lem}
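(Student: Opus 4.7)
The plan is to reduce the statement to a scalar comparison via simultaneous diagonalisation, exactly in the spirit of the proof of \cref{lemma:commuting_psd} just above. Since $A$ and $B$ are symmetric (hence normal) and commute, they are simultaneously orthogonally diagonalisable: there exists an orthogonal $Q \in \R^{k \times k}$ and diagonal matrices $\Lambda_A, \Lambda_B$ such that
\begin{equation*}
    A = Q \Lambda_A Q^\top, \qquad B = Q \Lambda_B Q^\top.
\end{equation*}
The positive semi-definiteness of $A$ and $B$ implies that the diagonal entries of $\Lambda_A$ and $\Lambda_B$ are non-negative. Moreover, evaluating the quadratic form $v^\top (B - A) v \geq 0$ at each column of $Q$ gives the entrywise diagonal comparison $0 \leq (\Lambda_A)_{ii} \leq (\Lambda_B)_{ii}$ for every $i$.

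Next, I would use the fact that the scalar map $x \mapsto x^k$ is monotonically non-decreasing on $[0, \infty)$ to conclude $(\Lambda_A)_{ii}^k \leq (\Lambda_B)_{ii}^k$ for each $i$, which is the same as $\Lambda_A^k \preceq \Lambda_B^k$ (a diagonal inequality of p.s.d.~diagonal matrices). Conjugating back by $Q$ and using that $Q Q^\top = \id_k$, we have $A^k = Q \Lambda_A^k Q^\top$ and $B^k = Q \Lambda_B^k Q^\top$, so
\begin{equation*}
    B^k - A^k = Q (\Lambda_B^k - \Lambda_A^k) Q^\top \succeq 0,
\end{equation*}
which is exactly $A^k \preceq B^k$.

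There is no real obstacle here: the entire content is that commuting symmetric p.s.d.~matrices behave essentially like non-negative scalars along their common eigenbasis, and the monotonicity of $x \mapsto x^k$ on $[0, \infty)$ finishes the job. The only thing to be slightly careful about is writing the simultaneous diagonalisation cleanly (which is why I invoke it via orthogonal $Q$ rather than an unstructured change of basis), so that the final conjugation preserves the Loewner order. An alternative, if one preferred to avoid simultaneous diagonalisation, would be to argue by induction on $k$ using $A^{k+1} - B^{k+1} = A(A^k - B^k) + (A - B)B^k$ together with commutativity to keep all factors symmetric and p.s.d., but the diagonalisation argument is shorter and more transparent.
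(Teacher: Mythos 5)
Your proposal is correct and matches the paper's own proof: both arguments simultaneously diagonalise the commuting symmetric matrices with an orthogonal $Q$, deduce the entrywise eigenvalue comparison from $A \preceq B$, and conclude via monotonicity of $x \mapsto x^k$ on the non-negative reals. Your write-up is just a slightly more detailed version of the same argument, so there is nothing to change.
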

\begin{proof}
    As $A$ and $B$ are normal matrices that commute, there exist an orthogonal $Q$ such that $A = Q \Lambda_A Q^\top$ and $B = Q \Lambda_B Q^\top$ where $\Lambda_A$ and $\Lambda_B$ are diagonal.
    Then,
    \begin{equation*}
        B^k - A^k = Q (\Lambda_B^k - \Lambda_A^k) Q^\top \succeq 0,
    \end{equation*}
    as $B \succeq A$ implies $\Lambda_B \succeq \Lambda_A$.
\end{proof}

\vfill
\pagebreak
\section{Fixed points characterized by~\cref{thm:main} are global minima}\label{app:global}

The ANIL loss with $m$ samples in the inner loop reads,
\begin{equation}\label{eq:ANIL_detailed}
\begin{aligned}
    \cL_{\mathrm{ANIL}}(B, w; m) &= \frac{1}{2}\E_{w_{\star,i}, X_i, y_i}\left[\left\|B \tilde{w}(w; X_i, y_i) - B_\star w_{\star, i} \right\|^2\right], \\
\end{aligned}
\end{equation}
where is the updated head after a step of gradient descent, \ie
\begin{equation}\label{eq:ANIL_inner_step}
    \tilde{w}(w; X_i, y_i) \coloneqq \left(w - \frac{\alpha}{m} B^\top X_i^\top (X_i B w - y_i)\right).
\end{equation}
Whenever the context is clear, we will write $\tilde{w}$ or $\tilde{w}(w)$ instead of $\tilde{w}(w; X_i, y_i)$ for brevity.
\cref{thm:main} proves that minimizing objective in~\cref{eq:ANIL_detailed} with FO-ANIL algorithm asymptotically convergences to a set of fixed points, under some conditions. In~\cref{prop:global}, we show that these points are global minima of the~\cref{eq:ANIL_detailed}. 

\begin{prop}\label{prop:global}
    Fix any $(\hat{B}, \hat{w})$ that satisfy the three limiting conditions of~\cref{thm:main},
    \begin{equation*}
        \begin{aligned}
            B_{\star, \perp}^\top \hat{B} &= 0, \\
            \hat{B} \hat{w} &= 0, \\
            B_\star^\top \hat{B} \hat{B}^\top B_\star &= \Lambda_\star.
        \end{aligned}
    \end{equation*}
    Then, $(\hat{B}, \hat{w})$ is the minimizer of the~\cref{eq:ANIL_detailed}, \ie
    \begin{equation*}
        (\hat{B}, \hat{w}) \in \argmin_{B, w} \cL_{\mathrm{ANIL}}(B, w; \m).
    \end{equation*}
\end{prop}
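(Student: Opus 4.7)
The plan is to expand $\cL_{\mathrm{ANIL}}$ in closed form and exhibit a decomposition into non-negative contributions that the three limiting conditions saturate simultaneously. I would first substitute $y_i = X_i B_\star w_{\star,i} + z_i$ into~\cref{eq:ANIL_inner_step} to obtain
\[
B\tilde w - B_\star w_{\star,i} = B(\id_{k'} - \alpha B^\top \Sigma_i B)w + (\alpha BB^\top \Sigma_i - \id_d)B_\star w_{\star,i} + \frac{\alpha}{\m} BB^\top X_i^\top z_i,
\]
with $\Sigma_i \coloneqq \frac{1}{\m} X_i^\top X_i$. Thanks to~\cref{ass:random}, $w_{\star,i}$ and $z_i$ are centred and independent of each other and of $X_i$, so the three cross terms in the expected squared norm all vanish. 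Writing $P \coloneqq BB^\top$, this yields the clean split $\cL_{\mathrm{ANIL}}(B,w;\m) = T_1(B,w) + T_2(P) + T_3(P)$ with $T_1 = \tfrac{1}{2}\E\|B(\id_{k'} - \alpha B^\top \Sigma_i B)w\|^2$ and $T_3 = \tfrac{\alpha^2 \sigma^2}{2\m}\mathrm{Tr}(P^2)$. Clearly $T_1 \geq 0$, and $T_1(B,w)=0$ whenever $Bw = 0$ (since then $\Sigma_i Bw = 0$ pointwise), so the hypothesis $\hat B \hat w = 0$ annihilates $T_1(\hat B, \hat w)$.

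Next, for $T_2$, I would invoke the Isserlis/Wishart identity $\E[\Sigma_i A \Sigma_i] = \tfrac{\m+1}{\m}A + \tfrac{\mathrm{Tr}(A)}{\m}\id_d$ for symmetric $A$ (a Lemma~\ref{lemma:wwtop}-style computation) with $A = B_\star\Sigma_\star B_\star^\top$; after standard trace manipulations this gives
\[
T_2(P) + T_3(P) = \tfrac{\alpha^2(\m+1)}{2\m}\mathrm{Tr}(PB_\star\Sigma_\star B_\star^\top P) + \tfrac{\alpha^2 \bar\sigma^2}{2\m}\mathrm{Tr}(P^2) - \alpha\,\mathrm{Tr}(PB_\star\Sigma_\star B_\star^\top) + \tfrac{1}{2}\mathrm{Tr}(\Sigma_\star),
\]
which is a strictly convex quadratic in symmetric $P$: indeed its Hessian along any symmetric direction $H\neq 0$ equals $\alpha^2\tfrac{\m+1}{\m}\|LH\|_F^2 + \alpha^2\tfrac{\bar\sigma^2}{\m}\|H\|_F^2 > 0$, with $L^\top L = B_\star\Sigma_\star B_\star^\top$.

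The first-order condition then reads as a Lyapunov equation $(A + \mu \id_d) P + P(A + \mu \id_d) = 2 B_\star\Sigma_\star B_\star^\top$ with $A = \alpha\tfrac{\m+1}{\m}B_\star\Sigma_\star B_\star^\top$ and $\mu = \alpha\tfrac{\bar\sigma^2}{\m} > 0$, which has a unique solution since $A + \mu\id_d \succ 0$. Looking for $P = B_\star\Lambda B_\star^\top$ with $\Lambda$ commuting with $\Sigma_\star$, the equation reduces to $\alpha\Lambda\bigl[(\m+1)\Sigma_\star + \bar\sigma^2 \id_k\bigr] = \m\,\Sigma_\star$, whose solution is exactly $\Lambda = \Lambda_\star$ (matching the formula in~\cref{thm:main} after rearrangement). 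The minimizer $P^\star = B_\star\Lambda_\star B_\star^\top$ is PSD of rank $k \leq k'$, hence achievable as $BB^\top$. The hypotheses $B_{\star,\perp}^\top \hat B = 0$ and $B_\star^\top \hat B \hat B^\top B_\star = \Lambda_\star$ force $\hat B = B_\star B_\star^\top \hat B$ and therefore $\hat B\hat B^\top = B_\star\Lambda_\star B_\star^\top = P^\star$; combined with $T_1(\hat B, \hat w) = 0$, this gives $\cL_{\mathrm{ANIL}}(\hat B, \hat w) = \min_P (T_2+T_3)(P) \leq T_1(B,w) + (T_2+T_3)(BB^\top) = \cL_{\mathrm{ANIL}}(B,w)$ for every $(B,w)$.

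The main technical step is the Isserlis/Wishart second-moment expansion and verifying that $\Lambda_\star$ indeed solves the resulting Lyapunov equation; beyond that, everything reduces to the cross-term cancellations afforded by~\cref{ass:random} and to convexity bookkeeping.
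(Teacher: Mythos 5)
Your proposal is correct, and it takes a genuinely different route from the paper. The paper proves \cref{prop:global} by a three-step domination argument: \cref{lem:perp_dom} shows that deleting the $B_{\star,\perp}$-component of $B$ can only decrease the loss, \cref{lem:w_dom} shows that projecting $w$ onto $\ker(B)$ can only decrease it further, and \cref{lem:lambda_dom} then computes the loss on the resulting constrained set as a quadratic in $\Lambda = B_\star^\top B B^\top B_\star$ and minimises it, using at that stage that $\Sigma_\star$ is a scaled identity (\cref{ass:random}). You instead perform a single global decomposition $\cL_{\mathrm{ANIL}} = T_1 + (T_2+T_3)$, where the cross terms vanish by centring and independence of $w_{\star,i}$, $z_i$, $X_i$; the term $T_1 \geq 0$ is annihilated exactly when $Bw=0$, while $T_2+T_3$ is a strictly convex quadratic in $P = BB^\top$ whose first-order condition is a Lyapunov equation with positive-definite coefficient, hence uniquely solved, and your verification that $P^\star = B_\star \Lambda_\star B_\star^\top$ solves it (together with $\mathrm{rank}(P^\star)=k \leq k'$, so $P^\star$ is attainable as $BB^\top$, and $\hat{B}\hat{B}^\top = P^\star$ under the three limiting conditions) closes the argument. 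The computational core is shared (the \cref{lemma:wwtop}-style Wishart second-moment identity $\E[\Sigma_i A \Sigma_i] = \tfrac{\m+1}{\m}A + \tfrac{\mathrm{Tr}(A)}{\m}\id_d$ and the cross-term cancellations), but your organisation buys a few things: it is a one-shot argument rather than a sequential reduction, it works verbatim for a general task covariance $\Sigma_\star$ (the paper's \cref{lem:lambda_dom} invokes $\Sigma_\star \propto \id_k$), and it additionally identifies the optimal $P$ as unique, so every minimiser must have $BB^\top = B_\star\Lambda_\star B_\star^\top$. The paper's stepwise version, in exchange, mirrors the three limiting conditions of \cref{thm:main} more transparently, showing how each condition individually improves the loss.
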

\begin{proof}
    The strategy of proof is to iteratively show that modifying points to satisfy these three limits reduce the ANIL loss.
    \cref{lem:perp_dom,lem:w_dom,lem:lambda_dom} demonstrates how to modify each point such that the resulting point obeys a particular limit and has better generalization.

    For any $(B, w)$, define the following points,
    \begin{equation*}
        \begin{aligned}
            (B_1, w_1) &= \left(B - B_{\star, \perp}^\top B_{\star, \perp}^\top B, w\right), \\
            (B_2, w_2) &= \left(B_1, w_1 - B_1^\top \left(B_1 B_1^\top\right)^{-1} B_1 w_1\right). \\
        \end{aligned}
    \end{equation*}
    Then,~\cref{lem:perp_dom,lem:w_dom,lem:lambda_dom} show that
    \begin{equation*}
        \begin{aligned}
            \cL_{\mathrm{ANIL}}(B, w; \m) \geq \cL_{\mathrm{ANIL}}(B_1, w_1; \m) \geq \cL_{\mathrm{ANIL}}(B_2, w_2; \m) \geq \cL_{\mathrm{ANIL}}(\hat{B}, \hat{w}; \m). \\    
        \end{aligned}
    \end{equation*}
    Since $(B, w)$ is arbitrary,
    \begin{equation*}
        (\hat{B}, \hat{w}) \in \argmin_{B, w} \cL_{\mathrm{ANIL}}(B, w; \m).
    \end{equation*}
\end{proof}

\begin{lem}
\label{lem:perp_dom}
    Consider any parameters $(B,w)\in\R^{d\times k'}\times\R^{k'}$.
    Let $B' = B - B_{\star, \perp} B_{\star, \perp}^\top B$.
    Then, for any $m > 0$, we have
    \begin{equation*}\label{eq:perp_dom}
        \cL_{\mathrm{ANIL}}(B, w; m) \geq \cL_{\mathrm{ANIL}}(B', w; m).
    \end{equation*}
\end{lem}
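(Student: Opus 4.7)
The plan is to compute $\cL_{\mathrm{ANIL}}(B, w; m)$ and $\cL_{\mathrm{ANIL}}(B', w; m)$ in closed form using the Gaussian structure of $X_i, z_i, w_{\star,i}$, then show the difference is a sum of nonnegative contributions. Set $C \coloneqq B_{\star}^\top B$ and $D \coloneqq B_{\star,\perp}^\top B$, so that $B = B_\star C + B_{\star,\perp} D$ and $B' = B_\star C$. Since $B_\star^\top B_{\star,\perp} = 0$, the residual splits orthogonally as
\[
\|B\tilde{w} - B_\star w_{\star,i}\|^2 = \|C\tilde{w} - w_{\star,i}\|^2 + \|D\tilde{w}\|^2,
\]
and the second summand vanishes for $B'$ since its $B_{\star,\perp}$-block is zero.

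From $\tilde{w} = w - \tfrac{\alpha}{m} B^\top X_i^\top(X_i Bw - y_i)$ and $y_i = X_i B_\star w_{\star,i} + z_i$, one obtains the closed form $B\tilde{w} - B_\star w_{\star,i} = (\id_d - \tfrac{\alpha}{m} BB^\top X_i^\top X_i)(Bw - B_\star w_{\star,i}) + \tfrac{\alpha}{m} BB^\top X_i^\top z_i$. Take expectations iteratively: first over $z_i$ (mean zero, covariance $\sigma^2 \id_m$, independent of everything else); then over $X_i$ using the Isserlis-type identity $\E[X_i^\top X_i M X_i^\top X_i] = m\,\mathrm{Tr}(M)\,\id_d + m^2 M + m M^\top$ for any matrix $M$ (same kind of Gaussian fourth-moment calculation as in \cref{lemma:wwtop}); and finally over $w_{\star,i}$ using $\E[w_{\star,i}] = 0$ and $\E[w_{\star,i} w_{\star,i}^\top] = \Sigma_\star$. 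Setting $G \coloneqq BB^\top$ and $T \coloneqq B w w^\top B^\top + B_\star \Sigma_\star B_\star^\top$, this yields the explicit expression
\[
2\cL_{\mathrm{ANIL}}(B, w; m) = \mathrm{Tr}\!\bigl((\id_d - \alpha G)^2 T\bigr) + \tfrac{\alpha^2}{m}\bigl[\mathrm{Tr}(G^2 T) + \mathrm{Tr}(G^2)\,\mathrm{Tr}(T) + \sigma^2\,\mathrm{Tr}(G^2)\bigr],
\]
with the analogous formula for $(B', w)$ upon replacing $G$ by $G' = B_\star B_\star^\top G B_\star B_\star^\top$ and $T$ by $T' = B' w w^\top B'^\top + B_\star \Sigma_\star B_\star^\top$.

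I would then choose coordinates in which the columns of $B_\star$ are the first $k$ standard basis vectors, so that $B$ has $C$ as top block and $D$ as bottom block, and set $P \coloneqq C^\top C$, $Q \coloneqq D^\top D \in \R^{k' \times k'}$. Each trace difference becomes explicit: $\mathrm{Tr}(G^2) - \mathrm{Tr}(G'^2) = 2\mathrm{Tr}(PQ) + \mathrm{Tr}(Q^2) \geq 0$, $\mathrm{Tr}(T) - \mathrm{Tr}(T') = w^\top Q w \geq 0$, and
\[
\mathrm{Tr}\!\bigl((\id - \alpha G)^2 T\bigr) - \mathrm{Tr}\!\bigl((\id - \alpha G')^2 T'\bigr) = \|D(\id - \alpha B^\top B) w\|^2 + \alpha^2\,\mathrm{Tr}(Q C^\top \Sigma_\star C) + \Delta,
\]
where $\Delta \coloneqq \|C(\id - \alpha B^\top B) w\|^2 - \|C(\id - \alpha P) w\|^2 = -2\alpha\,w^\top(P - \alpha P^2) Q w + \alpha^2\,w^\top Q P Q w$ gathers the $B_\star$-direction contribution. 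Together with the $\tfrac{\alpha^2}{m}$-weighted trace differences, the first two summands above are manifestly nonnegative, so the proof reduces to absorbing $\Delta$.

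The main obstacle will be this last step: the first summand of $\Delta$ has indefinite sign, and matching it against the positive terms requires careful bookkeeping. I plan to combine $\Delta$ with $\|D(\id - \alpha B^\top B) w\|^2$ (whose expansion of $(\id - \alpha(P+Q)) Q (\id - \alpha(P+Q))$ produces $w^\top PQP w$, $w^\top PQ^2 w$, $w^\top Q^3 w$) and with the $\tfrac{\alpha^2}{m}$ trace terms (which contain additional copies of $w^\top (P+Q)^3 w - w^\top P^3 w$ as well as $(2\mathrm{Tr}(PQ)+\mathrm{Tr}(Q^2))(w^\top Pw + \mathrm{Tr}(\Sigma_\star)+\sigma^2)$), and complete squares in $w$ across these bias and variance contributions. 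The PSD building blocks $(Pw)^\top Q(Pw)\geq 0$, $(Qw)^\top P(Qw)\geq 0$ together with \cref{lemma:commuting_psd} and \cref{lemma:commuting_higher_order} applied to the commuting polynomial parts of $P, Q$ should then deliver nonnegativity of the total.
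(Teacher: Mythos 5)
Your closed-form reduction is sound as far as it goes: the Gaussian fourth-moment identity, the resulting expression $2\cL_{\mathrm{ANIL}}(B,w;m)=\mathrm{Tr}\bigl((\id_d-\alpha G)^2T\bigr)+\frac{\alpha^2}{m}\bigl(\mathrm{Tr}(G^2T)+\mathrm{Tr}(G^2)\mathrm{Tr}(T)+\sigma^2\mathrm{Tr}(G^2)\bigr)$, and your identification of the leftover term $\Delta$ all check out. But the proposal is not a proof: everything hinges on the final step of "absorbing $\Delta$", and that step is only announced ("should then deliver nonnegativity"), not performed. This is not a presentational gap, because the absorption cannot work in the stated generality. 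Keep only the $w$-dependent part of your expression and specialise to $k=k'=1$, $B=(c,e)^\top$ in coordinates where $B_\star=e_1$: the $m$-independent contribution to the difference is $w^2\left(h(c^2+e^2)-h(c^2)\right)$ with $h(x)=x(1-\alpha x)^2$, which is negative whenever $\frac{1}{3\alpha}<c^2<c^2+e^2<\frac{1}{\alpha}$; the $\frac{\alpha^2}{m}$-corrections (about $\frac{2\alpha^2}{m}w^2\bigl((c^2+e^2)^3-c^6\bigr)$) do not compensate once $m$ is moderately large, and the $\Sigma_\star$- and $\sigma^2$-contributions are independent of $w$, hence dominated by taking $w$ large. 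So the quantity you are trying to prove nonnegative by completing squares is genuinely sign-indefinite over the stated parameter range, and no bookkeeping of PSD blocks will rescue it; some extra restriction on $(B,w)$ (for instance $\alpha\|B\|_2^2\le 1/3$, or $w$ orthogonal to the row space of $B$) would be needed.

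For comparison, the paper's proof is a two-line orthogonality argument: split $\|B\tilde w-B_\star w_{\star,i}\|^2$ into its $\col(B_\star)$ and $\col(B_{\star,\perp})$ components, drop the second, and identify the first with $\cL_{\mathrm{ANIL}}(B',w;m)$. Note, however, that this identification treats the adapted head $\tilde w$ as unchanged when $B$ is replaced by $B'$, whereas $\tilde w$ depends on $B$ through both $B^\top X_i^\top$ and $X_iBw$; your term $\Delta=\|C(\id_{k'}-\alpha B^\top B)w\|^2-\|C(\id_{k'}-\alpha C^\top C)w\|^2$ is exactly the discrepancy this identification ignores, and it is the same sign-indefinite quantity that blocks your own computation. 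So the honest conclusion of your calculation is that the inequality, as quantified over all $(B,w)$ and $m$, needs an additional hypothesis (or a modified comparison point sharing the same inner-loop update) rather than a cleverer completion of squares; as it stands, your proposal does not establish the lemma, and the obstruction you ran into is real rather than an artifact of the brute-force route.
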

\begin{proof}
    Decomposing the loss into two orthogonal terms yields the desired result,
    \begin{equation*}
    \begin{aligned}
        \cL_{\mathrm{ANIL}}(B, w; m) &= \frac{1}{2}\E_{w_{\star,i}, X_i, y_i}\left[\left\|B_\star^\top B \tilde{w} - w_{\star, i} \right\|^2\right] + \frac{1}{2}\E_{X_i, y_i}\left[\big\|B_{\star, \perp}^\top B \tilde{w}\big\|^2\right] \\
        &\leq \frac{1}{2}\E_{w_{\star,i}, X_i, y_i}\left[\left\|B_\star^\top B \tilde{w} - w_{\star, i} \right\|^2\right] \\
        &= \cL_{\mathrm{ANIL}}(B', w; m).
    \end{aligned}
    \end{equation*}
\end{proof}

\begin{lem}
\label{lem:w_dom}
    Consider any parameters $(B,w)\in\R^{d\times k'}\times\R^{k'}$ such that $B_{\star, \perp}^\top B = 0$.
    Let $w' = w - B^\top \left(BB^\top\right)^{-1}B w$.
    Then, for any $m > 0$, we have
    \begin{equation*}\label{eq:w_dom}
        \cL_{\mathrm{ANIL}}(B, w; m) \geq \cL_{\mathrm{ANIL}}(B, w'; m),
    \end{equation*}
\end{lem}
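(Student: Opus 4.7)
The plan is to exploit the defining property $Bw'=0$, which holds under the convention that $(BB^\top)^{-1}$ is read as the Moore--Penrose pseudoinverse: under $B_{\star,\perp}^\top B = 0$ the matrix $BB^\top$ is typically rank-deficient, but $BB^\top(BB^\top)^{+}Bw = Bw$ still holds because $Bw \in \col(BB^\top)$. The key consequence is that replacing $w$ by $w'$ in the inner-loop update shifts the post-adaptation prediction by a quantity that depends only on $X_i$, decoupled from both the random task parameter $w_{\star,i}$ and the label noise $z_i$.

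To make this precise, I would first expand, for any head $u \in \R^{k'}$,
\begin{equation*}
    B\tilde{w}(u; X_i, y_i) = Bu - \alpha BB^\top \Sigma_i (Bu - B_\star w_{\star,i}) + \frac{\alpha}{m} BB^\top X_i^\top z_i,
\end{equation*}
using \cref{eq:ANIL_inner_step} and the model $y_i = X_i B_\star w_{\star,i} + z_i$ with $\Sigma_i \coloneqq \frac{1}{m} X_i^\top X_i$. Subtracting the lines for $u=w$ and $u=w'$ and using $Bw' = 0$ yields
\begin{equation*}
    B\tilde{w}(w) - B\tilde{w}(w') = (\id_d - \alpha BB^\top \Sigma_i)\, Bw,
\end{equation*}
a quantity that depends on the random draws only through $X_i$.

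The second step is to apply the Pythagorean-style identity
\begin{equation*}
    \|B\tilde{w}(w) - B_\star w_{\star,i}\|^2 = \|B\tilde{w}(w') - B_\star w_{\star,i}\|^2 + 2\bigl\langle (\id_d - \alpha BB^\top \Sigma_i) Bw,\; B\tilde{w}(w') - B_\star w_{\star,i}\bigr\rangle + \|(\id_d - \alpha BB^\top \Sigma_i) Bw\|^2,
\end{equation*}
and take expectations. By the tower property, conditioning the cross term on $X_i$ makes the first factor deterministic, while the conditional mean of the second factor reads $(\alpha BB^\top \Sigma_i B_\star - B_\star)\E[w_{\star,i}] + \frac{\alpha}{m} BB^\top X_i^\top \E[z_i]$, which vanishes by the zero-mean assumptions in \cref{ass:random}. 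Consequently
\begin{equation*}
    \cL_{\mathrm{ANIL}}(B, w; m) - \cL_{\mathrm{ANIL}}(B, w'; m) = \frac{1}{2} \E\bigl[\|(\id_d - \alpha BB^\top \Sigma_i)Bw\|^2\bigr] \geq 0,
\end{equation*}
which is the desired inequality.

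The only subtle point is notational rather than substantive, namely giving a meaning to $(BB^\top)^{-1}$ when $BB^\top$ is singular; reading it as a pseudoinverse is enough to preserve $Bw'=0$, which is all that gets used. The substantive content is a clean bias--variance split that relies crucially on $\E[w_{\star,i}] = 0$ and $\E[z_i] = 0$ from \cref{ass:random}; no input from the feature-learning machinery of \cref{thm:main} is required.
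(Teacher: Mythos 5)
Your proposal is correct and takes essentially the same route as the paper's proof: both use $Bw'=0$ to get $B\tilde{w}(w)-B\tilde{w}(w')=(\id_d-\alpha BB^\top\Sigma_i)Bw$, a quantity depending only on $X_i$, and both make the cross terms vanish because $w_{\star,i}$ and $z_i$ are centered and independent of $X_i$; your Pythagorean rearrangement is just a repackaging of the paper's split into the squared-norm difference and the $B_\star w_{\star,i}$ cross term. Your remark that $(BB^\top)^{-1}$ should be read as a pseudoinverse (since $BB^\top$ has rank at most $k<d$ here) is a fair clarification and does not alter the argument.
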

\begin{proof}
    Expanding the square,
    \begin{equation*}
    \begin{aligned}
        \cL_{\mathrm{ANIL}}(B, w; m) &- \cL_{\mathrm{ANIL}}(B, w'; m) = \frac{1}{2}\E_{w_{\star,i}, X_i, y_i}\left[\left\|B_\star^\top B \tilde{w}(w) - w_{\star, i} \right\|^2 - \left\|B_\star^\top B \tilde{w}(w') - w_{\star, i} \right\|^2 \right] \\
        &= \frac{1}{2} {\color{vert}\underbrace{\color{black}\E_{w_{\star,i}, X_i, y_i}\left[\left\|B \tilde{w}(w)\right\|^2 - \left\|B \tilde{w}(w')\right\|^2\right]}_{\mytag{(A)}{tag:decomp1}}} - {\color{vert}\underbrace{\color{black}\E_{w_{\star,i}, X_i, y_i}\left[\left\langle B_\star w_{\star, i}, B \tilde{w}(w) - B \tilde{w}(w')\right\rangle\right]\phantom{\Big]}}_{\mytag{(B)}{tag:decomp2}}}.
    \end{aligned}
    \end{equation*}
    First, expanding $\tilde{w}(w)$ and $\tilde{w}(w')$ by~\cref{eq:ANIL_inner_step},
    \begin{equation*}
        \begin{aligned}
            B \tilde{w}(w) = \left(\id_{d} - \frac{\alpha}{m} B B^\top X_i^\top X_i\right) Bw + \frac{\alpha}{m} B B^\top X_i^\top y_i, \quad B \tilde{w}(w') = \frac{\alpha}{m} B B^\top X_i^\top y_i.
        \end{aligned}
    \end{equation*}
    For the first term,
    \begin{equation*}
        \begin{aligned}
            {\hypersetup{linkcolor=vert}\ref{tag:decomp1}} &= \E_{X_i}\left[\left\|\left(\id_{d} - \frac{\alpha}{m} B B^\top X_i^\top X_i\right) Bw \right\|^2 \right] + \frac{2\alpha}{m}\E_{w_{\star,i}, X_i}\left[\left\langle \left(\id_{d} - \frac{\alpha}{m} B B^\top X_i^\top X_i\right) Bw, B B^\top X_i^\top X_i B_\star w_\star \right\rangle\right] \\
            &= \E_{X_i}\left[\left\|\left(\id_{d} - \frac{\alpha}{m} B B^\top X_i^\top X_i\right) Bw \right\|^2 \right] \geq 0,
        \end{aligned}
    \end{equation*}
    where we have used that the tasks and the noise are centered around $0$.
    For the second term,
    \begin{equation*}
        \begin{aligned}
            {\hypersetup{linkcolor=vert}\ref{tag:decomp2}} &= \left\langle \E_{w_{\star,i}}\left[B_\star w_{\star, i}\right], \E_{X_i}\left[\left(\id_d - \frac{\alpha}{\m} B B^\top X_i^\top X_i\right) B w \right] \right\rangle = 0,
        \end{aligned}
    \end{equation*}
    where we have again used that the tasks are centered around $0$.
    Putting two results together yields~\cref{lem:w_dom}.
\end{proof}

\begin{lem}
\label{lem:lambda_dom}
    Consider any parameters $(B,w)\in\R^{d\times k'}\times\R^{k'}$ such that $B_{\star, \perp}^\top B = 0, B w = 0$.
    Let $(B', w')\in\R^{d \times k'} \times \R^{k'}$ such that $B_{\star, \perp}^\top B' = 0, B' w' = 0$ and $B_\star^\top B' B'^\top B_\star = \Lambda_\star$.
    Then, we have
    \begin{equation*}\label{eq:lambda_dom}
        \cL_{\mathrm{ANIL}}(B, w; \m) \geq \cL_{\mathrm{ANIL}}(B', w'; \m).
    \end{equation*}
\end{lem}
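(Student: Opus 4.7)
The plan is to show that, under the shared constraints $B_{\star,\perp}^\top B = 0$ and $Bw = 0$, the ANIL loss $\cL_{\mathrm{ANIL}}(B,w;\m)$ depends on $(B,w)$ only through the matrix $\Lambda \coloneqq B_\star^\top B B^\top B_\star \in \R^{k\times k}$, and that the resulting map $\Lambda \mapsto \cL(\Lambda)$ is a strictly convex quadratic on the PSD cone whose unique minimiser is $\Lambda_\star$. Since both $(B,w)$ and $(B',w')$ satisfy these constraints while $B_\star^\top B' B'^\top B_\star = \Lambda_\star$ by hypothesis, the inequality follows immediately.

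The first constraint lets me write $B = B_\star C$ for some $C \in \R^{k\times k'}$ with $CC^\top = \Lambda$; in particular $BB^\top = B_\star \Lambda B_\star^\top$. The constraint $Bw = 0$ (equivalently $Cw=0$) then makes the inner-loop update in~\cref{eq:ANIL_inner_step} collapse to $B\tilde{w}(w;X_i,y_i) = \tfrac{\alpha}{\m} BB^\top X_i^\top y_i$. Substituting $y_i = X_i B_\star w_{\star,i} + z_i$ and introducing the Wishart-type block $U_i \coloneqq \tfrac{1}{\m} B_\star^\top X_i^\top X_i B_\star$ together with the noise vector $\xi_i \coloneqq \tfrac{1}{\m} B_\star^\top X_i^\top z_i$, the per-task residual becomes
\begin{equation*}
    B\tilde{w} - B_\star w_{\star,i} = B_\star\bigl[(\alpha \Lambda U_i - \id_k) w_{\star,i} + \alpha \Lambda \xi_i\bigr],
\end{equation*}
whose squared norm equals $\|(\alpha\Lambda U_i - \id_k)w_{\star,i} + \alpha \Lambda \xi_i\|^2$ since $B_\star$ is orthogonal. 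Because $w_{\star,i}$ is centred and independent of $(X_i,z_i)$, and $z_i$ is centred and independent of $X_i$, all cross terms in this squared norm vanish in expectation.

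Applying~\cref{lemma:wwtop} eigenvector-by-eigenvector to $\Lambda$ yields $\E[U_i \Lambda U_i] = \tfrac{\mathrm{Tr}(\Lambda)}{\m}\id_k + \tfrac{\m+1}{\m}\Lambda$, together with $\E[\xi_i \xi_i^\top] = \tfrac{\sigma^2}{\m}\id_k$. Using $\Sigma_\star = c\,\id_k$ from~\cref{ass:random}, the loss reduces to the closed form
\begin{equation*}
    \cL(\Lambda) = \tfrac{1}{2}\left[\alpha^2 \tfrac{c(\m+1)+\bar\sigma^2}{\m}\,\mathrm{Tr}(\Lambda^2) - 2\alpha c\,\mathrm{Tr}(\Lambda) + kc\right],
\end{equation*}
which is a strictly convex quadratic in $\Lambda$. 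Its first-order optimality condition $\alpha^2 \tfrac{c(\m+1)+\bar\sigma^2}{\m}\Lambda = \alpha c\,\id_k$ gives the unique stationary point $\tfrac{\m c}{\alpha(c(\m+1)+\bar\sigma^2)}\id_k$, which lies in the PSD cone and, after an elementary rearrangement of the formula in~\cref{thm:main}, coincides with $\Lambda_\star$. Applying $\cL(\Lambda)\geq \cL(\Lambda_\star)$ to both $(B,w)$ and $(B',w')$ concludes. I expect the main (mild) obstacle to be the moment computation: extending~\cref{lemma:wwtop} (stated for rank-one $vv^\top$) to a general PSD $\Lambda$ via spectral decomposition, and carefully tracking both cross-term cancellations using jointly the independence of $w_{\star,i}, X_i, z_i$ and their zero means. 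Everything else is bookkeeping.
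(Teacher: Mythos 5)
Your proposal is correct and follows essentially the same route as the paper's proof: using the two constraints to collapse the inner-loop update, showing the loss depends on $(B,w)$ only through $\Lambda = B_\star^\top B B^\top B_\star$, computing it in closed form as a convex quadratic via the Gaussian moment identity of \cref{lemma:wwtop} (extended beyond rank one by spectral decomposition), and checking that the stationary point is exactly $\Lambda_\star$. The only difference is cosmetic — you project onto $\col(B_\star)$ and work with the $k\times k$ Wishart matrix $U_i$, whereas the paper keeps the $d\times d$ empirical covariance and evaluates traces directly — and your closed-form expression for the loss matches the paper's after substituting $\Sigma_\star = c\,\id_k$.
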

\begin{proof}
    Let $\Lambda \coloneqq B_\star^\top B B^\top B_\star$ in this proof. Using $B_{\star, \perp}^\top B = 0$, we have
    \begin{equation*}
        \begin{aligned}
            \cL_{\mathrm{ANIL}}(B, w; \m) &= \frac{1}{2}\E_{w_{\star,i}, X_i, y_i}\left[\left\|B_\star^\top B \tilde{w} - w_{\star, i} \right\|^2\right]. \\
        \end{aligned}
    \end{equation*}
    Plugging in the definition of $\tilde{w}$,
    \begin{equation*}
        \begin{aligned}
            \cL_{\mathrm{ANIL}}(B, w; \m) &= \frac{\alpha^2}{2} {\color{vert}\underbrace{\color{black}\frac{1}{\m^2} \E_{w_{\star,i}, X_i, y_i}\left[\left\|B_\star^\top B B^\top X_i^\top y_i \right\|^2\right]}_{\mytag{(A)}{tag:square_term}}} \\
            &- \alpha {\color{vert}\underbrace{\color{black}\frac{1}{\m} \E_{w_{\star,i}, X_i, y_i}\left[\langle w_{\star, i},  B_\star^\top B B^\top X_i^\top y_i \rangle\right]}_{\mytag{(B)}{tag:cross_term}}} + \frac{1}{2} \tr\left(\Sigma_\star\right).
        \end{aligned}
    \end{equation*}
    Using that the label noise is centered,
    \begin{equation*}
        \begin{aligned}
            {\hypersetup{linkcolor=vert}\ref{tag:square_term}} &= {\color{vert}\underbrace{\color{black}\E_{w_{\star,i}, X_i}\left[\left\|B_\star^\top B B^\top \Sigma_i B_\star w_\star\right\|^2\right]}_{\mytag{(C)}{tag:square_term_1}}} + {\color{vert}\underbrace{\color{black}\E_{X_i, z_i}\left[\left\|B_\star^\top B B^\top X_i^\top z_i\right\|^2\right]}_{\mytag{(D)}{tag:square_term_2}}}, \\
        \end{aligned}
    \end{equation*}
    where $\Sigma_i \coloneqq \frac{1}{\m} X_i^\top X_i$.
    By the independence of $w_{\star, i}, X_i$ and \cref{lemma:wwtop},
    \begin{equation*}
        \begin{aligned}
            {\hypersetup{linkcolor=vert}\ref{tag:square_term_1}} &= \tr\left(B_\star^\top B B^\top \E_{w_{\star,i}, X_i}\left[\Sigma_i B_\star w_\star w_\star^\top B_\star^\top \Sigma_i\right] B B^\top B_\star\right) \\
            &= \tr\left(B_\star^\top B B^\top \E_{X_i}\left[\Sigma_i B_\star \Sigma_\star B_\star^\top \Sigma_i\right] B B^\top B_\star\right) \\
            &= \frac{\m + 1}{\m} \tr\left(B_\star^\top B B^\top B_\star \Sigma_\star B_\star^\top B B^\top B_\star\right) + \frac{1}{\m} \tr\left(B_\star^\top B B^\top B B^\top B_\star\right) \\
            &= \frac{\m + 1}{\m} \tr\left(\Lambda \Sigma_\star \Lambda\right) + \frac{1}{\m} \tr\left(\Sigma_\star\right) \tr\left(\Lambda^2\right).
        \end{aligned}
    \end{equation*}
    For the term~{\hypersetup{linkcolor=vert}\ref{tag:square_term_2}}, we have
    \begin{equation*}
        \begin{aligned}
            {\hypersetup{linkcolor=vert}\ref{tag:square_term_2}} &= \frac{1}{\m} \tr\left(B_\star^\top B B^\top \E_{X_i, z_i}\left[X_i^\top z_i z_i^\top X_i\right] B B^\top B_\star\right) \\
            &= \sigma^2 \tr\left(B_\star^\top B B^\top \E_{X_i}\left[\Sigma_i\right] B B^\top B_\star\right) \\
            &= \sigma^2 \tr\left(B_\star^\top B B^\top B B^\top B_\star\right) \\
            &= \sigma^2 \tr\left(\Lambda^2\right).
        \end{aligned}
    \end{equation*}
    Lastly, for the term~{\hypersetup{linkcolor=vert}\ref{tag:cross_term}}, we have
    \begin{equation*}
        \begin{aligned}
            {\hypersetup{linkcolor=vert}\ref{tag:cross_term}} &= \frac{1}{\m} \E_{w_{\star,i}, X_i}\left[\langle w_{\star, i},  B_\star^\top B B^\top X_i^\top X_i B_\star w_{\star, i} \rangle\right] \\
            &= \E_{w_{\star,i}}\left[\langle w_{\star, i},  B_\star^\top B B^\top B_\star w_{\star, i} \rangle\right] \\
            &= \tr\left(\Lambda \Sigma_\star\right).
        \end{aligned}
    \end{equation*}
    Putting everything together using $\Sigma_\star$ is scaled identity,
    \begin{equation*}
        \begin{aligned}
            \cL_{\mathrm{ANIL}}(B, w; \m) &= \frac{\alpha^2}{2\m} \left(\left(\m + 1\right)\tr\left(\Lambda \Sigma_\star \Lambda\right) + \tr\left(\Sigma_\star\right)\tr\left(\Lambda^2\right) \right) - \alpha \tr\left(\Lambda \Sigma_\star\right) + \frac{1}{2}\tr\left(\Sigma_\star\right) \\
            &= \frac{\alpha^2}{2\m} \left(\left(\m+1\right)\|\Sigma_\star\|_2 + \bar{\sigma}^2\right) \tr\left(\Lambda^2\right) - \alpha \|\Sigma_\star\|_2 \tr(\Lambda) + \frac{1}{2}\tr\left(\Sigma_\star\right). \\
        \end{aligned}
    \end{equation*}
    Hence, the loss depends on $B$ only through $\Lambda \coloneqq B_\star^\top B B^\top B_\star$ for all $(B, w)$ such that $B_{\star, \perp}^\top B = 0, B w = 0$. Taking the derivative w.r.t. $\Lambda$ yields that $\Lambda$ is a minimizer if and only if
    \begin{equation*}
        \frac{\alpha}{\m}\left(\left(\m + 1\right)\|\Sigma_\star\|_2 + \bar{\sigma}^2\right) \Lambda - \lambda_{\mathrm{\max}}\left(\Sigma_\star\right) I = 0.
    \end{equation*}
    This quantity is minimized for $\Lambda_\star$ as
    \begin{equation*}
        \alpha \frac{\m+1}{\m}\Lambda_\star \left(\Sigma_\star + \frac{\bar{\sigma}^2}{\m + 1} \Lambda_\star\right) = \Sigma_\star.
    \end{equation*}
\end{proof}

\vfill
\pagebreak
\section{Extending \citet{collins2022maml} analysis to the misspecified setting}\label{app:collinsextension}

We show that the dynamics for infinite samples in the misspecified setting $k < k' \leq d$ is reducible to a well-specified case studied in~\citet{collins2022maml}.
The idea is to show that the dynamics is restricted to a $k$-dimensional subspace via a time-independent bijection between misspecified and well-specified iterates.

In the infinite samples limit, $\m = \infty, \mout = \infty$, the outer loop updates of~\cref{eq:outer} simplify with~\cref{ass:random} to
\begin{equation}
\label{eq:B_w_rec}
\begin{aligned}
    w_{t+1} &= w_t - \beta \Delta_t B_t^\top \left(B_t w_t - B_\star \mu_\star \right), \\
    B_{t+1} &= B_t - \beta B_t \Delta_t w_t \left(\Delta_t w_t + \alpha B_t^\top B_\star \mu_\star \right)^\top \\
    &\quad + \beta \left(\id_{d} - \alpha B_t B_t^\top \right) B_\star \left(\mu_\star \left(\Delta_t w_t\right)^\top + \alpha \Sigma_\star B_\star^\top B_t\right).
\end{aligned}
\end{equation}
where $\mu_{\star}$ and $\Sigma_{\star}$ respectively are the empirical task mean and covariance, and $\Delta_t \coloneqq \id_{k'} - \alpha B_t^\top B_t$.
This leads to following updates on $C_t \coloneqq B_\star^\top B_t$,
\begin{align*}
    C_{t+1} &= \left(\id_{k} + \alpha \beta \left(\id_k - C_t C_t^\top \right) \Sigma_\star\right) C_t - \beta C_t \Delta_t w_t \left(\Delta_t w_t + \alpha C_t^\top \mu_\star \right)^\top \\
    &\quad + \beta \left(\id_{k} - \alpha C_t C_t^\top \right) \mu_\star \left(\Delta_t w_t\right)^\top.
\end{align*}
A key observation of this recursion is that all the terms end with $C_t$ or $\Delta_t$.
This observation is sufficient to deduce that $C_t$ is fixed in its row space.

Assume that $B_0$ is initialized such that
\begin{equation*}
    \ker(C_0) \subseteq \ker(\Delta_0).
\end{equation*}
This condition is always satisfiable by a choice of $B_0$ that guarantees $B_0^\top B_0 = \alpha \id_{k'}$, similarly to~\citet{collins2022maml}.
With this assumption, there is no dynamics in the kernel space of $C_0$.
More precisely, we show that for all time $t$, $\ker(C_0) \subseteq \ker(C_t) \cap \ker(\Delta_t)$.
Then, it is easy to conclude that $B_t$ has simplified rank-deficient dynamics.

Assume the following inductive hypothesis at time $t$,
\begin{equation*}
    \ker(C_0) \subseteq \ker(C_t) \cap \ker(\Delta_t).
\end{equation*}
For time step $t+1$, we have for all $v \in \ker(C_t) \cap \ker(\Delta_t)$, $C_{t+1} v= 0$.
As a result, the next step contains the kernel space of the previous step, \ie $\ker(C_0) \subseteq \ker(C_t) \cap \ker(\Delta_t) \subseteq \ker(C_{t+1})$.
Similarly, inspecting the expression for $\Delta_{t+1}$, we have for all $v \in \ker(C_t) \cap \ker(\Delta_t)$, $\Delta_{t+1} v = 0$ and $\ker(C_0) \subseteq \ker(C_t) \cap \ker(\Delta_t) \subseteq \ker(\Delta_{t+1})$.
Therefore, the induction hypothesis at time step $t+1$ holds.

Now, using that $\ker(C_t) = \col(C_t^\top)^{\perp}$, row spaces of $C_t$ are confined in the same $k$-dimensional subspace, $\col(C_t^\top) \supseteq \col(C_0^\top)$.
Let $R \in \R^{k \times k'}$ and $R_\perp \in \R^{(k'-k) \times k'}$ be two orthogonal matrices that span $\col(C_0^\top)$ and $\col(C_0^\top)^\perp$, respectively. 
That is, $R$ and $R_\perp$ satisfy $R R^\top = \id_k$, $\col(R) = \col(C_0^\top)$ and $R_\perp R_\perp^\top = \id_{k'-k}$, $\col(R_\perp) = \col(C_0^\top)^\perp$.
It is easy to show that updates to $B_t$ and $w_t$ are orthogonal to $\col(R_\perp)$, \ie
\begin{equation*}
    B_{t} R_\perp^\top = B_0 R_\perp^\top, \quad \text{  and } R_\perp w_t = R_\perp w_0.
\end{equation*}

With this result, we can prove that there is a $k$-dimensional parametrization of the misspecified dynamics.
Let $\hat{w}_0 \in \R^{k}, \hat{B}_0 \in \R^{d \times k}$ defined as
\begin{align*}
    \hat{B}_0 \coloneqq B_0 R^\top, \quad \hat{w}_0 &\coloneqq R w_0. 
\end{align*}
Running FO-ANIL in the infinite samples limit, initialized with $\hat{B}_0$ and $\hat{w}_0$, mirrors the dynamics of the original misspecified iterations, \ie $\hat{B}_t$ and $\hat{w}_t$ satisfy,
\begin{equation*}
    \hat{B}_t = B_t R^\top, \quad \hat{w}_t = R w_t, \quad \hat{B}_t \hat{w}_t = B_t w_t - B_0 R_{\perp}^\top R_{\perp} w_0.
\end{equation*}

This given bijection proves that iterates are fixed throughout training on the $k'-k$-dimensional subspace $\col(R_\perp)$.
Hence, as argued in~\cref{sec:discussion}, the infinite samples dynamics do not capture unlearning behavior observed in~\cref{sec:experiments}.
In contrast, the infinite tasks idealisation exhibits both learning and unlearning dynamics.

\vfill
\pagebreak
\section{Convergence rate for unlearning}\label{app:unlearning}

In~\cref{prop:rate}, we derive the rate $\|B_{\star,\perp}^{\top}B_t\|^2=\mathcal{O}\big(\frac{\m}{\alpha^2\beta\bar{\sigma}^2t}\big)$.
\begin{prop}\label{prop:rate}
    Under the conditions of~\cref{thm:main_detailed},
    \begin{equation}
    \label{eq:rate}
        \left\|B_{\star, \perp}^\top B_t\right\|_2^2 \leq \dfrac{1}{\alpha^2 \beta \frac{\bar{\sigma}^2}{\m} t + \frac{1}{\left\|B_{\star, \perp}^\top B_0\right\|_2^2}},
    \end{equation}
    for any time $t \geq 0$.
\end{prop}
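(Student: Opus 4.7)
The plan is to reduce the proposition to a one-step decay inequality already established in Lemma~\ref{lemma:decay_D}, and then to invert it via a standard reciprocal-comparison argument. Under the hypotheses of Theorem~\ref{thm:main_detailed}, Section~\ref{sec:convergence} shows by induction that $\|B_t\|_2^2 \leq 1/\alpha$ and $\|w_t\|_2^2 \leq \alpha c_2$ hold at every time step, so the premises of Lemma~\ref{lemma:decay_D} are verified at every $t$. Writing $a_t \coloneqq \|B_{\star,\perp}^\top B_t\|_2^2 = \|D_t D_t^\top\|_2$ and $c \coloneqq \alpha^2 \beta \bar{\sigma}^2/\m$, the lemma therefore yields the one-step recursion
\begin{equation*}
    a_{t+1} \leq a_t(1 - c a_t) \qquad \text{for every } t \geq 0.
\end{equation*}

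Next I check that $c a_t < 1$ for all $t$, which is needed to pass to reciprocals. Since the recursion already implies that $\{a_t\}$ is non-increasing, it suffices to verify $c a_0 < 1$. The initialisation assumption $\|B_0\|_2^2 \leq c_1/(\alpha(\m+1))$ together with step-size condition~3 of Theorem~\ref{thm:main_detailed} (which in particular gives $\alpha\beta\bar{\sigma}^2/\m \leq 1/2$) yields $c a_0 \leq c_1/(2(\m+1)) < 1$, as desired.

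With this in hand, dividing both sides of the recursion and applying the elementary inequality $\frac{1}{1-x} \geq 1 + x$ for $x \in [0,1)$ gives
\begin{equation*}
    \frac{1}{a_{t+1}} \geq \frac{1}{a_t(1-c a_t)} \geq \frac{1}{a_t} + c.
\end{equation*}
A straightforward induction on $t$ then produces $1/a_t \geq 1/a_0 + ct$, which is exactly the claimed bound after rearrangement. There is essentially no obstacle in this proof: the only point that could fail is the propagation of the bounds needed to apply Lemma~\ref{lemma:decay_D}, but this has already been done in the proof of Theorem~\ref{thm:main_detailed}, so the present argument is a short post-processing of that analysis. The $1/t$ rate is inherent to a recursion of the form $a_{t+1} \leq a_t - c a_t^2$, which explains why unlearning is only polynomial in $t$ rather than exponential.
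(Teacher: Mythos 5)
Your proposal is correct and follows essentially the same route as the paper: both rest on applying \cref{lemma:decay_D} at every step (justified by the bounds propagated in the proof of \cref{thm:main_detailed}) to get the recursion $a_{t+1} \leq a_t(1 - \kappa a_t)$ with $\kappa = \alpha^2\beta\bar{\sigma}^2/\m$, and then solve this elementary recursion. The only difference is cosmetic: you telescope reciprocals via $\frac{1}{1-x} \geq 1+x$ (after checking $\kappa a_0 < 1$), whereas the paper does a direct induction on the claimed bound using monotonicity of $x \mapsto x(1-\kappa x)$ on $[0,\tfrac{1}{2\kappa}]$ together with the AM--GM inequality.
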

\begin{proof}
    Recall that~\cref{lemma:decay_D} holds for all time steps by~\cref{thm:main_detailed}.
    That is, for all $t > 0$,
    \begin{equation}
    \label{eq:rate_rep}
        \left\|B_{\star, \perp}^\top B_{t+1}\right\|_2^2 \leq \left(1 - \kappa \left\|B_{\star, \perp}^\top B_t\right\|_2^2\right)\left\|B_{\star, \perp}^\top B_t\right\|_2^2,
    \end{equation}
    where $\kappa \coloneqq \frac{\alpha^2 \beta}{\m} \bar{\sigma}^2$ for brevity.
    Now, assume the inductive hypothesis in~\cref{eq:rate} holds for time $t$.
    Observe that the function $x \mapsto (1 - \kappa x) x$ is increasing on $[0, \frac{1}{2\kappa}]$ and
    \begin{equation*}
        \|B_{\star, \perp}^\top B_t\|_2^2 \leq \|B_{\star, \perp}^\top B_0\|_2^2 \leq \frac{1}{\alpha}\frac{1}{\m + 1} \leq \frac{1}{2\kappa},
    \end{equation*}
    by the assumptions of~\cref{thm:main_detailed}.
    Then, by~\cref{eq:rate_rep} and monotonicity of $x \mapsto (1 - \kappa x) x$,
    \begin{equation*}
    \begin{aligned}
        \|B_{\star, \perp}^\top B_{t+1}\|_2^2 &\leq \left(1 - \dfrac{\kappa}{\kappa t + \frac{1}{\|B_{\star, \perp}B_0\|_2^2}}\right) \dfrac{1}{\kappa t + \frac{1}{\|B_{\star, \perp}B_0\|_2^2}} = \dfrac{\kappa\left(t-1\right) + \frac{1}{\|B_{\star, \perp}^\top B_0\|_2^2}}{\left(\kappa t + \frac{1}{\|B_{\star, \perp}B_0\|_2^2}\right)^2}. \\
    \end{aligned}
    \end{equation*}
    Using the inequality of arithmetic and geometric means,
    \begin{equation*}
    \begin{aligned}
        \|B_{\star, \perp}^\top B_{t+1}\|_2^2 &\leq \dfrac{\kappa\left(t-1\right) + \frac{1}{\|B_{\star, \perp}^\top B_0\|_2^2}}{\left(\kappa t + \frac{1}{\|B_{\star, \perp}B_0\|_2^2}\right)^2} \cdot \dfrac{\kappa \left(t+1\right) + \frac{1}{\|B_{\star, \perp}B_0\|_2^2}}{\kappa \left(t+1\right) + \frac{1}{\|B_{\star, \perp}B_0\|_2^2}} \\
        &\leq \dfrac{1}{\kappa \left(t+1\right) + \frac{1}{\|B_{\star, \perp}B_0\|_2^2}}.
    \end{aligned}
    \end{equation*}
    Hence, the induction hypothesis at time step $t+1$ holds.
\end{proof}

\vfill
\pagebreak
\section{Additional material on experiments}\label{app:experiments}

\subsection{Experimental details}\label{app:expdetails}

In the experiments considered in \cref{sec:experiments}, samples are split into two subsets with $\m = 20$ and $\mout = 10$ for model-agnostic methods. The task parameters $w_{\star,i}$ are drawn i.i.d. from $\cN(0,\Sigma_{\star})$, where $\Sigma_{\star}=c\mathrm{diag}(1, \dots, k)$ and $c$ is a constant chosen so that $\|\Sigma_{\star}\|_F=\sqrt{k}$. Moreover, the features are drawn i.i.d. following a standard Gaussian distribution. All the curves are averaged over $10$ training runs.

Model-agnostic methods are all trained using step sizes $\alpha=\beta=0.025$. For the infinite tasks model, the iterates are computed using the close form formulas given by \cref{eq:rec_w,eq:rec_B} for $\m=20$. For the infinite samples model, it is computed using the closed form formula of \citet[][Equation (3)]{collins2022maml} with $N=5000$ tasks.
The matrix $B_0$ is initialized randomly as an orthogonal matrix such that $B_0^\top B_0 = \frac{1}{4\alpha}\id_{k'}$. The vector $w_0$ is initialized uniformly at random on the $k'$-dimensional sphere with squared radius $0.01 k'\alpha$.

For training Burer-Monteiro method, we initialize $B_0$ is initialized randomly as an orthogonal matrix such that $B_0^\top B_0 = \frac{1}{100}\id_{k'}$ and each column of $W$ is initialized uniformly at random on the $k'$-dimensional sphere with squared radius $0.01 k'\alpha$. \footnote{We choose a small initialization regime for Burer-Monteiro to be in the good implicit bias regime. Note that Burer-Monteiro yields worse performance when using a larger initialization scale.}
Also, similarly to \citet{tripuraneni2021provable}, we add a $\frac{1}{8}\|B_t^{\top}B_t-W_t W_t^{\top}\|^2_F$ regularizing term to the training loss to ensure training stability.
The matrices $B_t$ and $W_t$ are simultaneously trained with LBFGS using the default parameters of \texttt{scipy}.

For \cref{table:testing}, we consider ridge regression for each learned representation. For example, if we learned the representation given by the matrix $\hat{B}\in\R^{d\times k'}$, the \texttt{Ridge} estimator is given by
\begin{equation*}
\argmin_{w\in \R^{k'}} \hat{\cL}_{\mathrm{test}}(\hat{B}w; X,y) + \lambda \|w\|_2^2.
\end{equation*}
The regularization parameter $\lambda$ is tuned for each method using a grid search over multiple values.

\subsection{General task distributions}

In this section, we run similar experiments to \cref{sec:experiments}, but with a more difficult task distribution and $3$ training runs per method. In particular the task parameters are now generated as $w_{\star,i}\sim \cN(\mu_{\star}, \Sigma_{\star})$, where $\mu_{\star}$ is chosen uniformly at random on the $k$-sphere of radius $\sqrt{k}$. Also, $\Sigma_{\star}$ is chosen proportional to $\mathrm{diag}(e^1, ..., e^k)$, so that its Frobenius-norm is $2\sqrt{k}$ and its condition number is $e^{k-1}$.

\begin{figure}[htbp]
\centering
\includegraphics[width=0.8\textwidth]{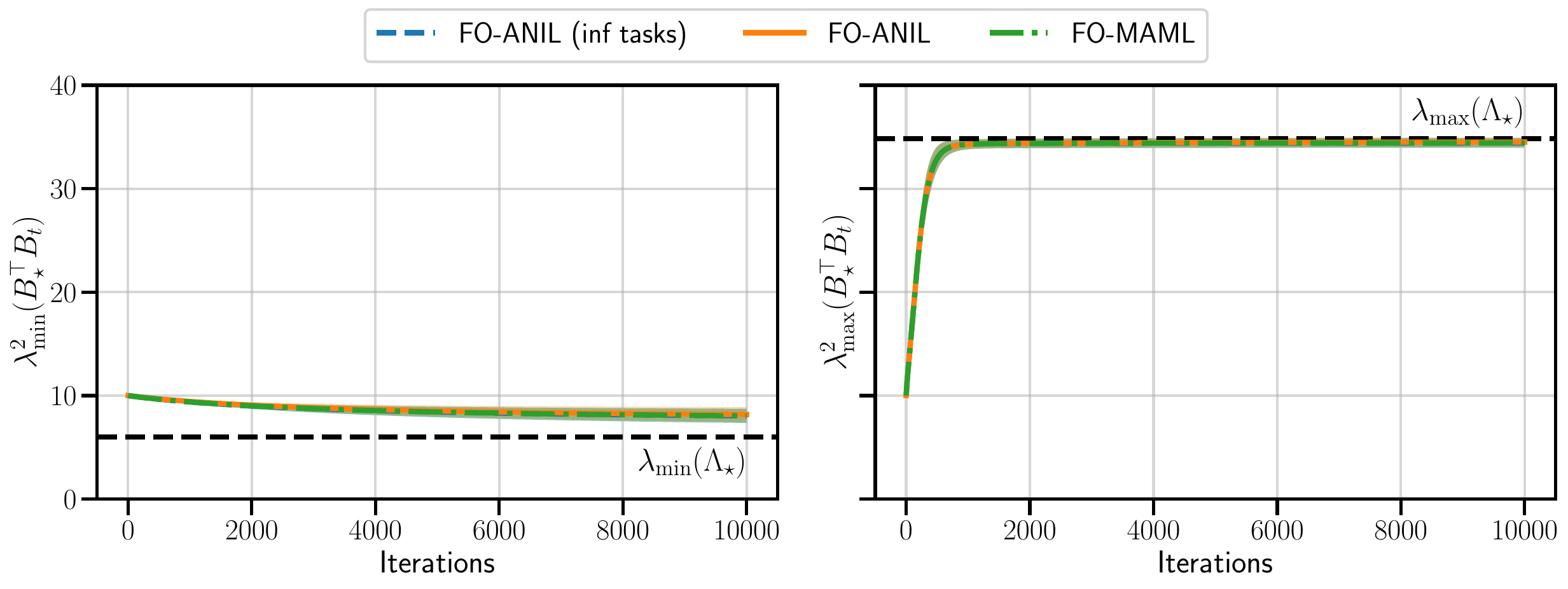}
  \caption{
    \label{fig:appB}
    Evolution of largest (\textit{left}) and smallest (\textit{right}) squared singular values of $B_\star^\top B_t$ during training. The shaded area represents the standard deviation observed over $3$ runs.
 }
\end{figure}

\begin{figure}[htbp]
\centering
\includegraphics[width=0.8\textwidth]{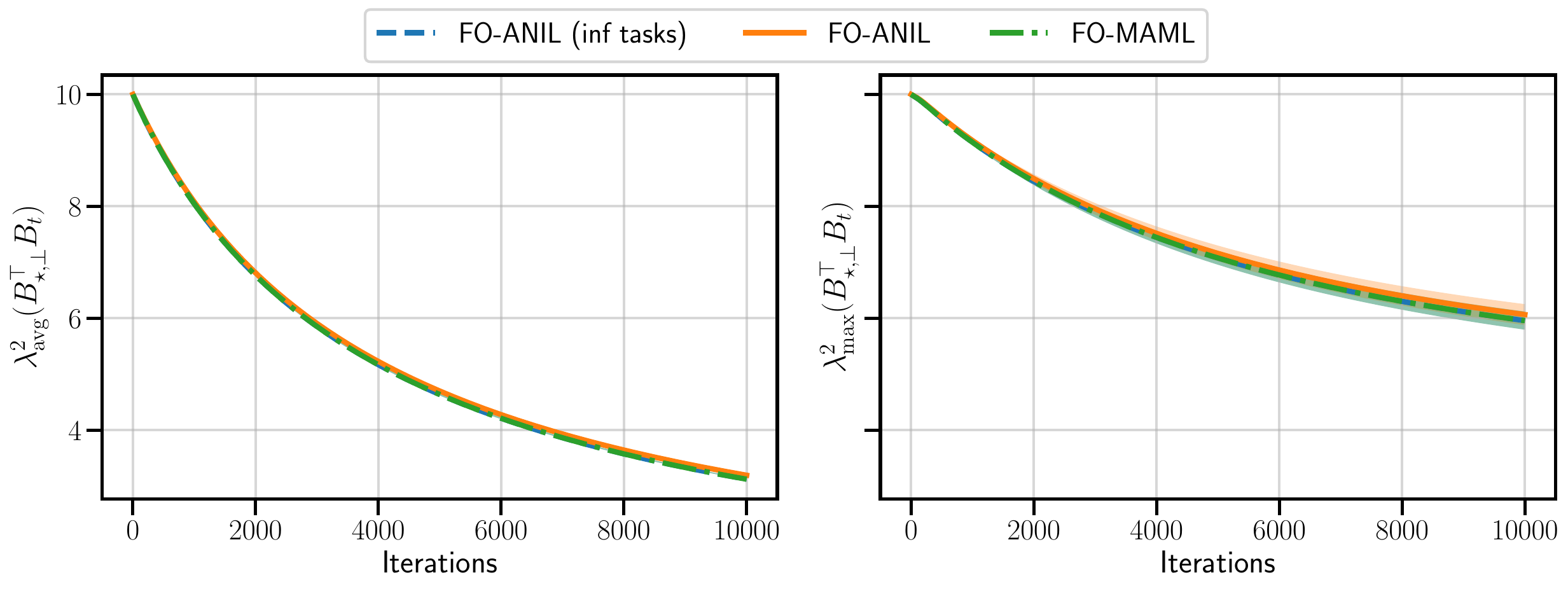}
  \caption{
    \label{fig:appBperp}
    Evolution of average (\textit{left}) and largest (\textit{right}) squared singular value of $B_{\star,\perp}^\top B_t$ during training. The shaded area represents the standard deviation observed over $3$ runs.
 }
\end{figure}

Similarly to \cref{sec:experiments}, \cref{fig:appB,fig:appBperp} show the evolution of the squared singular values on the good subspace and its orthogonal component during the training. Similarly to the well-behaved case of \cref{sec:experiments}, model-agnostic methods seem to correctly learn the good subspace and unlearn its orthogonal complement, still at a very slow rate. The main difference is that the matrix towards which $B_{\star}^{\top}B_t B_t^{\top}B_{\star}$ converges does not exactly correspond to the $\Lambda_{\star}$ matrix defined in \cref{thm:main}. We believe this is due to an additional term that should appear in the presence of a non-zero task mean. We yet do not fully understand what this term should be. 

\begin{figure}[htbp]
\centering
\includegraphics[width=0.8\textwidth]{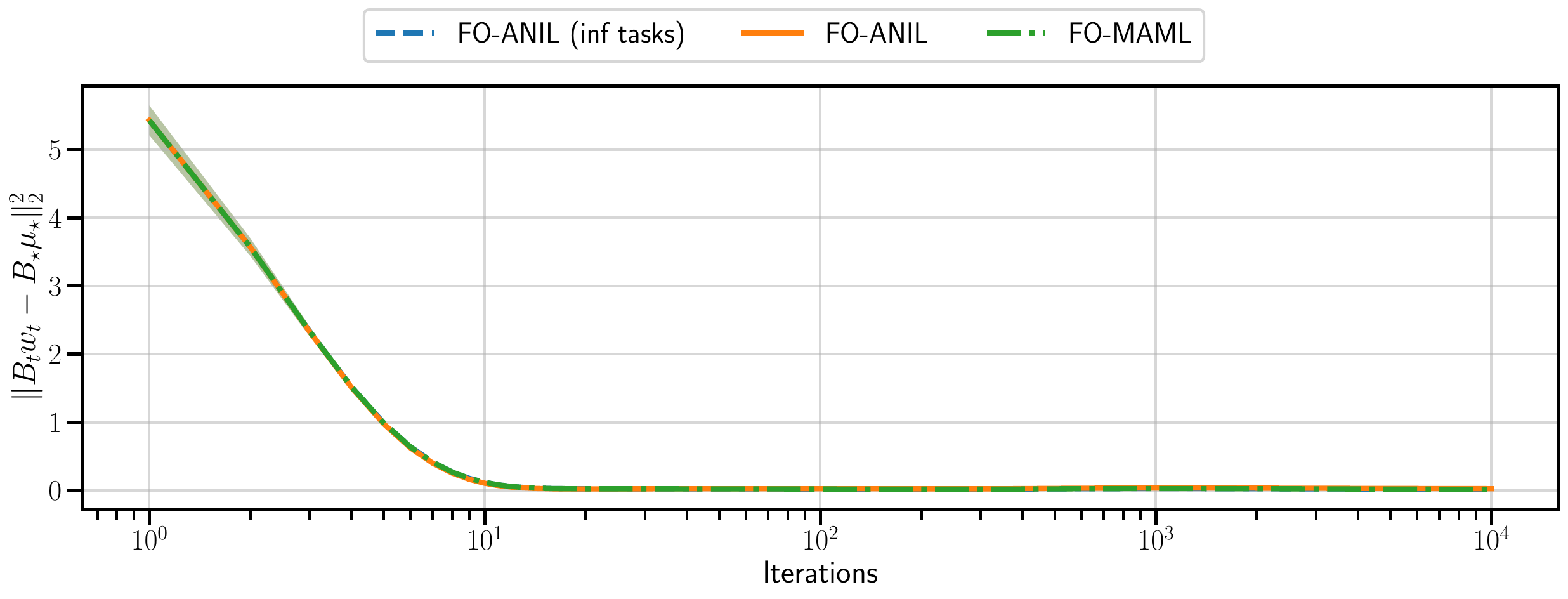}
  \caption{
    \label{fig:appBw}
    Evolution of $\|B_t w_t-B_{\star}\mu_{\star}\|_2^2$ during training. The shaded area represents the standard deviation observed over $3$ runs.
 }
\end{figure}

\cref{fig:appBw} on the other hand shows the evolution of $\|B_t w_t-B_{\star}\mu_{\star}\|$ while training. This value quickly decreases to $0$. This decay implies that model-agnostic methods learn not only the low-dimensional space on which the task parameters lie, but also their mean value. It then chooses this mean value as the initial point, and consequentially, the task adaptation happens quickly at test time. Overall, the experiments in this section suggest that model-agnostic methods still learn a good representation when facing more general task distributions.

\subsection{Number of gradient steps at test time}

This section studies what should be done at test time for the different methods. \cref{fig:numsteps} illustrates how the excess risk evolves when running gradient descent over the head parameters $w$, for the methods trained in \cref{sec:experiments}. For all results, gradient descent is run with step size $0.01$, which is actually smaller than the $\alpha$ used while training FO-ANIL. 

Keeping the step size equal to $\alpha$ leads to optimization complications when running gradient descent: the objective loss diverges, since the step size is chosen too large. This divergence is due to the fact that FO-ANIL chooses a large scale $B_t$ while training: this ensures a quick adaptation after a single gradient step but also leads to divergence of gradient descent after many steps.

The excess risk first decreases for all the methods while running gradient descent. However, after some critical threshold, it increases again for all methods except the Oracle. It is due to the fact that at some point in the task adaptation, the methods start overfitting the noise using components along the orthogonal complement of the ground-truth space. Even though the representation learned by FO-ANIL is nearly rank-deficient, it is still full rank.
As can be seen in the difference between FO-ANIL and Oracle, this tiny difference between rank-deficient and full rank actually leads to a huge performance gap when running gradient descent until convergence.

Additionally, \cref{fig:numsteps} nicely illustrates how early stopping plays some regularizing role here. Overall, this suggests it is far from obvious how the methods should adapt at test time, despite having learned a good representation.

\begin{figure}[htbp]
\centering
\includegraphics[width=0.8\textwidth]{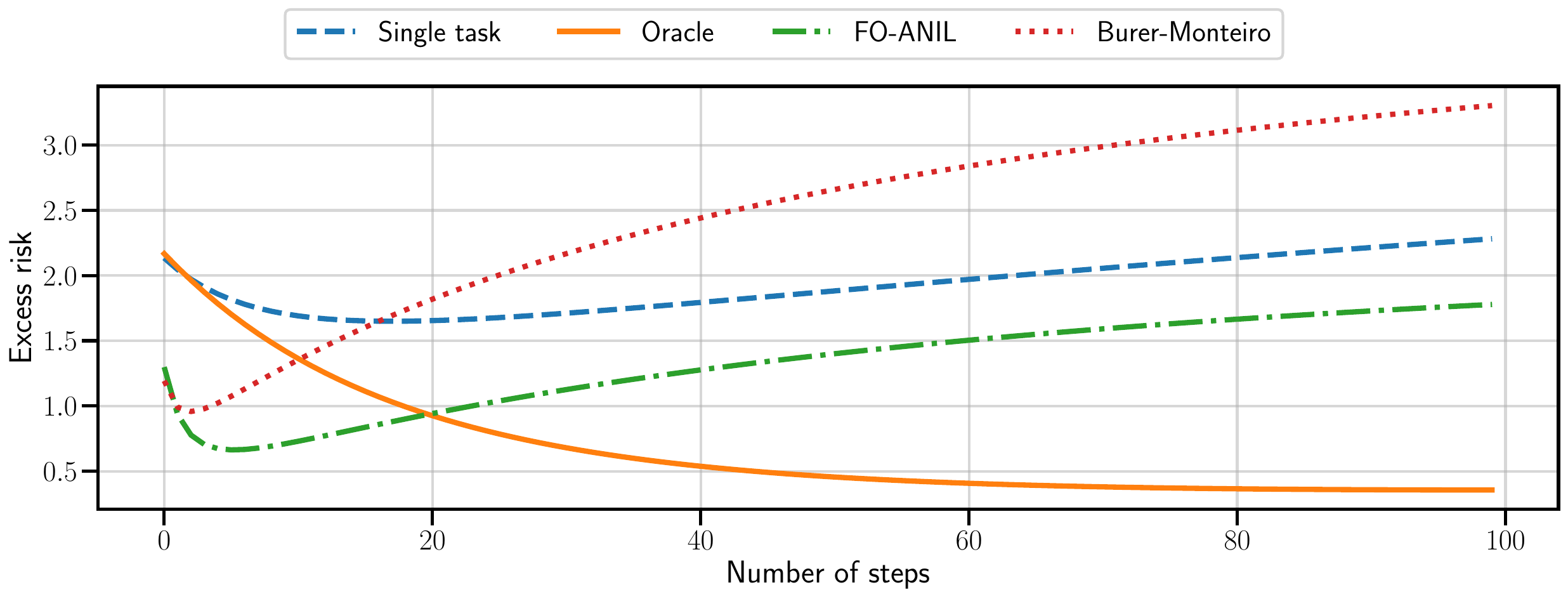}
  \caption{
    \label{fig:numsteps}
    Evolution of the excess risk (evaluated on $1000$ tasks with $\mtest=30$) with respect to the number of gradient descent steps processed, averaged over $10$ training runs.
 }
\end{figure}

\subsection{Impact of noise and number of samples in inner updates}

In this section, we run additional experiments to illustrate the impact of label noise and the number of samples on the decay of the orthogonal complement of the ground-truth subspace. The experimental setup is the same as~\cref{sec:experiments} for FO-ANIL with finite tasks, except for the changes in the number of samples per task and the variance of label noise.

\cref{fig:min} illustrates the decay of squared singular value of $B_{\star, \perp}^\top B_t$ during training. As predicted by~\cref{app:unlearning}, the unlearning is fastest when $\m=10$ and slowest when $\m=30$. \cref{fig:noise} plots the decay with respect to different noise levels. The rate derived for the infinite tasks model suggests that the decay is faster for larger noise. However, experimental evidence with a finite number of tasks is more nuanced. The decay is indeed fastest for $\sigma^2=4$ and slowest for $\sigma^2=0$ on average. However, the decay of the largest singular value slows down for $\sigma^2=4$ in a second time, while the decay still goes on with $\sigma^2=0$, and the largest singular value eventually becomes smaller than in the $\sigma^2=4$ case. This observation might indicate the intricate dynamics of FO-ANIL with finite tasks.

\begin{figure}[htbp]
\centering
\includegraphics[width=0.8\textwidth]{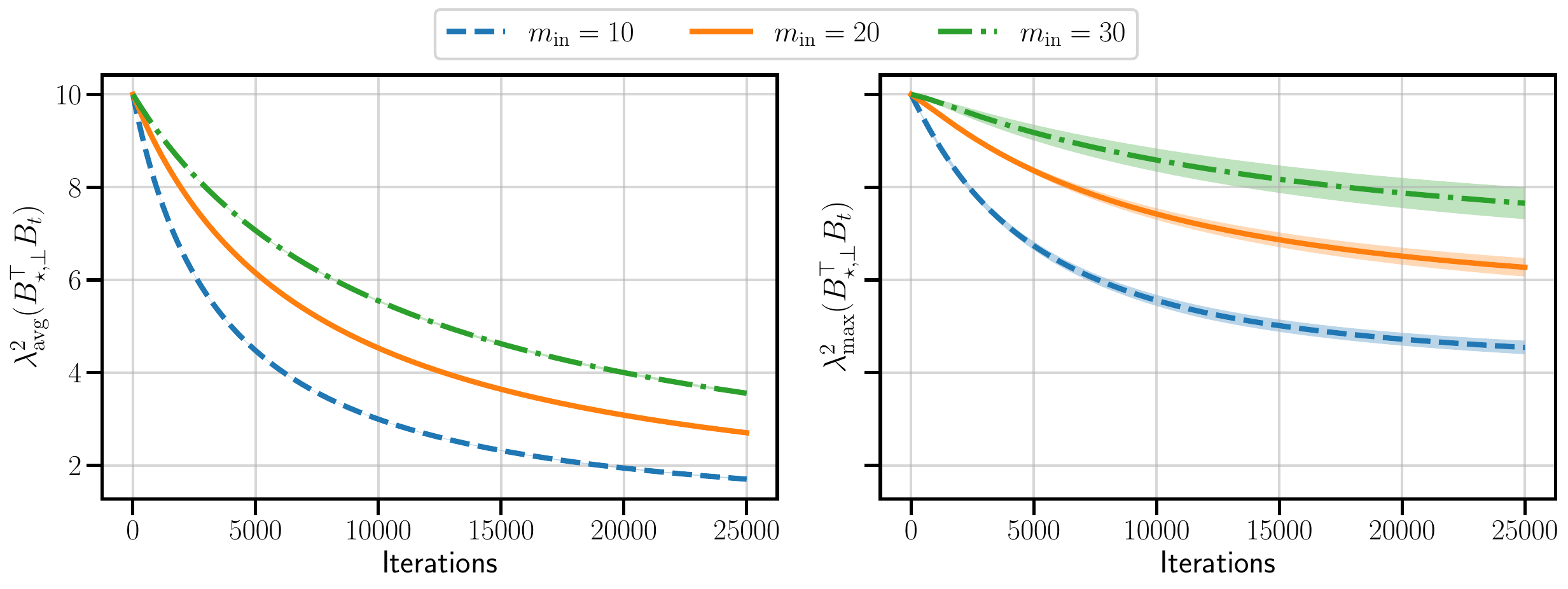}
  \caption{
    \label{fig:min}
    Evolution of average (\textit{left}) and largest (\textit{right}) squared singular value of $B_{\star,\perp}^\top B_t$ during training. The shaded area represents the standard deviation observed over $5$ runs.
 }
\end{figure}

\begin{figure}[htbp]
\centering
\includegraphics[width=0.8\textwidth]{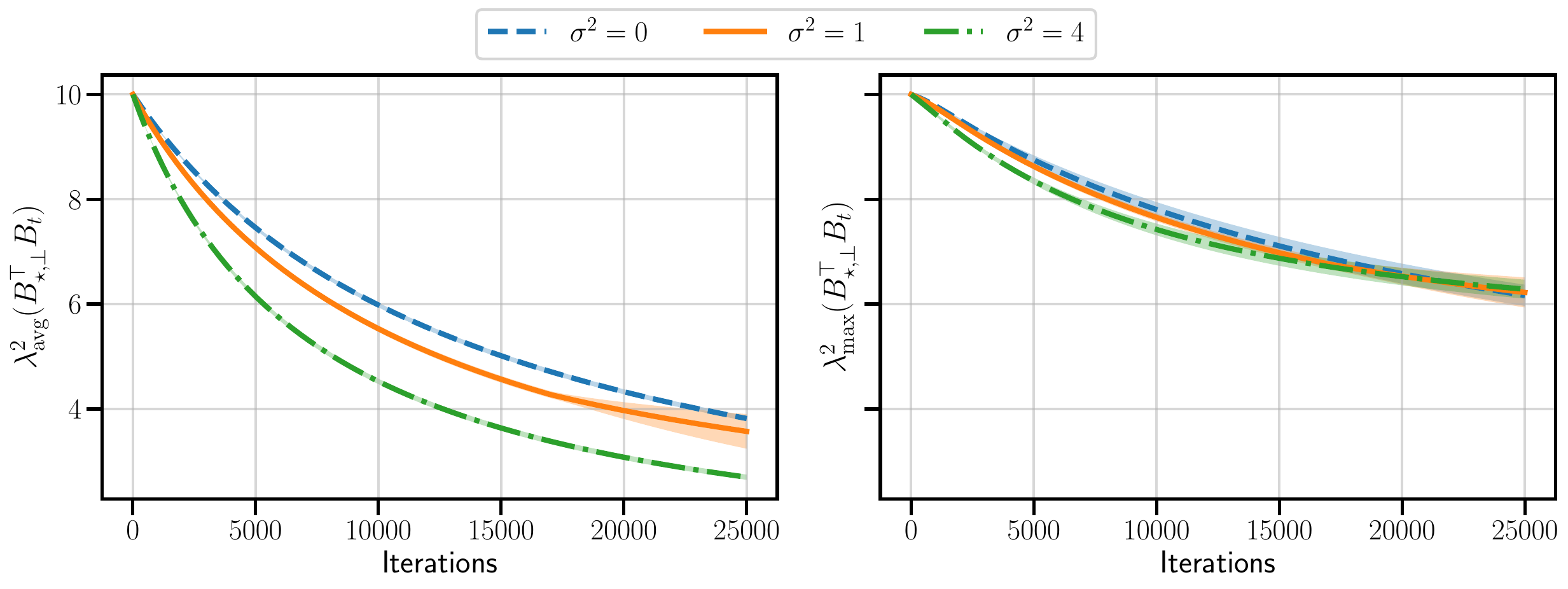}
  \caption{
    \label{fig:noise}
    Evolution of average (\textit{left}) and largest (\textit{right}) squared singular value of $B_{\star,\perp}^\top B_t$ during training. The shaded area represents the standard deviation observed over $5$ runs.
 }
\end{figure}

\subsection{Scaling laws in \cref{coro:testloss}}

In this subsection, we study the scaling laws predicted by the upper bound in \cref{coro:testloss}. We compute excess risk and estimation errors and compare them with the predictions from \cref{coro:testloss}. All errors are computed by sampling $1000$ test-time tasks with $1000$ test samples each.

In order to show that there is no dependency on $d$ after pretraining with FO-ANIL, we run experiments with varying $d=k'$ and $k$, in the same experimental setup as described in \cref{app:expdetails}. To mimic few-shot and high-sample regimes, we select $\mtest = 20$ and $\mtest = 1000$.
Our results are shown in \cref{fig:scaling_dimension}.
The excess risk does not scale with the ambient dimension $d$ but with the hidden low-rank dimension $k$.

\begin{figure}[htbp]
\centering
\includegraphics[width=0.8\textwidth]{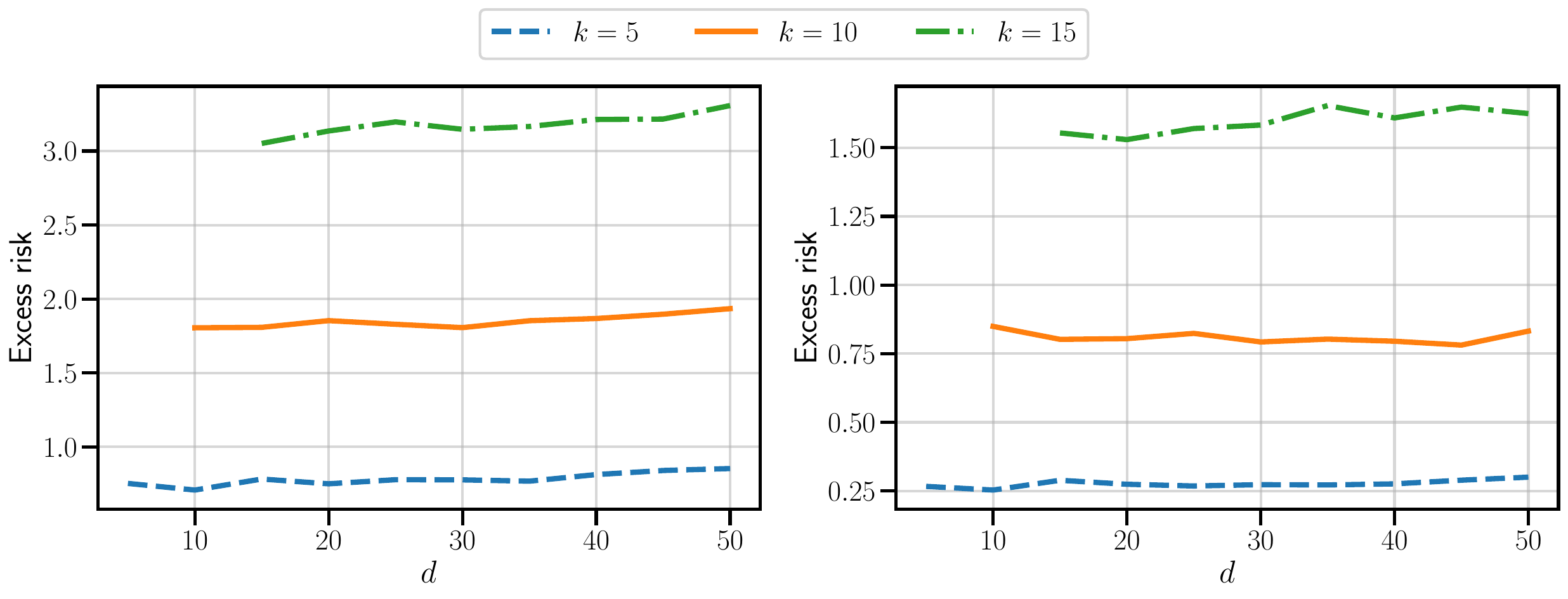}
\caption{
\label{fig:scaling_dimension}
    Excess risk for varying $d$ and $k$ for FO-ANIL with test samples $\mtest=10$ (\textit{left}) and $\mtest=1000$ (\textit{right}).
}
\end{figure}

Next, we run a series of experiments to evaluate scaling laws predicted by~\cref{prop:global}. Similarly, we follow the experimental setting detailed in \cref{app:expdetails} with an identity $\Sigma_\star$. In order to provide a clean comparison, $\Sigma_\star$  is scaled in test time such that $\E[\|w_\star\|] = 1$ for all $k$ values. This allows us to isolate the impact of $k, \m$ and $\mtest$ on the generalization error after adaptation.
We also set $d=k'=25$ and $N=25000$ for the rest of this subsection.

\begin{figure}[htbp]
\centering
\includegraphics[width=0.8\textwidth]{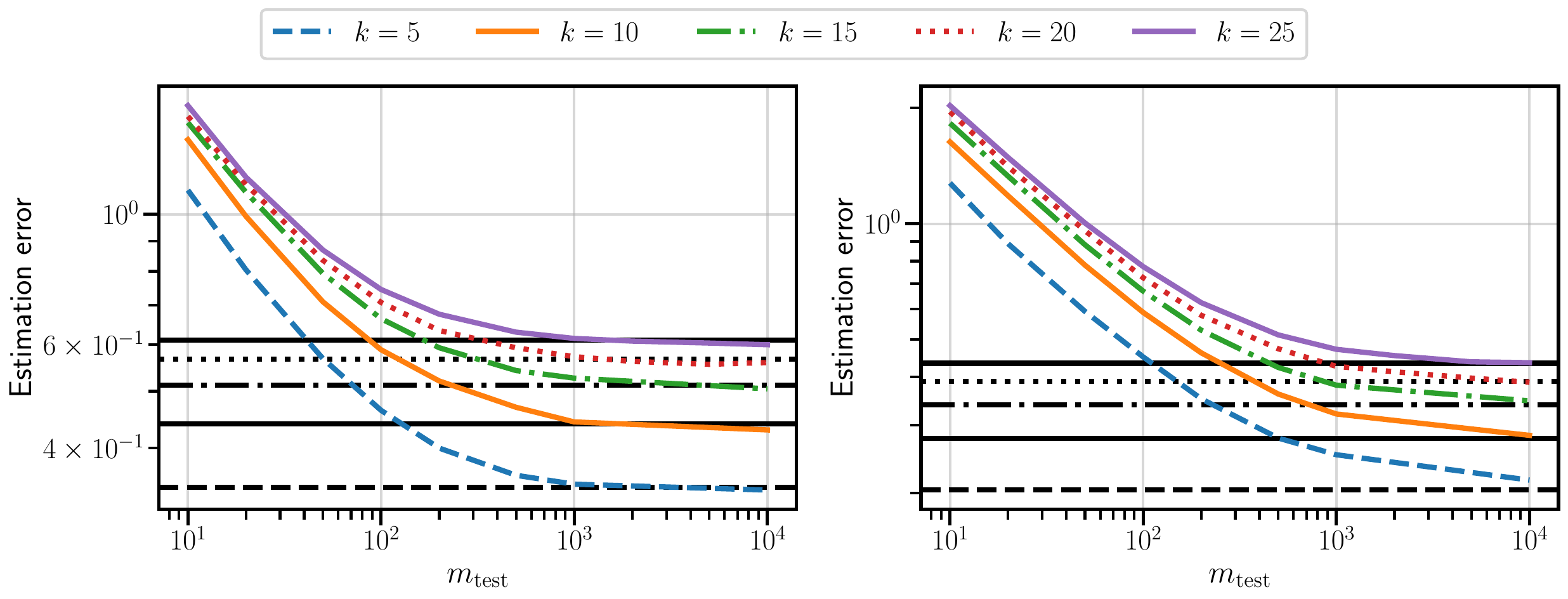}
\caption{
\label{fig:first_term}
    Estimation error of learned FO-ANIL representations for $k=5, 10, 15$ and $\m=20$ (\textit{left}) and $\m=40$ (\textit{right}) with $\sigma^2 = 2$. Horizontal lines are bounds used in \cref{coro:testloss} for the first term that scales independently from $\mtest$, \ie, bounds for $\mtest=\infty$.
}
\end{figure}

\cref{fig:first_term} shows the scaling of the loss with respect to $k$ and different choices of $\mtest$ for $\m = 20$ and $\m = 40$, together with predictions made from \cref{coro:testloss}. Black horizontal lines are bounds for the first term that does not scale with $\mtest$, \ie, bound to the generalization error when $\mtest = \infty$. We observe that the bound used in \cref{coro:testloss} is tight. The dependency on $k$ is through the term ${\bar{\sigma^2}}/{\sigma_{\min}(\Sigma_\star)}$ which is equal to $\tr(\Sigma) / \sigma_{\min}(\Sigma_\star) = k$ when $\sigma^2=0$.

In order to evaluate the other two terms in \cref{coro:testloss}, we subtract the dashed black lines from $\|B w_{\mathrm{test}} - B_\star w_\star\|_2$ and plot it with respect to $\sqrt{k/{\mtest}}$. \cref{fig:other_term} shows that this excess estimation error is linear in $\sqrt{k/{\mtest}}$. Black horizontal lines are $\left(1 + \sigma^2\right) \sqrt{k/\mtest}$ that serves as upper bound to the two last terms in \cref{coro:testloss}. Overall, our results indicate the scaling given by \cref{coro:testloss} is tight.

\begin{figure}[htbp!]
\centering
\includegraphics[width=0.8\textwidth]{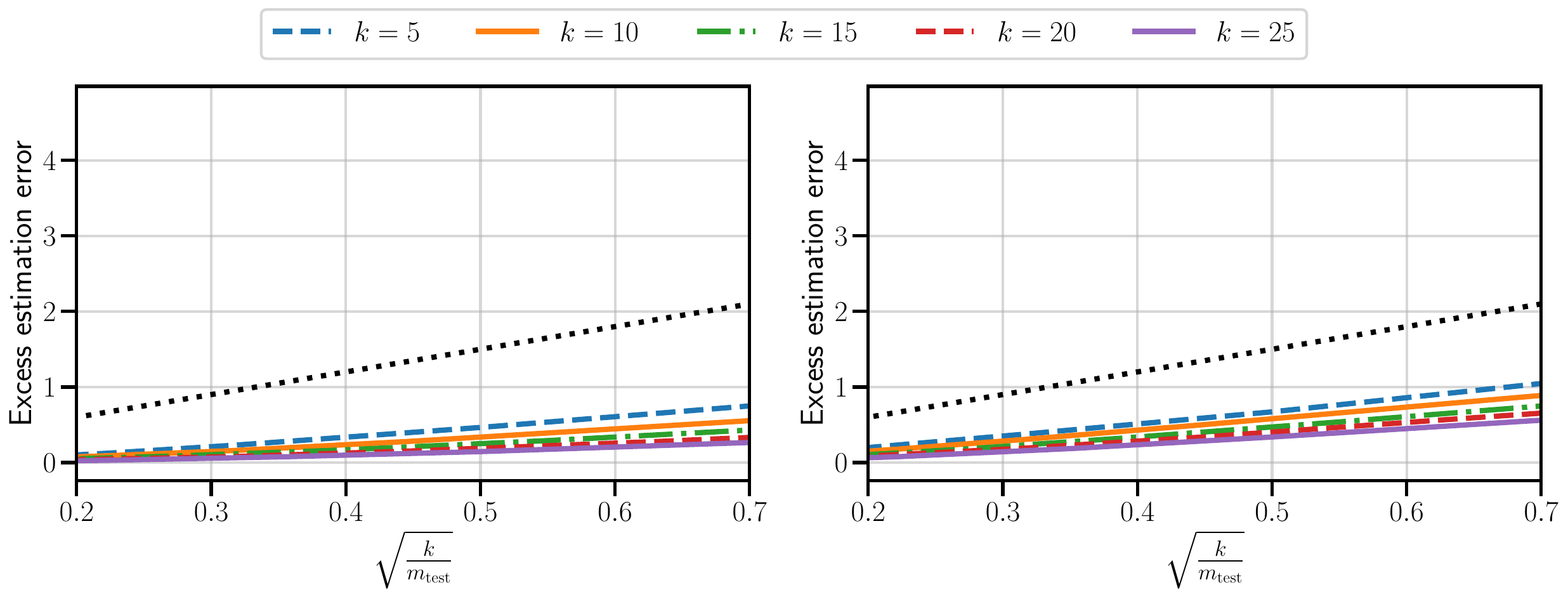}
\caption{
\label{fig:other_term}
    Excess estimation error of learned FO-ANIL representations for $k=5, 10, 15, 20, 25$ and $\m=20$ (\textit{left}) and $\m=40$ (\textit{right}) with $\sigma^2 = 2$. Horizontal lines are bounds used in \cref{coro:testloss} for the last two terms that scale with $\mtest$.
}
\end{figure}

\subsection{Impact of nonlinearity and multiple layers}

We train two-layer and three-layer ReLU networks and study the scaling of test loss with the dimension $d$ and hidden dimension $k$. All the hidden layers have  $d$ units. The experimental setting is the same as in \cref{app:expdetails} except that tasks are processed in batches of size $100$ out of a pool of $25000$ for a faster training. In the case of two-layer ReLU networks, we set $\alpha = \beta = 0.025$, while for three-layer ReLU networks, we adjust the values to $\alpha = \beta = 0.01$.

\begin{figure}[htbp]
\centering
\includegraphics[width=0.8\textwidth]{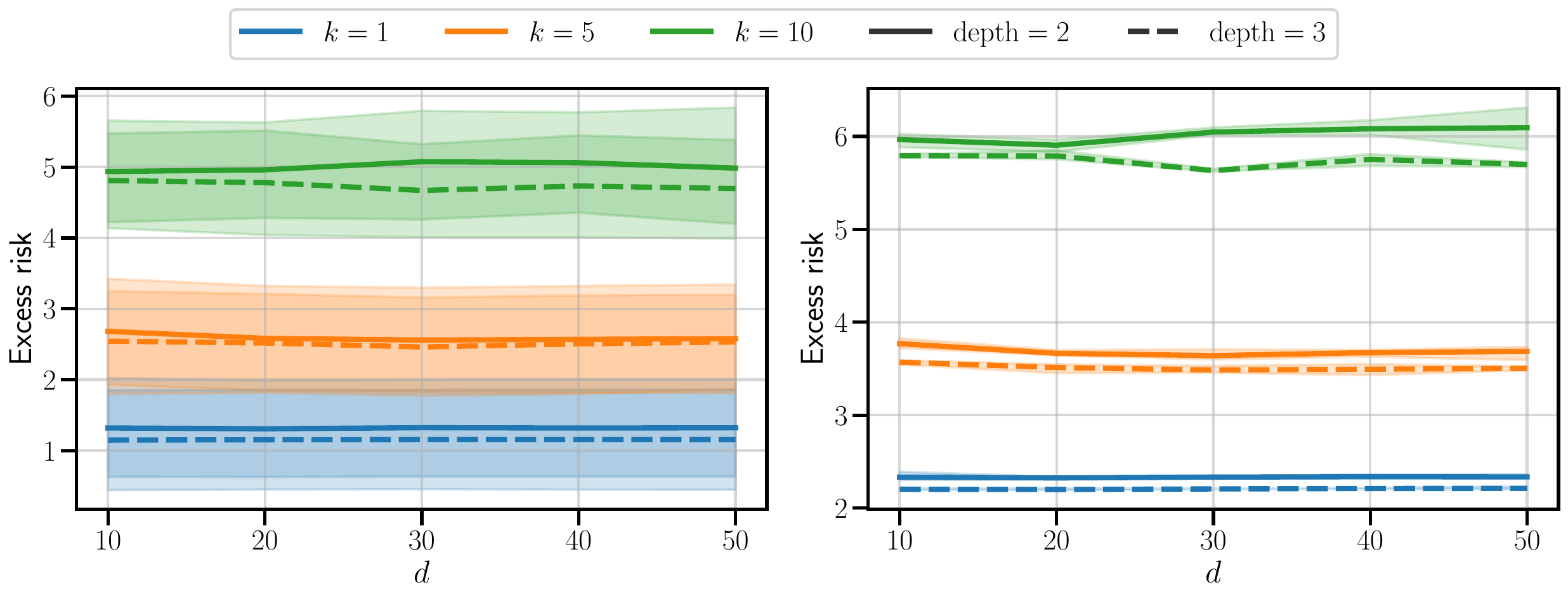}
\caption{
\label{fig:nonlinearity}
    Excess risk for two-layer and three-layer  ReLU networks pretrained with FO-ANIL and varying $d, k$ and $\sigma^2 = 0$ (\textit{left}) and $\sigma^2 = 4$ (\textit{right}). The shaded area represents the standard deviation observed over 3 runs.
}
\end{figure}

\cref{fig:nonlinearity} shows that the excess risk does not scale with the ambient data dimensionality $d$ but with the hidden problem dimension $k$. This is evidence that suggests the adaptability of model-agnostic meta-learning pretraining extends to more general networks.

Next, we study representation learning in ReLU networks. Let $f\left(\cdot\right): \R^d \to \R^d$ represent the network function that sends data to intermediate representations before the last layer. Then, we compute the best linear approximation of $f$ as follows: Let $X \in \R^{N \times d}$ be a matrix with each row is sampled from the $d$-dimensional isotropic Gaussian distribution. Solve the following minimization problem over all $B \in \R^{d \times d}$:
\begin{equation*}
    \argmin_{B \in \R^{d \times d}} \|X B - f(X)\|_2^2,
\end{equation*}
where $f(X) \in \R^{N}$ is the output of the network applied to each row separately. Applying this to each time step $t$ with $N=1000$, we obtain $B_t$ that approximates the ReLU network throughout its trajectory. Finally, we repeat the experiments on singular values to check learning in the good feature space and unlearning in the complement space with the sequence $B_t$.

\begin{figure}[htbp]
\centering
\includegraphics[width=0.8\textwidth]{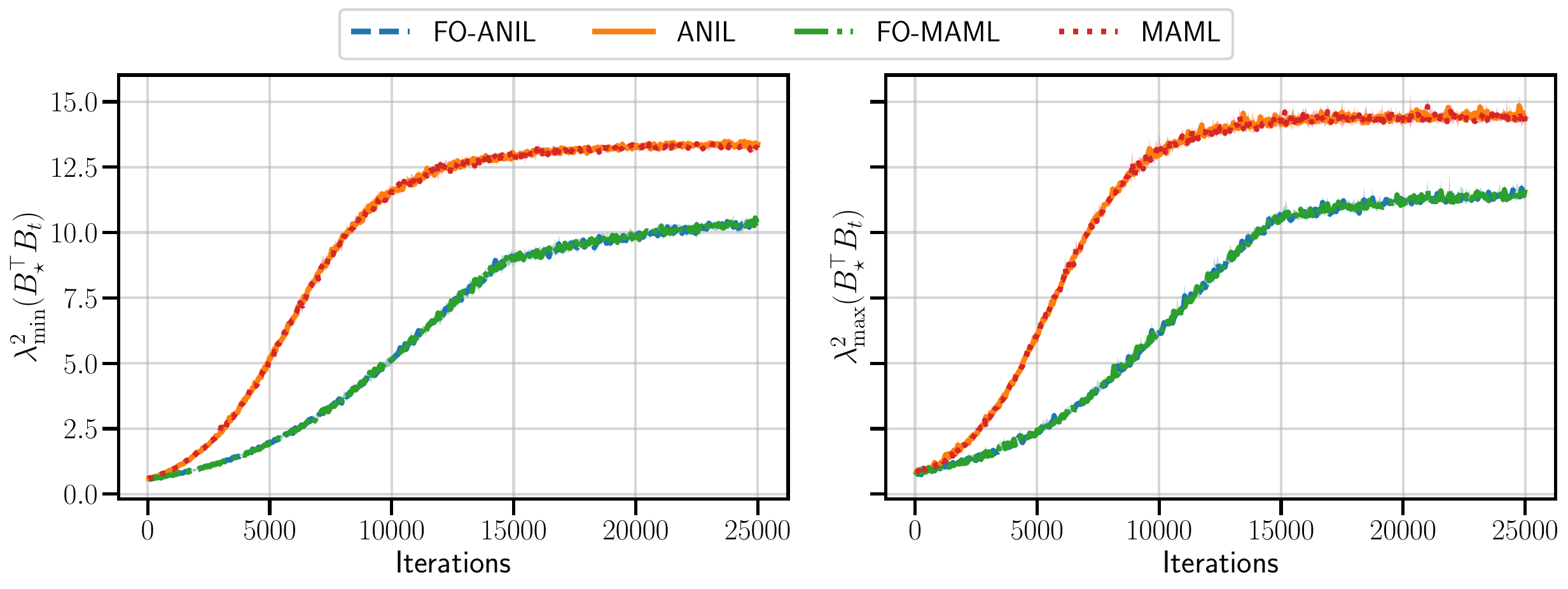}
  \caption{
    \label{fig:nonlinear_B_star_two_layer}
    Evolution of average (\textit{left}) and largest (\textit{right}) squared singular value of $B_{\star}^\top B_t$ during FO-ANIL pretraining with two-layer ReLU networks. The shaded area represents the standard deviation observed over $3$ runs.
 }
\end{figure}

\begin{figure}[htbp]
\centering
\includegraphics[width=0.8\textwidth]{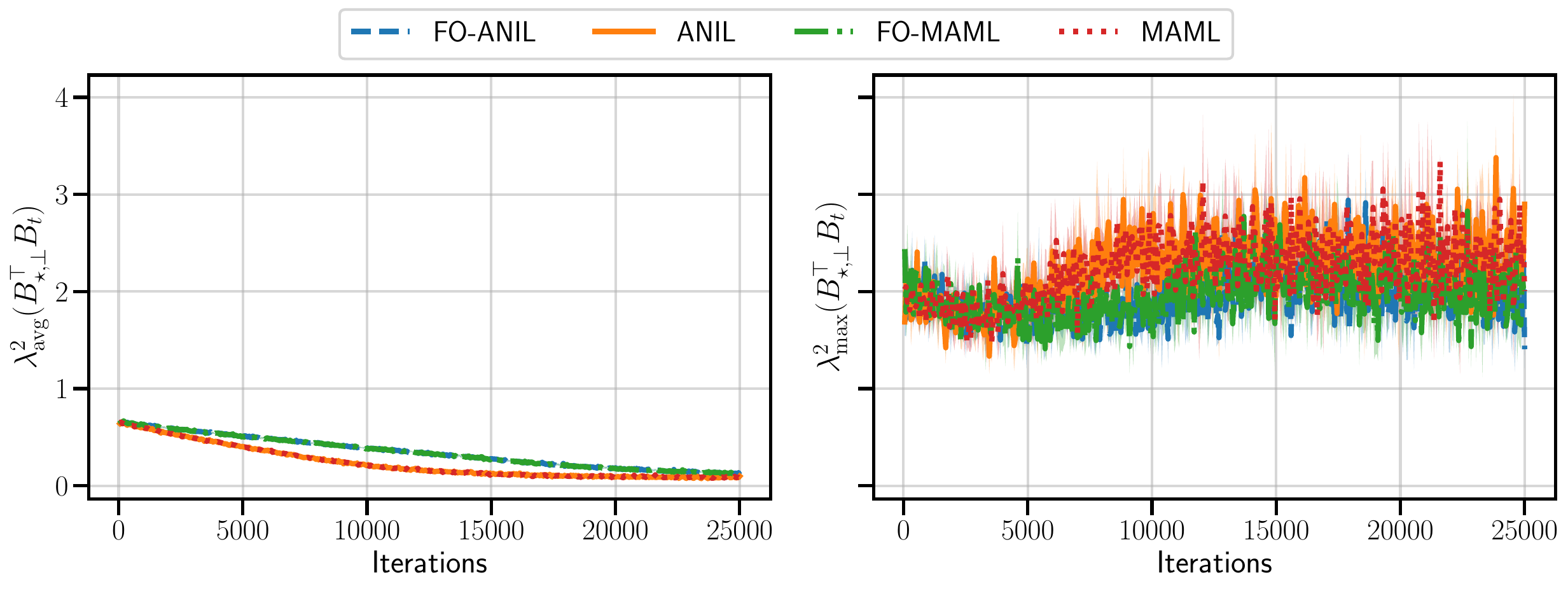}
  \caption{
    \label{fig:nonlinear_B_perp_two_layer}
    Evolution of average (\textit{left}) and largest (\textit{right}) squared singular value of $B_{\star,\perp}^\top B_t$ during FO-ANIL pretraining with two-layer ReLU networks. The shaded area represents the standard deviation observed over $3$ runs.
 }
\end{figure}

The feature learning behavior in two-layer and three-layer ReLU networks, as illustrated in \cref{fig:nonlinear_B_star_two_layer,fig:nonlinear_B_star_three_layer}, closely mirrors that of the linear case. Notably, both ANIL, MAML, and their first-order counterparts exhibit increasing singular values in the good feature space. We observe a swifter learning with second-order methods. Moreover, there is a difference in the scale of singular values between first-order and second-order approaches in two-layer networks. In the context of three-layer networks, ANIL and MAML exhibit distinct scales.

In \cref{fig:nonlinear_B_perp_two_layer,fig:nonlinear_B_perp_three_layer}, the dynamics in the complement feature space for two-layer and three-layer ReLU networks are depicted. While the average singular value exhibits a decaying trend, contrary to our experiments with two-layer linear networks, the maximal singular value does not show a similar decay. Note, however, that the scale of the initialization is smaller than in the linear case and singular values in all complement directions remain small when compared to good feature directions in the pretraining phase. This smaller initialization is due to nonlinearities in ReLU networks; the linear approximation yield smaller singular values than the weight matrices of the ReLU network which is initialized at the same scale as experiments with linear networks (for three-layer networks, their product is of the same scale). As elaborated further in \cref{app:discussion}, this behavior in the complement space is also observable in two-layer linear networks under small initializations and is influenced by the finite number of tasks as opposed to the infinite tasks considered in our \cref{thm:main}.

\begin{figure}[htbp]
\centering
\includegraphics[width=0.8\textwidth]{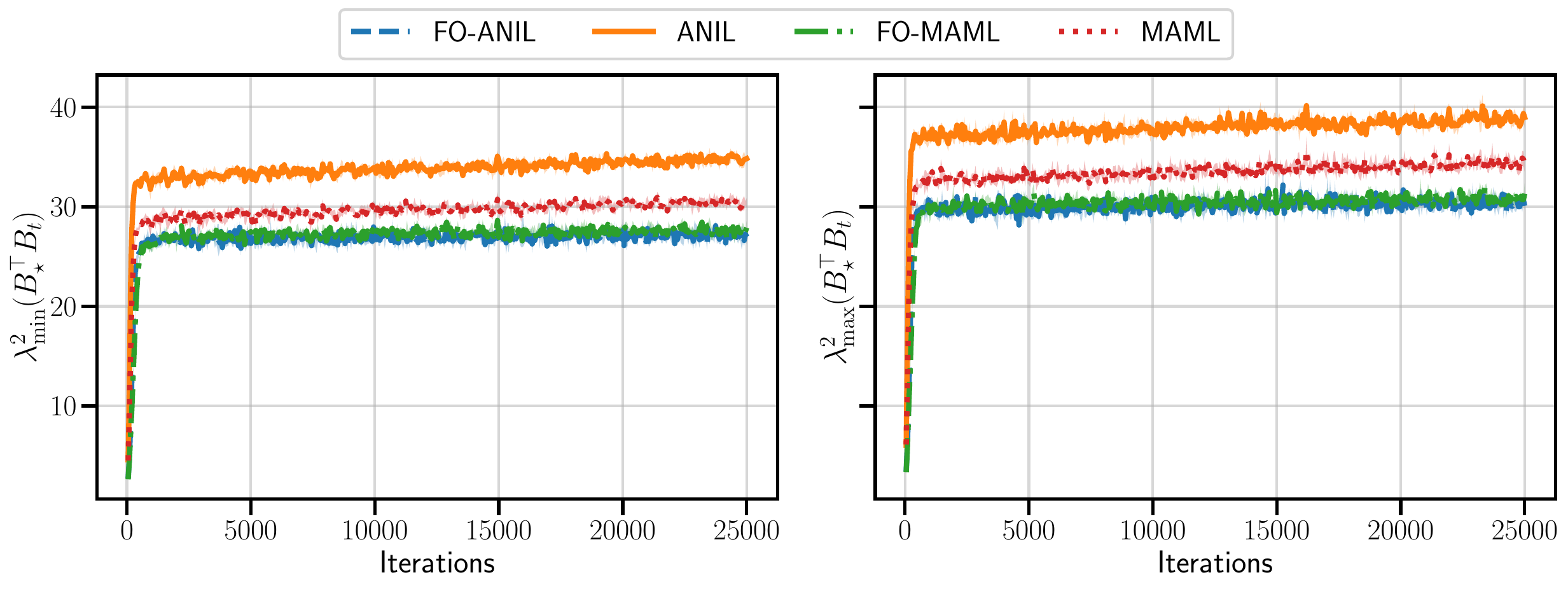}
  \caption{
    \label{fig:nonlinear_B_star_three_layer}
    Evolution of average (\textit{left}) and largest (\textit{right}) squared singular value of $B_{\star}^\top B_t$ during FO-ANIL pretraining with three-layer ReLU networks. The shaded area represents the standard deviation observed over $3$ runs.
 }
\end{figure}

\begin{figure}[htbp!]
\centering
\includegraphics[width=0.8\textwidth]{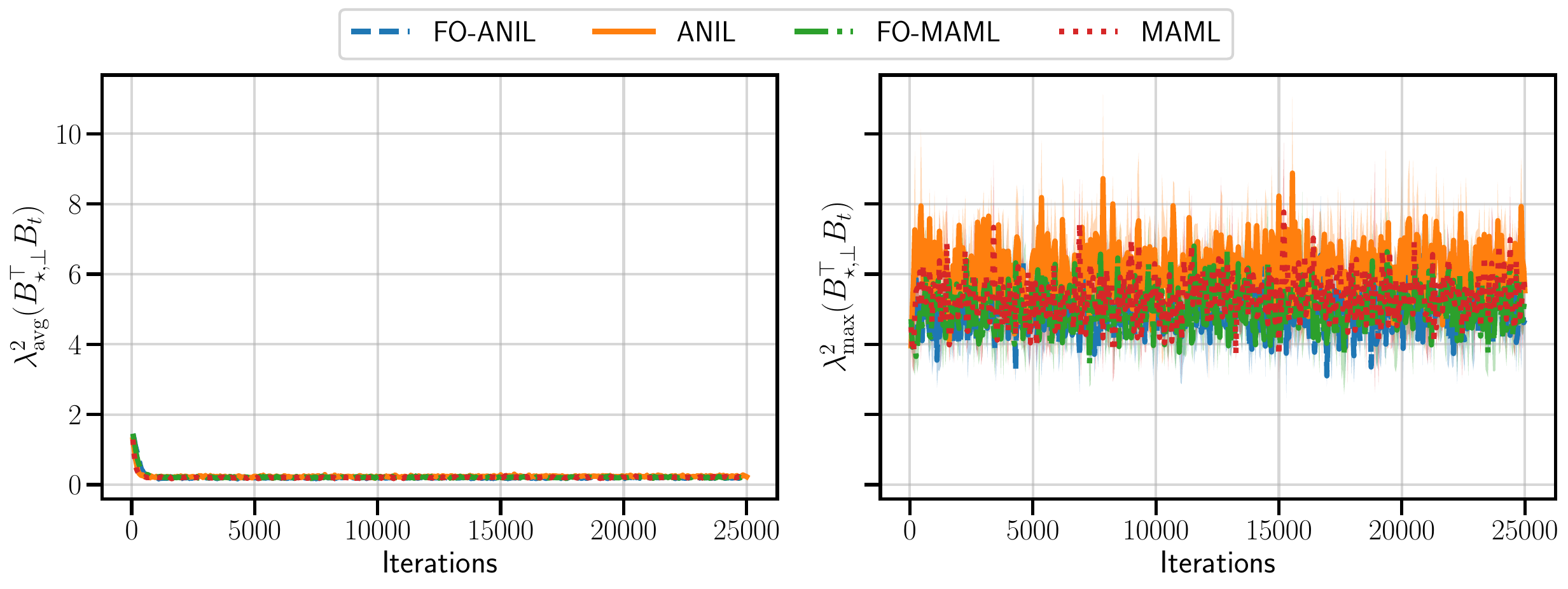}
  \caption{
    \label{fig:nonlinear_B_perp_three_layer}
    Evolution of average (\textit{left}) and largest (\textit{right}) squared singular value of $B_{\star,\perp}^\top B_t$ during FO-ANIL pretraining with three-layer ReLU networks. The shaded area represents the standard deviation observed over $3$ runs.
 }
\end{figure}

Overall, experiments with two-layer and three-layer ReLU networks show that they learn the $k$-dimensional shared structure with a higher magnitude than the rest of the complement directions. This implies the learning of a shared task structure and good generalization under adaptation with few samples, and might indicate that \cref{thm:main} and \cref{coro:testloss} could be extended to nonlinear networks. The regularization effect of model-agnostic meta-learning on complement directions could be better seen with initializations that result in a linear map with high singular values in every direction. We leave detailed exploration of the unlearning process to future work.

\null
\vfill

\end{document}